\newtheorem{theorem}{Theorem}
\newtheorem{definition}{Definition}
\newtheorem{lemma}{Lemma}
\newtheorem{remark}{Remark}
\newtheorem{proposition}{Proposition}
\newtheorem{assumption}{Assumption}
\newtheorem*{theorem*}{Theorem}
\newtheorem*{example*}{Example} 
\newtheorem*{definition*}{Definition}
\newtheorem*{lemma*}{Lemma}
\newtheorem*{remark*}{Remark}
\newtheorem*{corollary*}{Corollary}
\newtheorem*{proposition*}{Proposition}
\newtheorem*{assumption*}{Assumption}
\newtheorem*{claim*}{Claim}
\newtheoremstyle{TheoremNum}
        {\topsep}{\topsep}              %%% space between body and thm
        {\itshape}                      %%% Thm body font
        {}                              %%% Indent amount (empty = no indent)
        {\bfseries}                     %%% Thm head font
        {.}                             %%% Punctuation after thm head
        { }                             %%% Space after thm head
        {\thmname{#1}\thmnote{ \bfseries #3}}%%% Thm head spec
\theoremstyle{TheoremNum}
\newtheoremstyle{LemmaNum}
        {\topsep}{\topsep}              %%% space between body and thm
        {\itshape}                      %%% Thm body font
        {}                              %%% Indent amount (empty = no indent)
        {\bfseries}                     %%% Thm head font
        {.}                             %%% Punctuation after thm head
        { }                             %%% Space after thm head
        {\thmname{#1}\thmnote{ \bfseries #3}}%%% Thm head spec
\theoremstyle{LemmaNum}
\DeclareMathAlphabet{\mathpzc}{OT1}{pzc}{m}{it}
\newcommand{\Ot}{ \widetilde{\mathcal{O}} }
\renewcommand{\L}{ \mathcal{L} }
\newcommand{\Q}{ \mathbb{Q} }
\newcommand{\clip}{ \text{clip} }
\newcommand{\wt}[1]{\widetilde{#1}}
\newcommand{\wh}[1]{\widehat{#1}}
\newcommand{\E}{ \mathbb{E} }
\renewcommand{\Pr}{ \mathbb{P} }
\newcommand{\J}{ \mathfrak{J} }
\renewcommand{\[}{ \left[ }
\renewcommand{\]}{ \right] }
\renewcommand{\(}{ \left( }
\renewcommand{\)}{ \right) }
\newcommand{\I}{ \mathbb{I} }
\renewcommand{\P}{ \mathcal{P} }
\renewcommand{\O}{ \mathcal{O} }
\newcommand{\advB}{ \frac{ \log \frac{(1-\rho) \epsilon }{KT} }{ \log \rho } }
\newcommand{\Reg}{ \mathrm{Reg} } 
\newcommand{\Regadv}{ \mathrm{Reg}^{\mathrm{adv}} } 
\newcommand{\xit}{  \max \left\{ 1 + \frac{\rho}{(1 - \rho)^2 } , \frac{ \log (1 - \rho)  }{ \log \rho } + \frac{ 5 \log  t  }{ \log 1 / \rho } \right\} }
\newcommand{ \wtC }[1]{ \sqrt{ \frac{  8 \xi_t \log t }{ {#1} } }   }
\begin{document} 

\title{Towards Fundamental Limits of Multi-armed Bandits with Random Walk Feedback} 

\author{Tianyu Wang \qquad Lin F. Yang \qquad Zizhuo Wang}  

\date{} 

\maketitle

\begin{abstract} 
    In this paper, we consider a new Multi-Armed Bandit (MAB) problem where arms are nodes in an unknown and possibly changing graph, and the agent (i) initiates random walks over the graph by pulling arms, (ii) observes the random walk trajectories, and (iii) receives rewards equal to the lengths of the walks. We provide a comprehensive understanding of this problem by studying both the stochastic and the adversarial setting. We show that this problem is not easier than a standard MAB in an information theoretical sense, although additional information is available through random walk trajectories. Behaviors of bandit algorithms on this problem are also studied. 
% In the adversarial setting, we show that an extension of the exponential weight algorithm can achieve a regret bound of order $\widetilde{\mathcal{O}} \left( \sqrt{ \kappa T}\right) $, where $\kappa$ is a constant that depends on the structure of the graph, instead of the number of arms.  
\end{abstract} 

%\section{Motivation}

%Consider a marketing campaign (e.g., word-of-mouth) pro

\section{Introduction} 

Multi-Armed Bandit (MAB) problems simultaneously call for exploitation of good options and exploration of the decision space. 
Algorithms for this problem find applications in various domains, from medical trials \citep{robbins1952some} to online advertisement \citep{li2010contextual}. Many authors have studied bandit problems from different perspectives. 

In this paper, we study a new bandit learning problem where the feedback is depicted by a random walk over the arms. That is, each time an arm/node $i$ is played, one observes a random walk over the arms/nodes from $i$ to an absorbing node, and the reward/loss is the length of this random walk. 
In this learning setting, we want to carefully select which nodes to initialize the random walks, so that the hitting time to the absorbing node is maximized/minimized. 
This learning protocol captures important problems in computational social networks. In particular, our learning protocol encapsulates an online learning version of the influence maximization problem \citep{kempe2003maximizing}. See e.g., \citep{arora2017debunking} for a survey on influence maximization. The goal of influence maximization is to find a node so that a diffusion starting from that node can propagate through the network and can influence as many nodes as possible. Our problem provides an online learning formulation of the influence maximization problem, where the diffusion over the graph is modeled as a random walk over the graph. As a concrete example, the word-of-mouth rating of a movie on social networks can be captured by the influence maximization model \citep{arora2017debunking}. 
% Our algorithm is also compatible with distance minimization in graphs. If one wants to find a node closest to the absorbing node in terms of hitting time, our online learning protocol also provides an online learning formulation of this problem.  
% We therefore ask the following question: 

% \emph{In a graph with an absorbing node, if we can select the initial node to seed a random walk and observe the random walk trajectory, how should we select the initial nodes, so that the random walks are as long-lasting as possible?} \textbf{(P)}  

% \textcolor{red}{remove (P)! remove (P)! }

% We study this problem from an online learning perspective. To be more precise, 
More formally, we consider the following model. 
The environment is modeled by a graph $G = \(V,E \)$, where $V$ consists of transient nodes $[K] := \{1, 2, \dots, K\}$ and an absorbing node $*$. Each edge $ij$ ($i \in [K], j \in [K] \cup \{ * \}$) can encode two quantities, a transition probability from $i$ to $j$ and the distance from $i$ to $j$.
% \lin{Why not use the usual graph notation: $G=(V, E)$, where $V=...$, $E=...$}
For $t = 1,2, \dots, T$, we pick a node to start a random walk, and observe the random walk trajectory from the selected node to the absorbing node. 
% On this graph $G$, we use random walks to model the diffusion process. 
For each random walk, we use its hitting time (to the absorbing node $*$) to model how long-lasting it is. With this formulation, we can define a bandit learning problem for the question. Each time, the agent picks a node in $G$ to start a random walk, observes the trajectory, and receives the hitting time of the random walk as reward. In this setting, the performance of learning algorithms is typically measured by \textit{regret}, which is the difference between the rewards of an optimal node and the rewards of the nodes played by the algorithm. 
Unlike standard multi-armed bandit problems, the feedback is random walk trajectories and thus reveals information not only about the node played, but the environment (transitions/distances among nodes) as well. 
% This new feedback structure calls for new insights on learning with graph random walk feedback. 

% We start with a stochastic version, where the graph $G$ is fixed and unknown. We show that, although random walk trajectories carry much more information than reward samples, the additional information does not simplify the problem in an information theoretical sense. 

% Then we study an adversarial version, where the edge lengths in graph $G$ change adversarially over time. This setting takes care of changing environments, which can model the potential change of users' preference. 
% We develop a novel variant of the  exponential weight algorithm \citep{littlestone1994weighted,auer2002nonstochastic} for the adversarial formulation, and provide a new concentration bound in Lemma \ref{lem:bound}. 
% In such problems, the adversary does not directly choose reward. Instead, they change rewards by specifying the underlying distribution from which the hitting times are sampled. 
% A high probability regret of order $ \wt{\mathcal{O}} \( \sqrt{\kappa T} \) $ is proven, where $\kappa $ depends on the graph structure, instead of number of options. 

Interestingly, the extra information from the random walk does not trivialize the learning in a mini-max sense. Intuitively, the sample/event space describes how much information the feedback can carry. If we execute a policy $\pi$ for $T$ epochs on a problem instance, 
the sample space is then $ \( \cup_{h=1}^\infty \mathcal{B}^h  \)^T $, where $ \mathcal{B} $ is the space of all outcomes that a single step on a trajectory can generate. 
For example, if all edge length can be sampled from $[0,1]$, then $ \mathcal{B} = [0,1] \times [K]$, since a single step on a trajectory might be any node, and the corresponding edge length may be any number from $[0,1]$. In this case the sample space of a simple epoch is $\cup_{h=1}^\infty \mathcal{B}^h $, where the union up to infinity captures the fact that the trajectory can be arbitrarily long. Indeed, this set $ \( \cup_{h=1}^\infty \( [0,1] \times [K] \)^h  \)^T  $ contains much richer information than that of standard MAB problems, which is $ [0,1]^T $ for $T$ rounds of pulls. This richer sample space means that: {each feedback carries much more information and thus our problem can be strictly easier than a standard MAB.} 
However, the information theoretical lower bounds for our problem are of order $\wt{\Omega} \( \sqrt{T} \)$. In other words, we prove that even though each trajectory carries much more information than a reward sample (and has a chance of revealing all information about the environment when the trajectory is long), no algorithm can beat the bound $\wt{\Omega} \( \sqrt{T} \)$ in a mini-max sense. 
% Thus one of our lower bounds
% The difficulty in this sample space is: the trajectory can be arbitrarily long. To handle this issue, we make use of the Markov property: conditioning on the $ k $-th node being $j$, the subsequent sample space generated is identical to the sample space conditioning on the first node being $ j $. 
% By using this observation, one can ``disentangle'' the randomness induced by large space of all possible random walks. 
% this sample space 
% Also, using this observation, one may be able to derive other forms of lower bounds under this random walk feedback structure. 

% In terms of applications, many other scenarios also fit in our model. For example, our model can also describe the browsing over items (e.g., videos, news articles, commodities) in mobile apps. In this case, items are modeled as nodes in a graph, in which t,apping a node leads to a transition to another node, and closing the mobile app means hitting an absorbing node. We may also want to prolong the browsing activity in this case. Many other recommendation system applications follow a similar structure.

% \textcolor{red}{pick up from here. rewrite the major contribution part and edit accordingly. sell item 3. not just item 2. }  

In summary, the contributions of our paper is as follows. 
% our paper is centered around the following three items.  
\begin{enumerate} 
    \item We propose a new online learning problem that is compatible with important computational social network problem including influence maximization, as already discussed in the introduction. 
    % 2. We observe that in the stochastic setting, this problem can simultaneously be much easier than a standard MAB and as hard as a standard MAB. In fact, a single problem instance can simultaneously display these two properties. 
    % This phenomenon is previously ucontributionnnoticed, and suggests intriguing 
    \item We prove lower bounds for this problem in a random walk trajectories sample space. Our results show that, although random walk trajectories carry much more information than reward samples, the additional information does not simplify the problem. More specifically, no algorithm can beat an $ \widetilde{\Omega} \( \sqrt{T} \) $ lower bound in a mini-max sense. These information theoretical findings of the newly proposed problem are discussed in Section \ref{sec:info}. 
    % Further discussions are in Section \ref{sec:discussion}. 
    \item We propose algorithms for the bandit problems with random walk feedback, and show that the performance of our algorithms improves over that of the standard MAB algorithms. 
    % These results are covered in Section \ref{sec:algs}. 
    % This bound improves previous counterparts of our adversarial model. 
    % Our concentration analysis for adversarial bandits with random rewards, i.e., the adversary picks rewards via specifying reward distribution adversarially. 
    % ,In particular, our new concentration result 
\end{enumerate}
% we study a novel bandit learning problem motivated by user's browsing behavior. 
% We provide understandings of the fundamental statistical limits of this problem 
% by prolonging diffusion processes in absorptive media.
% by developing
% algorithms for both the  stochastic and adversarial settings, and  establishing matching lower bounds. 
% Notably, the problem (\textbf{P}) we ask is generic and we hope our study could inspire further investigations in this broad problem.
% \vspace*{-6pt}
\subsection{Related Works} 
% \vspace*{-4pt}
% \textcolor{red}{need major revision}

Bandit problems date its history back to at least \citet{thompson1933likelihood}, and have been studied extensively in the literature. One of the the most popular approaches to the stochastic bandit problem is the Upper Confidence Bound (UCB) algorithms \citep{robbins1952some, lai1985asymptotically, auer2002using}. Various extensions of UCB algorithms have been studied \citep{srinivas2009gaussian, abbasi2011improved, agrawal2012analysis, bubeck2012best, seldin2014one}. Specifically, some works use KL-divergence to construct the confidence bound \citep{lai1985asymptotically, garivier2011kl, maillard2011finite}, or include variance estimates within the confidence bound  \citep{audibert2009exploration, auer2010ucb}. UCB is also used in the contextual learning setting  \citep[e.g.,][]{li2010contextual, krause2011contextual, slivkins2014contextual}. The UCB algorithm and its variations are also used for other feedback settings, including the stochastic combinatorial bandit problem \citep{chen2013combinatorial,chen2016combinatorial}. 
Parallel to the stochastic setting, studies on the adversarial bandit problem form another line of literature. Since randomized weighted majorities \citep{littlestone1994weighted}, exponential weights remains a top strategy for adversarial bandits \citep{auer1995gambling,cesa1997use, auer2002nonstochastic}. 
% constitute another family of popular bandit algorithms. While EXP algorithms were intended for non-stochastic bandit problems \citep{auer2002nonstochastic}, they also work for stochastic settings \citep{bubeck2012best, seldin2014one, auer2016algorithm, seldin2017improved}. 
Many efforts have been made to improve/extend exponential weights algorithms. For example, \citet{kocak2014efficient} target at implicit variance reduction. \citet{mannor2011bandits,alon2013bandits} study a partially observable setting. Despite the large body of literature, no previous work has, to the best of our knowledge, explicitly focused on problems where the feedback is a random walk. 
% (\textbf{P}), not to mention the 
% Specifically, if one applies vanilla bandit algorithms without using the graph structure, the regret bound would depend on the number of options (arms/nodes). This may be much worse than an environment-dependent bound. 
% \lin{can we say something like: "if apply their algorithm directly, leads to regret bound depending on $n$, which is not acceptable if the number of nodes is big."}
% The mechanism of exponential weight in EXP connects to Weighted Majority algorithm \citep{littlestone1994weighted} and exponentiated gradient \citep{kivinen1997exponentiated}. This connections further links EXP to online optimization and bandit convex optimization (e.g. \citep{hazan2016introduction, bubeck2015convex}). 

For both stochastic bandits and adversarial bandits, lower bounds in different scenarios have been derived,  since the $\mathcal{O} (\log T)$ asymptotic lower bounds for consistent policies \citep{lai1985asymptotically}. Worst case bound of order ${\mathcal{O}} (\sqrt{T})$ have also been derived \citep{auer1995gambling} for the stochastic setting. In addition to the classic stochastic setting, lower bounds in other stochastic (or stochastic-like) settings have also been considered, including PAC-learning complexity \citep{mannor2004sample}, best arm identification complexity \citep{kaufmann2016complexity, chen2017nearly}, and lower bounds in continuous spaces \citep{kleinberg2008multi}. Lower bound problems for adversarial bandits may be converted to lower bound problems for stochastic bandits  \citep{auer1995gambling} in many cases. Yet the above mentioned works do not cover the lower bounds for our settings. 
% An intriguing lower bound beyond the expected regret is the high probability lower bound of order $ \wt{\mathcal{O}} \( \sqrt{T} \) $ by \citet{gerchinovitz2016refined}. 

The Stochastic Shortest Path (with adversarial edge length) \citep[e.g.,][]{bertsekas1991analysis,neu2012adversarial, rosenberg2020adversarial} and the online MDP problems \citep[e.g.,][]{even2009online,gergely2010online,dick2014online,jin2019learning} are related to our problem. However, these settings are fundamentally different because of the sample space generated by the possible trajectories. In all previously studied settings, a control is available at each step, and a regret is immediately incurred. In this regard, the infinite-length free trajectory scenario is impossible in all previous studies. In other words, every trajectory in previous works is effectively of length one, since a control is imposed, and a regret is incurred every time the state changes. 
% Another related setting is bandit with side information \citep{mannor2011bandits,alon2015online,alon2017nonstochastic}. 

% In such problems, playing a node reveals information about other nodes, where the feedback structure is governed by an observation graph. Such problem assumes that the observation graph is revealed, either before or after the player has made a decision. In our setting, however, the observation model is unknown and needs to be learned. 
% Very importantly, our setting has much more randomness than that for bandit with side information and is harder in a statistical sense. 
% ; See Section \ref{sec:adversarial} for more discussion.  

% While bandit problems have been studied in different settings using various techniques, no prior works, to the best of our knowledge, focus on answering the important question (\textbf{P}). Our paper provides a comprehensive answer to this important class of problems (\textbf{P}) for propagation over graphs. 

% perhaps the most related problem is the stochastic shortest path problem. 
% \vspace{-8pt} 
\section{Problem Setting} 
% \vspace{-6pt} 
% In this section, we formulate the problem and put forward notations and definitions that will be used throughout the rest of the paper. 
The learning process repeats for $T$ epochs and the learning environment is described by graphs $ G_1, G_2, \dots, G_T $ for epochs $t = 1,2,\dots,T$. The graph $G_t$ is defined on $K$ transient nodes $ [K] = \{ 1,2,\dots, K \}$ and one absorbing node denoted by $*$. We will use $V = [K]$ to denote the set of transient nodes, and use $\wt{V} := [K] \cup \{ * \}$ to denote the transient nodes together with the absorbing node. On this node set $\wt{V}$, graph $G_t$ encodes transition probabilities and edge lengths: $G_t := \( \{ m_{ij} \}_{i \in V, j \in \wt{V} }, \{ l_{ij}^{(t)} \}_{i \in V, j \in \wt{V}  } \)$, where $m_{ij}$ is the probability of transiting from $i$ to $j$ and $l_{ij}^{(t)} \in [0,1]$ is the length from $i$ to $j$ (at epoch $t$). We gather the transition probabilities among transient nodes to form a transition matrix $M = [m_{ij}]_{i,j \in [K]}$. We make the following assumption about $M$. 

\begin{assumption}
	\label{assumption:transition}
	The transition matrix $M = [m_{ij}]_{i,j \in [K]}$ among transient nodes is primitive.~\footnote{A matrix $M$ is primitive if there exists a positive integer $k$, such that all entries in $ M^k $ is positive.}
% 	\lin{I feel primitive is not standard term. Maybe add a footnote to explain this.}. 
	In addition, there is an absolute constant $\rho$, such that $ \| M \|_\infty \le \rho < 1$, where $ \| M \|_{\infty} = \max_{i \in [K]} \sum_{j \in [K]} \rvert m_{ij} \rvert $ is the maximum absolute row sum. 
% \lin{This is also not defined. Does that mean the max entry?}. 
\end{assumption}

In Assumption \ref{assumption:transition}, the primitivity assumption ensures that we can get to any transient node $v$ from any other node state $u$. The infinite norm of $M$ being strictly less than $1$ means that the random walk will transit to the absorbing node starting from any node (eventually with probability 1). This describes the absorptiveness of the environment. This infinite norm assumption can be replaced by other notions of matrix norms. Unless otherwise stated, we assume that $\rho$ is an absolute constant independent of $K$ and $T$. 
% transitions $  $

Playing node $j$ at epoch $t$ generates a random walk trajectory $\P_{t,j} := \big( X_{t,0}^{(j)},$ $L_{t,1}^{(j)},$ $X_{t,1}^{(j)},$ $L_{t,2}^{(j)},$ $X_{t,2}^{(j)},$ $\dots,$ $L_{t,H_{t,j}}^{(j)},$ $X_{t,H_{t,j}}^{(j)} \big) $, where 
$X_{t,0}^{(j)} = j$ is the starting nodes, $X_{t, H_{t,j}}^{(j)} = *$ is the absorbing node, $ X_{t,i}^{(j)} $ is the $i$-th node in the random walk trajectory, $L_{t,i}^{(j)}$ is the edge length from $X_{t,i-1}^{(j)}$ to $X_{t,i}^{(j)}$, and $H_{t,j}$ is the number of edges in trajectory $ \P_{t,j} $. For simplicity, we write $ X_{t,i}^{(j)} $ (resp. $L_{t,i}^{(j)}$) as $ X_{t,i} $ (resp. $L_{t,i}$) when it is clear from context.

For the random trajectory $\P_{t,j} := \big( X_{t,0},$ $L_{t,1}, X_{t,1},$ $L_{t,2}, X_{t,2}, \dots, L_{t,H_{t,j}}, X_{t,H_{t,j}} \big) $, the length of the trajectory (or {{hitting time}} of node $ j $ at epoch $t$) is defined as 
% \begin{align} 
    $\L \( \P_{t,j} \) := \sum_{ i = 1 }^{H_{t,j}} L_{t,i}. $
    % \label{eq:def-Z}
% \end{align} 
% which is the random hitting time of $j$ at epoch $t$. 
Here we use the edge length to represent the reward of the trajectory. 
In practice, the edge lengths may have real-world meanings. For example, the out-going edge from a node may represent utility (e.g., profit) of visiting this node. 
% If one models the purchase amount of each group (node) by the outgoing edge length, this can represent the total sales of the campaign. 
% In the stochastic setting, the graph is time-invariant, we simply write $ \L_j = \L \(  \) $
% $ |\P_{t,j}| :=  $. 
At epoch $t$, the agent selects a node $J_t \in [K]$ to initiate a random walk, and observe trajectory $\P_{t,J_t}$. 
In stochastic environments, the environment does not change across epochs. Thus for any fixed node $v \in [K]$, the random trajectories $\P_{1,v}, \P_{2,v}, \P_{3,v}, \dots$ are independently identically distributed. 
% Because of this, we write $\mu_{v} := \E \[ \L \( \P_{t,v} \) \] $ for simplicity. 
% In this case, the regret is defined as 
% \begin{align} 
% 	\hspace{-8pt}\Reg(T) = \max_{i \in [K]} \sum_{t=1}^T \mu_i -  \sum_{t=1}^T \mu_{{}_{J_t}}, 
% \end{align} 
% where $J_t$ is the node played by the algorithm (at epoch $t$). In adversarial problems, the environment can change across epochs, and the regret against a fixed node $j$ is defined as
% \begin{align}
%     \Regadv_j(T) = \sum_{t=1}^T \L \( \P_{t,j} \) -  \sum_{t=1}^T \L \( \P_{t,J_t} \) .  \label{eq:def-reg-adv} 
% \end{align}
% In the adversarial setting, it suffices to bound $ \Regadv_j(T)  $ for all $j \in [K]$. 
% where $v^* \in \arg \max_{v \in V } \E \[ H_v \]$.

% \textcolor{red}{move this Def. to Section 4.} 

% Hitting centrality of a node $v$ is how likely it is visited by a trajectory starting from another node. In non-absorptive (and ergodic) Markov chains, the hitting centrality of any node is $1$. This quantity is less than 1 for networks with absorbing nodes. The hitting centralities describe the connectivity of the transition graph. 
% As we will show in the analysis, the hitting centrality will be a factor in the regret bound as a problem intrinsic parameter. 
% Nonetheless, our algorithms do not need prior knowledge of the hitting centrality of any node. 

\section{Information Theoretical Properties} 
\label{sec:info}

Consider the case where the graphs $G_t$ do not change across epochs. To solve this problem, one can estimate the expected hitting times $ \mu_j := \E \left[ \L (\P_{t,j}) \right]$ for all nodes $ j \in [K] $ (and maintain a confidence interval of the estimations). As one can expect, the random walk trajectory reveals more information than a sample of reward. Naturally, this allows us to reduce this problem to a standard (stochastic) MAB problem. 

\subsection{Reduction to Standard MAB} 
\label{sec:reduction}
Recall $ \P_{t,J_t} = \( X_{t,0}, L_{t,1}, X_{t,1}, \dots, L_{t,H_{t,J_t}}, X_{t,H_{t,J_t}} \) $ is the trajectory at epoch $t$. 
For a node $v$ and the trajectories $ \P_{1, J_1}, \P_{2,J_2}, \P_{3,J_3}, \dots $, let $k_{v,i}$ be the index (epoch) of the $i$-th trajectory that covers node $v$. Let $Y_{v, k_{v,i}}$ be the sum of edge lengths between the first occurrence of $v$ and the absorbing node $*$ in trajectory $k_{v,i}$. One has the following proposition due to Markov property. 

\begin{proposition} 
\label{prop:est} 
In the stochastic setting, for any nodes $ v \in [K] $, 
we have, for $\forall t,i \in \mathbb{N}_+, \forall r \in \mathbb{R}$
    \begin{align}  
        \Pr \( Y_{v,k_{v,i}} = r \) = \Pr \( \L \( \P_{t,v} \) = r \) .  
    \end{align} 
\end{proposition}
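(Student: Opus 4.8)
The plan is to exploit the strong Markov property of the random walk. Fix a transient node $v \in [K]$. The key observation is that when a trajectory (started from whatever node the algorithm played) first reaches $v$, the portion of the trajectory from that first visit onward is a random walk started afresh at $v$, run until absorption, and it is independent of everything that happened before the first visit to $v$. Since the environment is stochastic (the graph $G_t$ does not change with $t$), such a ``fresh'' walk from $v$ has exactly the same law as the trajectory $\P_{t,v}$ obtained by directly playing $v$ at any epoch $t$. The quantity $Y_{v,k_{v,i}}$ is by definition the sum of edge lengths along exactly this post-first-visit segment, so it should have the same distribution as $\L(\P_{t,v}) = \sum_{i=1}^{H_{t,v}} L_{t,i}$.

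First I would set up notation carefully for a single trajectory $\P_{s,J_s}$ that covers $v$: write $\tau = \min\{ h \ge 0 : X_{s,h} = v \}$ for the index of the first occurrence of $v$, which is a stopping time with respect to the natural filtration of the walk. On the event $\{\tau < H_{s,J_s}\}$ (i.e. the trajectory does cover $v$), the shifted process $(X_{s,\tau}, L_{s,\tau+1}, X_{s,\tau+1}, \dots, L_{s,H_{s,J_s}}, X_{s,H_{s,J_s}})$ is, by the strong Markov property applied at $\tau$, distributed as a random walk started at $v$ and run to absorption, with transition probabilities $\{m_{ij}\}$ and edge lengths $\{l_{ij}\}$ — exactly the generative description of $\P_{t,v}$. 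Hence the sum of its edge lengths, which is precisely $Y_{v, s}$ (when $s = k_{v,i}$ for some $i$), equals in distribution $\L(\P_{t,v})$.

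Next I would promote this from a statement about a single trajectory to the statement about $Y_{v,k_{v,i}}$ for the $i$-th covering trajectory. This requires a short argument that conditioning on ``this is the $i$-th trajectory to cover $v$'' does not disturb the conditional law of the post-first-visit segment: the index $k_{v,i}$ and the choices $J_1, J_2, \dots$ and the pre-$v$ portions of all trajectories are all determined by information available strictly before the first visit to $v$ in trajectory $k_{v,i}$, so by the strong Markov property the post-visit segment is independent of all of it and retains the law of a fresh walk from $v$. Taking the distribution of $\L$ of that segment then yields, for every $r \in \R$,
\[
    \Pr\( Y_{v,k_{v,i}} = r \) = \Pr\( \L\( \P_{t,v} \) = r \),
\]
which is the claim; note the right side does not depend on $t$ or $i$ precisely because the trajectories $\P_{1,v}, \P_{2,v}, \dots$ are i.i.d.\ in the stochastic setting.

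The main obstacle is the bookkeeping in the second step: making rigorous that ``$k_{v,i}$ is the epoch of the $i$-th covering trajectory'' is a condition measurable with respect to the pre-first-visit $\sigma$-algebra, so that the strong Markov property applies cleanly and no selection bias is introduced. This is intuitively clear — whether a trajectory covers $v$, and if so with what prefix, is decided by the time we first hit $v$ (or by the time the walk is absorbed without hitting $v$) — but it needs to be phrased as an optional-stopping / strong-Markov statement for the concatenated process across epochs rather than a single walk. Everything else (the identification of the shifted segment's law with that of $\P_{t,v}$, and passing from trajectory laws to laws of the length functional $\L$) is a direct consequence of the Markov property and the time-homogeneity built into Assumption \ref{assumption:transition} and the stochastic setting.
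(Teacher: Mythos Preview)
Your proposal is correct and follows essentially the same approach as the paper: invoke the (strong) Markov property at the first visit to $v$ to identify the post-visit segment with a fresh walk from $v$. In fact you are more careful than the paper, which gives only a two-sentence sketch and does not explicitly address the measurability of $k_{v,i}$ that you flag as the main bookkeeping obstacle.
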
 

\begin{proof} 
    In a trajectory $ \P_{t, J_t} = \( X_{t,0}, L_{t,1}, X_{t,1}, \dots, L_{t,H_{t,J_t}}, X_{t,H_{t,J_t}} \) $, conditioning on $X_{t,i} = j$ being known (and no future information is revealed), the randomness generated by $  L_{t,i+1}, X_{t,i+1}, L_{t,i+2}, X_{t,i+2}, \dots $ is identical to the randomness generated by $ L_{t,1}, X_{t,1}, L_{t,2}, X_{t,2}, \dots $ conditioning on $  X_{t,0} = j $ being fixed. 
    Note that even if each trajectory can visit a node multiple times, only one hitting time sample can be used. This is because extracting multiple sample would break Markovianity, by revealing that the random walk will visit a same node again. 
\end{proof} 
For a node $v \in [K]$, we define 
	$N_t (v) := 1  \vee  \sum_{s < t} \mathbb{I}_{ [ J_s = v ]  } $, and $  
	{N}_t^+ (v)  := 1 \vee \sum_{s < t } \mathbb{I}_{ \[ v \in \P_{s, J_{s} } \] } .$
where $a \vee b = \max \{ a , b \}$. 
In words, $ {N}_t (v) $ is the number of times node $v$ is played, and ${N}_t^+ (v)$ is the number of times node $v$ is covered by a trajectory. 
By Proposition \ref{prop:est}, the information about the node rewards (hitting time to absorbing node) accumulates faster than standard MAB problems. Formally, it holds that $ N_t^+ (v) \ge N_t (v) $. Thus solving this problem is not hard: one can extract the hitting time estimates and apply a standard algorithm (e.g., UCB) based on the estimates. However, {some information is lost when we extract hitting time samples}, since trajectories also carry additional information (e.g, about graph transition and graph structure) but we only extract hitting time samples. Thus the intriguing question to ask is: 
\begin{itemize}
    % \item How much easier is this problem than its standard MAB counterpart? \textbf{(Q1)} 
    \item Do we give up too much information by only extracting hitting time samples from trajectories?  \textbf{(Q)} 
\end{itemize}
%
% We study this problem in a setting where the time horizon $T$ is {fixed}, and provide a lower bound in the worst case sense. To prove such results, we can 
We show that, although more information in addition to reward sample are available from the random walk trajectories, no algorithm can beat an $\Omega \( \sqrt{T} \)$ lower bound in a worst case sense. This provides an answer to \textbf{(Q)}. 
%
% \usetikzlibrary[topaths]
% A counter, since TikZ is not clever enough (yet) to handle
% arbitrary angle systems.
\newcount\mycount

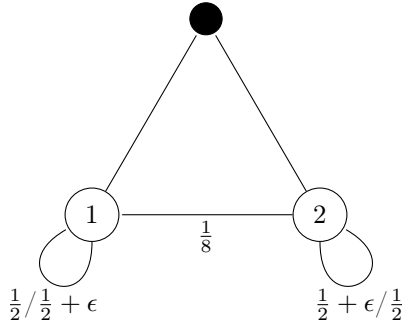
\begin{figure} 
    \centering 
    \begin{tikzpicture}[ 
    shorten > = 1pt, node distance = 3cm and 4cm, 
    el/.style = {inner sep=2pt, align=left, sloped}, 
    every label/.append style = {font=\tiny} ] 
    
    \tikzset{every loop/.style={}} 
    
    \node[draw,circle,inner sep=0.15cm] (1) at (-60:3cm) {2}; 
    \node[draw,circle,inner sep=0.15cm] (2) at (-120:3cm) {1}; 
    
    \node[draw,circle,inner sep=0.15cm, fill=black] (tar) at (0:0) {}; 
    
    \path (1) edge (tar) {}; 
    \path (2) edge (tar) {}; 
    
    \path[-] (1)  edge []  node[el,below]  {$\frac{1}{8} $} (2) ;
    \path[-] (1)  edge [in=-30,out=-90,loop]  node[el,below,rotate=-30]  {$\frac{1}{2} + \epsilon $/$\frac{1}{2}$} (1) ; 
    \path[-] (2) edge [in=-150,out=-90,loop] node[el,below,rotate=30] {$\frac{1}{2}$/$\frac{1}{2} + \epsilon $} (2); 
    \end{tikzpicture} 
    \caption{ 
    Problem instances constructed for Theorem \ref{thm:lower}. The edge labels denote edge transition probabilities in $\J$/$\J'$. The top dark node denotes the absorbing node $*$. Note that node 1 is connected to node 2 with a constant probability. 
    \label{fig:network-lower-bound}}  
\end{figure}

\begin{theorem}\label{thm:lower}
	For any given $T$ and any policy $\pi$, there exists a problem instance $\J$ satisfying Assumption \ref{assumption:transition} such that: (1) The probability of visiting any node from different node is larger than an absolute constant; (2) For any $ \epsilon \in \( 0, \frac{1}{4} \)$, the $T$ step regret of $\pi$ on instance $\J$ is lower bounded by 
	   $ \( \frac{32}{15} \epsilon + O \( \epsilon^2  \) \) T \exp \( - \frac{112}{9} T \( \epsilon^{2} + O \( \epsilon^3 \) \) \) . $ 
	In particular, setting $\epsilon = \frac{1}{4} T^{-1/2} $ gives that there exists a problem instance such that the regret of any algorithm on this instance is lower bounded by $ \Omega \( \sqrt{T} \) $. 
\end{theorem}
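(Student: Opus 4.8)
The plan is to use a standard two-point (change-of-measure) argument in the spirit of the classical $\Omega(\sqrt{T})$ lower bound for two-armed bandits, but carried out in the random-walk trajectory sample space rather than the reward sample space. I would fix the two instances $\J$ and $\J'$ depicted in Figure~\ref{fig:network-lower-bound}: two transient nodes $1,2$ and an absorbing node $*$, with node $1$ having a self-loop probability $\tfrac12$ under $\J$ and $\tfrac12+\epsilon$ under $\J'$, node $2$ symmetrically, and a constant cross-probability $\tfrac18$ between them. First I would verify the two asserted structural properties: the primitivity of $M$ and $\|M\|_\infty \le \rho < 1$ follow because every row sum is bounded away from $1$ (the escape probability to $*$ is at least some absolute constant), and the cross-edge probability $\tfrac18$ guarantees claim (1) that any node is reached from the other with constant probability.

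Next I would compute the gap. Under $\J$, the expected hitting time of the optimal node exceeds that of the suboptimal node by some $\Delta = \Theta(\epsilon)$; the precise leading constant $\tfrac{32}{15}$ comes from solving the $2\times 2$ linear system for expected hitting times, $\mu = (I-M)^{-1}\mathbf{1}$ (with unit edge lengths), and expanding in $\epsilon$. The optimal node under $\J$ is node $1$ and under $\J'$ it is node $2$, so any policy incurs regret $\Delta$ on at least one of the two instances on every round it plays the ``wrong'' node. Writing $\mathbb{E}_\J[N_T(2)]$ for the expected number of pulls of the suboptimal arm under $\J$, the regret on $\J$ is $\Delta\,\mathbb{E}_\J[N_T(2)]$ and on $\J'$ it is $\Delta\,\mathbb{E}_{\J'}[N_T(1)] = \Delta\,(T - \mathbb{E}_{\J'}[N_T(2)])$.

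The core step is the information-theoretic bound relating $\mathbb{E}_\J[N_T(2)]$ and $\mathbb{E}_{\J'}[N_T(2)]$. Here the key point — and the reason the richer sample space does not help — is that the two instances differ \emph{only} in the transition probabilities out of the nodes (by $\epsilon$), not in edge lengths or graph support, so the KL divergence between the full trajectory distributions generated by one pull is $O(\epsilon^2)$ (Gaussian-like quadratic behavior of KL in the perturbation), regardless of how long the trajectory is, since a geometric-type walk has expected length $O(1)$. By the chain rule for KL over the $T$ epochs and Pinsker's inequality (or the Bretagnolle–Huber inequality, which gives the cleaner exponential form matching the stated bound), $|\mathbb{E}_\J[N_T(2)] - \mathbb{E}_{\J'}[N_T(2)]|$ is controlled by $\sqrt{T \cdot O(\epsilon^2)}$, or more sharply the total-variation distance is at most $1 - \tfrac12\exp(-\mathrm{KL})$ with $\mathrm{KL} = O(T\epsilon^2)$. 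Summing the two regrets and using $\max\{a,b\}\ge\tfrac12(a+b)$ yields
\[
\Regadv(\pi) \;\ge\; \tfrac12 \Delta\, T \cdot \tfrac12 \exp\!\big(-c\,T\epsilon^2\big),
\]
and plugging in the explicit constants ($\Delta = \tfrac{32}{15}\epsilon + O(\epsilon^2)$, and the per-round KL computed as $\tfrac{112}{9}\epsilon^2 + O(\epsilon^3)$ from the chosen transition probabilities) gives exactly the claimed expression; setting $\epsilon = \tfrac14 T^{-1/2}$ makes the exponential a constant and leaves $\Omega(\sqrt{T})$.

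I expect the main obstacle to be the KL computation on trajectory space: one must argue carefully that extracting the \emph{entire} trajectory (nodes, edge lengths, and length $H$) rather than just the hitting time does not inflate the divergence beyond $O(\epsilon^2)$ per epoch. This requires decomposing the trajectory law step-by-step via the Markov property (as in Proposition~\ref{prop:est}), writing the per-step KL between the transition kernels — each of which is a perturbation of size $\epsilon$ in a fixed-support distribution — and summing over the random (but $O(1)$-in-expectation) number of steps, being careful with the conditioning so that Wald-type identities apply. Getting the constant $\tfrac{112}{9}$ exactly right is then a finite, if slightly tedious, computation with the specific $2\times 2$ transition matrices.
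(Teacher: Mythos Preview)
Your proposal is correct and follows essentially the same route as the paper: the same two-instance construction from Figure~\ref{fig:network-lower-bound}, Bretagnolle--Huber to get the exponential-of-KL form, and a Markov-property decomposition of the trajectory law to show the per-epoch KL is $O(\epsilon^2)$ (the paper sets this up as a $2\times 2$ linear recursion in the quantities $D_{kl}(\Q_i\|\Q_i')$, which is equivalent to your Wald-type sum of per-step divergences). Two cosmetic slips worth fixing: in $\J$ the optimal node is $2$, not $1$, since node $2$ carries the $+\epsilon$ self-loop; and the gap is $\Delta = \tfrac{64}{15}\epsilon + O(\epsilon^2)$, with the $\tfrac{32}{15}$ in the theorem arising from the factor $\tfrac12$ in $\max\{a,b\}\ge\tfrac12(a+b)$.
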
 
%
% One can naturally expect that 
% Notably, Theorem \ref{thm:lower} tells us that the problem is not easier than a standard MAB, even if the underlying graph is 
%
To prove Theorem \ref{thm:lower}, we construct problem instances such that the optimal nodes are almost indistinguishable. In the construction, we ensure that the nodes visits each other with a constant probability. If instead, the nodes visit each other with an arbitrarily small probability, the nodes are basically disconnected and the problem is too similar to a standard MAB problem. We use the instances illustrated in Fig. \ref{fig:network-lower-bound} for the proof. As shown in Fig. \ref{fig:network-lower-bound}, node 1 and node 2 are connected with probability $\frac{1}{8}$. This construction ensures that the two nodes visits each other with a constant probability. This prevents our construction from collapsing to a standard MAB problem, where the non-absorbing nodes are disconnected. 

\begin{proof}[Proof of Theorem \ref{thm:lower}.] 
We construct two ``symmetric'' problem instances $\J$ and $\J'$ both on two transient nodes $\{ 1,2 \}$ and one absorbing node $*$. All edges in both instances are of length 1. We use ${M} = [ m_{ij}]$ (resp. ${M}' = [m_{ij}'] $) to denote the transition probabilities among transient nodes in instance $\J$ (resp. $\J'$). We construct instances $\J$ and $\J'$ so that $ M = \begin{bmatrix} \frac{1}{2} & \frac{1}{8} \\ \frac{1}{8} &  \frac{1}{2} + \epsilon \end{bmatrix} $ and $M' = \begin{bmatrix} \frac{1}{2} + \epsilon & \frac{1}{8} \\ \frac{1}{8} &  \frac{1}{2} \end{bmatrix}  $, as shown in Figure \ref{fig:network-lower-bound}. 
% In other words, $ M'$ is the anti-transpose (transpose with respect to the anti-diagonal) of $M$. 
% 
We use $Z_v$ (resp. $Z_v'$) to denote the random variable $ \L \( \P_{t,v} \) $ in problem instance $\J$ (resp. $\J'$). Note that the (expected) hitting time of node 1 in $\J$ can be expressed as $ \E [ Z_1 ] = m_{11} (\E [ Z_1 ] + 1) + m_{21} (\E [Z_2] + 1 ) + (1 - m_{11} - m_{21}) = m_{11} \E [Z_1] + m_{21} \E [Z_2] + 1 $. 
Thus in matrix form we have 
\begin{align*} 
	\[ \E \[ Z_1 \] , \E \[ Z_2 \] \]
	=& \; 
	\[ \E \[ Z_1 \] , \E \[ Z_2 \] \] M + \mathbf{1}^\top , \\ 
	\[ \E \[ Z_1' \] , \E \[ Z_2' \] \]
	=& \; 
	\[ \E \[ Z_1' \] , \E \[ Z_2' \] \] M' + \mathbf{1}^\top , 
\end{align*}  
% \begin{align*} 
% 	\begin{bmatrix} 
% 	\E \[ Z_1 \] \\ \E \[ Z_2 \]
% 	\end{bmatrix} 
% 	= M \begin{bmatrix} 
% 	\E \[ Z_1 \] \\ \E \[ Z_2 \]
% 	\end{bmatrix} +  \mathbf{1}, 
% 	\quad 
% 	\begin{bmatrix} 
% 	\E \[ Z_1' \] \\ \E \[ Z_2' \]
% 	\end{bmatrix} = M' 
% 	\begin{bmatrix} 
% 	\E \[ Z_1' \] \\ \E \[ Z_2' \]
% 	\end{bmatrix} + \mathbf{1},
% \end{align*}  
where $\mathbf{1}$ is the all-one vector. 
Solving the above equations gives, for both instances $\J$ and $\J'$, the optimality gap $\Delta$ is 
\begin{align} 
	\Delta := \left\rvert\E \[ Z_1 \] - \E \[ Z_2 \] \right\rvert = \frac{ 64\epsilon }{ 15 } + O \( \epsilon^2 \) .  \label{eq:gap-lower-bound} 
\end{align} 

Let $\pi$ be any fixed algorithm and let $T$ be any fixed time horizon, we use $ \Pr_{\J, \pi} $ (resp. $ \Pr_{\J', \pi} $) to denote the probability measure of running $\pi$ on instance $\J$ (resp. $ \J' $) for $T$ epochs. 

Since the event $\{ J_t = 1 \}$ ($t \le T$) is measurable by both $\Pr_{\J, \pi}$ and $\Pr_{\J', \pi}$, we have 
% by the Bretagnolle-Huber inequality we have 
\begin{align}
	\Pr_{\J, \pi} ( J_t \neq 1 ) + \Pr_{\J', \pi} (J_t = 1) 
	=& \; 
	1 - \Pr_{\J, \pi} ( J_t = 1 ) + \Pr_{\J', \pi} (J_t = 1) \nonumber \\
	\overset{(i)}{\ge}& \; 
	1 - \| \Pr_{\J, \pi} - \Pr_{\J', \pi} \|_{TV} \nonumber \\
	\overset{(ii)}{\ge}& \; 
	1 - \sqrt{ 1 - \exp \( D_{kl} ( \Pr_{\J, \pi} \| \Pr_{\J', \pi} ) \) } \nonumber \\ 
	\overset{(iii)}{\ge}& \;  
    \frac{1}{2} \exp \( - D_{kl} ( \Pr_{\J, \pi} \| \Pr_{\J', \pi} ) \) \label{eq:event-to-KL} ,  
\end{align}
where $(i)$ uses the definition of total variation, $(ii)$ uses the Bretagnolle-Huber inequality, and $(iii)$ uses that $ 1 - \sqrt{1-x} \ge \frac{1}{2}x $ for all $x \ge 0$. 

Let $\Q_i$ (resp. $\Q_i'$) be the probability measure generated by playing node $ i $ in instance $\J$ (resp. $\J'$). We can then decompose $\Pr_{\J, \pi}$ by
\begin{align}
    \Pr_{\J, \pi} &= \Q_{J_1} \Pr \( J_1 \rvert \pi \) \Q_{J_2} \Pr ( J_2 \rvert \pi , J_1) \cdots \Q_{J_T} \Pr ( J_T \rvert \pi, J_1, J_2, \dots, J_{t-1} ), \nonumber \\
    \Pr_{\J', \pi} &= \Q_{J_1}' \Pr \( J_1 \rvert \pi \) \Q_{J_2}' \Pr ( J_2 \rvert \pi , J_1)  \cdots \Q_{J_T}' \Pr ( J_T \rvert \pi, J_1, J_2, \dots, J_{t-1} ). \nonumber
\end{align}

By chain rule for KL-divergence, we have 
\begin{align}
    &D_{kl} ( \Pr_{\J, \pi} \| \Pr_{\J', \pi} ) \nonumber \\
    =& 
    \sum_{J_1 \in \{ 1,2 \} } \Pr(J_1\rvert \pi ) D_{kl} ( \Q_{J_1} \| \Q_{J_1}' ) \nonumber \\
    &+ \sum_{t=2}^T \sum_{J_t \in \{ 1,2 \} } \Pr(J_t\rvert \pi , J_{1}, \cdots, J_{t-1}) D_{kl} ( \Q_{J_t} \| \Q_{J_t}' ) . \label{eq:simple-chain-0} 
\end{align} 

Since the policy must pick one of node 1 and node 2, from (\ref{eq:simple-chain-0}) we have 
\begin{align} 
	D_{kl} ( \Pr_{\J, \pi} \| \Pr_{\J', \pi} ) \le \sum_{t=1}^T \sum_{ i=1 }^2  D_{kl} ( \Q_i \| \Q_i' ),  \label{eq:simple-chain} 
\end{align} 
which allows us to remove dependence on policy $\pi$. 

Next we study the distributions $\Q_i$. With edge lengths fixed, the sample space of this distribution is $ \cup_{ h=1}^\infty \{ 1,2\}^h $, since length of the trajectory can be arbitrarily long, and each node on the trajectory can be either of $\{ 1, 2 \} $. To describe the distribution $\Q_i$ and  $ \Q_i' $, we use random variables $X_0, X_1, X_2, X_3, \dots \in \{ 1, 2, * \}$ (with $X_0 = i$), where $X_k$ is the $k$-th node in the trajectory generated by playing $ i $. 

By Markov property we have, for $i,j \in \{ 1, 2 \}$, 
\begin{align} 
    \Q_i \( X_{k+1}, X_{k+2}, \dots \rvert X_k = j \) &= \Q_j \quad \label{eq:6} \text{and}  \\
    \Q_i' \( X_{k+1}, X_{k+2}, \dots \rvert X_k = j \) &= \Q_j', \quad \forall k \in \mathbb{N}_+. \nonumber 
\end{align} 

Since we can decompose $ \Q_i $ by $ \Q_i = \Q_i (X_1) \Q_i ( X_2, X_3, \dots, \rvert X_1 ) $. 
Thus by chain rule of KL-divergence, we have 
\begin{align} 
    &\; D_{kl} (\Q_1 \| \Q_1' ) \nonumber \\
    =&\; D_{kl} ( \Q_1 (X_1) \| \Q_1' (X_1) ) \nonumber \\
    &+ \Q_1 (X_1 \hspace*{-2pt} = \hspace*{-2pt} 1) D_{kl} ( \Q_1 (X_2, \cdots \rvert X_1 \hspace*{-2pt} = \hspace*{-2pt} 1) \| \Q_1' (X_2, \cdots \rvert X_1 \hspace*{-2pt} = \hspace*{-2pt} 1) )  \nonumber \\
    &+ \Q_1 (X_1 \hspace*{-2pt} = \hspace*{-2pt} 2) D_{kl} ( \Q_1 (X_2, \cdots \rvert X_1 \hspace*{-2pt} = \hspace*{-2pt} 2) \| \Q_1' (X_2, \cdots \rvert X_1 \hspace*{-2pt} = \hspace*{-2pt} 2) ) \nonumber \\
    =&\; D_{kl} ( \Q_1 (X_1) \| \Q_1' (X_1) ) \nonumber \\
    &+ \Q_1 (X_1 = 1)  D_{kl} ( \Q_1 \| \Q_1' ) + \Q_1 (X_1 = 2)  D_{kl} ( \Q_{2} \| \Q_{2}' ) , \label{eq:for-KL}
\end{align} 
where the last line uses (\ref{eq:6}). A similar argument gives
\begin{align}
    & \; D_{kl} (\Q_2 \| \Q_2' ) \nonumber \\
    =& \; D_{kl} ( \Q_2 (X_1) \| \Q_2' (X_1) ) \nonumber \\
    &+ \Q_2 (X_1 = 1)  D_{kl} ( \Q_1 \| \Q_1' ) + \Q_2 (X_1 = 2)  D_{kl} ( \Q_{2} \| \Q_{2}' ) , \label{eq:for-KL2}
\end{align} 

Since $ D_{kl} ( \Q_i (X_1) \| \Q_i' (X_1) )  
= \frac{7}{3} \epsilon^2 + O \( \epsilon^3 \)   $ for $i \in \{ 1 , 2 \}$, the above (Eq. \ref{eq:for-KL} and Eq. \ref{eq:for-KL2}) gives 
\begin{align} 
    D_{kl} ( \Q_i \| \Q_i' ) &= \frac{56 \epsilon^2}{9} + O \( \epsilon^3 \) . \label{eq:KL} 
\end{align} 

Combining (\ref{eq:simple-chain}) and (\ref{eq:KL}) gives
\begin{align} 
	D_{kl} \( \Pr_{\J, \pi} \| \Pr_{\J', \pi} \) \le \frac{112}{9} T \( \epsilon^{2} + O \( \epsilon^3 \)  \) . \label{eq:prob-gap} 
\end{align} 

Let $\Reg (T)$ (resp. $\Reg' (T)$) be the $T$ epoch regret in instance $\J$ (resp. $\J'$). Recall, by our construction, node 1 is suboptimal in instance $\J$ and node 1 is optimal in instance $\J'$. Since the optimality gaps in $\J$ and $\J'$ are the same (Eq. \ref{eq:gap-lower-bound}), we have, 
\begin{align}
	&\; \Reg (T) + \Reg' (T) \nonumber \\
	\ge&\; \Delta  \sum_{t=1}^T \Big( \Pr_{\J, \pi} \( J_t \neq 1 \)  +  \Pr_{\J', \pi} \( J_t = 1 \) \Big) \nonumber \\
	\ge&\; 
	\frac{1}{2} \Delta T  \exp \( - D_{kl} ( \Pr_{\J, \pi} \| \Pr_{\J', \pi} ) \) \tag{by Eq. \ref{eq:event-to-KL}} \\ 
	\ge&\;
	\frac{1}{2} \Delta T \exp \( - \frac{112}{9} T \( \epsilon^{2} + O \( \epsilon^3 \)  \) \) \tag{by Eq. \ref{eq:prob-gap}} \\
	\ge&\;
	\( \frac{32}{15} \epsilon + O \( \epsilon^2  \) \) T 
	\exp \( - \frac{112}{9} T \( \epsilon^{2} + O \( \epsilon^3 \) \) \) .
	\tag{by Eq. \ref{eq:gap-lower-bound}} 
\end{align} 
\end{proof}

% \subsection{Discussions}

In Theorem \ref{thm:lower}, the two-nodes case has been covered. A similar result for the $ K$-node case is in Theorem \ref{thm:K-lower}. 
% If the number of nodes $K$ is consider as an absolute constant, Theorem \ref{thm:lower} directly generalizes to the $K$-nodes scenario. However, when $K$ is considered an important factor in the regret rate, the story has an slightly different ending. In particular, depending on how connected the graphs are, an algorithm might achieve an $O (\sqrt{})$. 
% \textcolor{red}{remember to pick up} 

\begin{theorem} 
    \label{thm:K-lower} 
    % Consider the setting where the edge lengths in the graphs are not fixed to $1$. 
    Let Assumption \ref{assumption:transition} be true. 
    Given any number of epochs $T$ and policy $ \pi $, there exists $K$ problem instances $\J_1, \cdots, \J_K$, such that (1) all problems instances have $K$ nodes, (2) all transient node are connected with the same probability and the probability of hitting the absorbing node from any transient node is a constant independent of $T$ and $K$, and (3) for any policy $\pi$, 
    \begin{align*} 
        \max_{k \in [K]} \E_{\J_k, \pi} \[ \Reg (T) \] \ge \frac{1}{8\sqrt{2}} \sqrt{KT}, 
    \end{align*} 
    where $ \E_{\J_k, \pi} $ is the expectation with respect to the distribution generated by instance $ \J_k $ and policy $\pi$. 
\end{theorem}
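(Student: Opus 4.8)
The plan is to lift the ``two almost‑indistinguishable optima'' construction behind Theorem~\ref{thm:lower} to a symmetric $K$‑armed ``needle in a haystack'' family, and to replace its two crude steps (the single‑event bound and the plain sum $\sum_i D_{kl}(\Q_i\|\Q_i')$) by the standard averaging argument used to prove $\sqrt{KT}$ minimax lower bounds. First I would introduce an auxiliary \emph{base} instance $\J_0$ on the $K$ transient nodes in which every node has a self‑loop of probability $p$, transits to each other transient node with the common probability $q$, and is absorbed with probability $a:=1-p-(K-1)q$; the parameters are chosen (e.g.\ $p=\Theta(1)$, $(K-1)q=\Theta(1)$, $a=\Theta(1)$, and $p+\epsilon+(K-1)q\le\rho$) so that $M^{(\J_0)}$ is primitive and every instance below satisfies Assumption~\ref{assumption:transition} with the prescribed $\rho$ — this already yields properties (1) and (2) of the statement, since all transient nodes are connected with the same probability $q>0$ and the absorption probability $a$ is a constant independent of $K,T$. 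For $k\in[K]$ let $\J_k$ agree with $\J_0$ except that node $k$'s self‑loop is raised to $p+\epsilon$ and its absorption probability lowered to $a-\epsilon$. Writing $G:=(I-M^{(\J_0)})^{-1}$ (finite and entrywise positive, with $\sum_j G_{ij}=[(I-M^{(\J_0)})^{-1}\mathbf{1}]_i=1/a$ because $M^{(\J_0)}\mathbf{1}=(1-a)\mathbf{1}$), a one‑line fundamental‑matrix perturbation, $(I-M^{(\J_0)})(\mu^{(\J_k)}-\mu^{(\J_0)})=\epsilon\,\mu^{(\J_k)}_k e_k$ with $\mu^{(\J_0)}=\tfrac1a\mathbf{1}$, shows that node $k$ is the \emph{unique} optimal node in $\J_k$ with optimality gap $\Delta=\Theta(\epsilon)$ (the diagonal entry $G_{kk}=\Theta(1)$, while $G_{jk}=O(1/K)$ for $j\ne k$). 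Finally, the KL between the $k$‑th rows of $M^{(\J_0)}$ and $M^{(\J_k)}$, viewed as categorical laws on $[K]\cup\{*\}$, is $d:=\Theta(\epsilon^2)$ since $p$ and $a$ are bounded away from $0$.

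Next I would control how much a policy can learn. Fix a policy $\pi$ and horizon $T$; decomposing the laws $\Pr_{\J_0,\pi}$ and $\Pr_{\J_k,\pi}$ of the observed data into per‑epoch trajectory kernels times the common policy kernels and applying the KL chain rule exactly as in \eqref{eq:simple-chain-0}, one gets $D_{kl}(\Pr_{\J_0,\pi}\|\Pr_{\J_k,\pi})=\sum_{t=1}^{T}\E_{\J_0,\pi}[\kappa_{J_t}]$, where $\kappa_i:=D_{kl}(\Q_i^{\J_0}\|\Q_i^{\J_k})$ and $\Q_i^{(\cdot)}$ is the law of the trajectory started at node $i$. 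The crux is the random‑walk analogue of \eqref{eq:for-KL}: peeling off the first step and invoking the Markov property gives the self‑referential system
\begin{align*}
\kappa \;=\; d\,e_k \;+\; M^{(\J_0)}\kappa,
\end{align*}
whose unique bounded solution — which exists because $\|M^{(\J_0)}\|_\infty<1$, so $G=\sum_{n\ge0}(M^{(\J_0)})^{n}$ converges — is $\kappa_i=d\,G_{ik}$. Hence $D_{kl}(\Pr_{\J_0,\pi}\|\Pr_{\J_k,\pi})=d\sum_{t=1}^{T}\E_{\J_0,\pi}[G_{J_t,k}]$, and summing over $k$ using $\sum_k G_{ik}=1/a$ yields
\begin{align*}
\sum_{k=1}^{K}D_{kl}(\Pr_{\J_0,\pi}\|\Pr_{\J_k,\pi})\;=\;\frac{d\,T}{a}\;=\;\Theta(\epsilon^2 T),
\end{align*}
i.e.\ the same ``$\Theta(\epsilon^2)$ of distinguishing information per epoch'' phenomenon as in Theorem~\ref{thm:lower}, now split among $K$ alternatives irrespective of how long the trajectories are.

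To conclude, let $N_k$ be the number of pulls of node $k$; in $\J_k$ we have $\Reg(T)\ge\Delta\,(T-N_k)$ since $k$ is uniquely optimal, so $\E_{\J_k,\pi}[\Reg(T)]\ge\Delta(T-\E_{\J_k,\pi}[N_k])$, and because $N_k\in[0,T]$ a change of measure plus Pinsker's inequality gives $\E_{\J_k,\pi}[N_k]\le\E_{\J_0,\pi}[N_k]+T\sqrt{\tfrac12 D_{kl}(\Pr_{\J_0,\pi}\|\Pr_{\J_k,\pi})}$. Averaging over $k$, using $\sum_k\E_{\J_0,\pi}[N_k]=T$ and Cauchy–Schwarz, $\sum_k\sqrt{D_{kl}(\cdot\|\cdot)}\le\sqrt{K\sum_k D_{kl}(\cdot\|\cdot)}=\Theta(\epsilon\sqrt{KT})$, gives $\frac1K\sum_k\E_{\J_k,\pi}[N_k]\le\frac{T}{K}+\Theta\!\big(\epsilon T\sqrt{T/K}\big)$; therefore
\begin{align*}
\max_{k\in[K]}\E_{\J_k,\pi}[\Reg(T)]\;\ge\;\frac1K\sum_{k=1}^{K}\E_{\J_k,\pi}[\Reg(T)]\;\ge\;\Delta\Big(T-\tfrac{T}{K}-\Theta\!\big(\epsilon T\sqrt{T/K}\big)\Big).
\end{align*}
With $\Delta=\Theta(\epsilon)$ and the tuning $\epsilon=c\sqrt{K/T}$ for a small absolute constant $c$, the parenthesis is $\Theta(T)$, so the right‑hand side is $\Omega(\sqrt{KT})$; tracking the constants (the choice of $p,q,a$, the one‑step KL, the gap $\Delta$, and $c$) gives the stated $\tfrac{1}{8\sqrt2}\sqrt{KT}$.

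The step I expect to be the main obstacle is the KL computation over the \emph{infinite‑dimensional} trajectory sample space $\bigcup_{h\ge1}([K]\cup\{*\})^{h}$: one must show that $\kappa_i=D_{kl}(\Q_i^{\J_0}\|\Q_i^{\J_k})$ is finite, that peeling off the first step is legitimate for arbitrarily long trajectories, and that the resulting linear recursion has a unique bounded solution equal to $d\,G_{ik}$. This is exactly where the paper's thesis bites: although a long trajectory can in principle reveal the entire environment, it still carries only $\Theta(\epsilon^2)$ bits of distinguishing information per epoch. A lesser but real bookkeeping burden is keeping every structural quantity ($q$, $a$, $p$, $\Delta/\epsilon$, $d/\epsilon^2$, and the expected base‑instance trajectory length $1/a$) bounded away from degeneracy uniformly in $K$, so that no hidden $K$‑dependence leaks into $\Delta$ or into the per‑epoch KL.
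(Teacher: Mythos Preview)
Your strategy — a needle‑in‑haystack family over a symmetric base instance $\J_0$, the trajectory‑KL computed via the Markov recursion $\kappa = d\,e_k + M^{(\J_0)}\kappa$ solved by the fundamental matrix, then Pinsker plus averaging over $k$ with $\sum_k N_k=T$ — is exactly the paper's argument; the linear system you write is the same one the paper solves explicitly (it gets $\kappa_k=\frac{\epsilon^2}{1-Kp}+\frac{p\epsilon^2}{(1-Kp)^2}$ and $\kappa_j=\frac{p\epsilon^2}{(1-Kp)^2}$ for $j\ne k$ by direct substitution, which is your $d\,G_{ik}$).

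The one substantive divergence is the \emph{perturbation mechanism}. You raise node $k$'s self‑loop by $\epsilon$ and drop its absorption probability to $a-\epsilon$; the paper instead keeps the transition matrix identical across all $\J_0,\dots,\J_K$ (every entry $p=\tfrac{1}{2K}$, absorption $1-Kp=\tfrac12$) and makes the \emph{edge lengths} random Bernoulli, biasing only $l_{k*}^{(t)}$ in $\J_k$. This matters for property~(2) of the statement: it requires the absorption probability from \emph{any} transient node to be a constant independent of $T$ and $K$, but in your $\J_k$ node $k$ absorbs with probability $a-\epsilon=a-c\sqrt{K/T}$, and its self‑loop $p+\epsilon$ also breaks the ``all connected with the same probability'' clause. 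The paper's edge‑length perturbation sidesteps this because the transition structure is literally the same in every instance. Your KL recursion and gap calculation carry over verbatim under that construction (the one‑step KL becomes a Bernoulli KL on the length bit, nonzero only at row $k$; the optimality gap comes out exactly $\epsilon$), so the fix is cosmetic rather than structural.
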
 

The proof of Theorem \ref{thm:K-lower} follows a recipe similar to that of Theorem \ref{thm:lower}, and the details are deferred to the Appendix. 
% In Theorem \ref{thm:K-lower}, we consider the case where the probability of hitting the absorbing node is independent of $K$. When the probability of hitting the absorbing node is $K$-dependent, the story is slightly different. In particular, it is possible to outperform the $ \Omega ( \sqrt{KT} ) $ lower bound (but \textit{not} the $ \Omega ( \sqrt{T}) $ lower bound). An example of such algorithm is in Algorithm \ref{alg:adv}. 

% \textcolor{red}{ move hitting centrality to section 4.}

\section{Algorithms for Multi-Armed Bandits with Random Walk Feedback}
\label{sec:algs}

Perhaps the two most well-known algorithms for bandit problems are the UCB algorithm and the EXP3 algorithm. In this section, we study the behavior of these two algorithms on bandit problems with random walk feedback. 

\subsection{UCB Algorithm for the Stochastic Setting} 

As discussed previously, the problem with random walk feedback can be reduced to standard MAB problems, and the UCB algorithm solves this problem as one would expect. We present here the UCB algorithm and provide regret analysis for it. Recall the regret is defined as 
\begin{align} 
	\Reg(T) = \max_{i \in [K]} \sum_{t=1}^T \mu_i -  \sum_{t=1}^T \mu_{{}_{J_t}}, 
\end{align} 
where $J_t$ is the node played by the algorithm (at epoch $t$), and $\mu_{{}_{j}} = \E \[ \L \( \P_{t,j} \) \] $ is the expected hitting time of node $ j $. 
% The guarantees on the regret are stated below in Theorems \ref{thm:no-gap} and \ref{thm:opt}. 

For a transient node $v \in [K]$ and $ n $ trajectories (at epochs $ k_{v,1}, k_{v,2}, \dots, k_{v,n} $ ) that cover node $v$, the hitting time estimator of $v$ is computed as
% \begin{align}
    $\wt{Z}_{v,n} := \frac{1}{n} \sum_{i=1}^n Y_{v, k_{v,i} }. $
    % \label{eq:def-Z-non-trun}
% \end{align} 
Since $v \in \P_{t, v}$, $ Y_{v,k_{v,i}} $ is an identical copy of the hitting time $\L \( \P_{t,v} \) $ (Proposition \ref{prop:est}). 
% For (\ref{eq:def-Z-non-trun}), one can also use robust mean estimators \citep{bubeck2013bandits}, but a plain mean estimator is sufficient for the purpose of this paper. 

We also need confidence intervals for our estimators. 
Given $ N_t^+  (v) $ trajectories covering $v$, the confidence terms (at epoch $t$) are
% \begin{align} 
	$\wt{C}_{N_t^+  (v) , t} \hspace{-2pt}:= \wtC{ N_t^+  (v)  }, $ 
% \end{align}  
where $\xi_t = \xit$. Here $\xi_t$ serves as a truncation parameter since the reward distribution is not sub-Gaussian. Alternatively, one can use robust estimators for bandits with heavy-tail reward for this task \cite{bubeck2013bandits}. 

At each time $t$, we play a node $J_t$ so that 
\begin{align*}
    J_t \in \arg\max_v \wt{Z}_{ v, N_t^+(v) } + \wt{C}_{ N_t^+(v), t} . 
\end{align*}
This strategy is described in Algorithm \ref{alg:opt}.
\begin{algorithm}[h!]
    \caption{}  
    \label{alg:opt}
    \begin{algorithmic}[1] % The number tells where the line numbering should start
        \State \textbf{Input:} A set of nodes $[K]$. Parameters: a constant $ \rho $ that bounds the spectral radius of $M$. 
        \State \textbf{Warm up:} Play each node once to initialize. Observe trajectories. 
        \State 
        % \qquad \textbf{Option I:} 
        For any $v \in [K]$, define the decision index  
        % 		\begin{align} 
        			$I_{v,N_t^+(v),t} = \wt{Z}_{ v, N_t^+(v) } + \wt{C}_{ N_t^+(v) ,t} .$ 
        % 		\end{align} 
        \FOR {$t = 1, 2, 3,  \dots $} 
        	\State Select $J_t$, such that 
        % 		\begin{align}
        			$J_t \in \arg\max_{ v \in V } I_{v, N_t^+ (v) , t },$
        % 		\end{align} 
        		with ties broken arbitrarily. 
        	\State 
        	Observe the trajectory $ \P_{t, v_t} := \big\{ X_{t,0}, L_{t,1}, X_{t,1}, L_{t,2}, X_{t,2}, \cdots, L_{t,H_{t,v_t}} X_{t,H_{t,J_t}} \big\} $. Update ${N}_t^+ (\cdot)$ and UCBs for all $v \in [K]$. 
        \ENDFOR 
    \end{algorithmic} 
\end{algorithm} 

Similar to the UCB algorithm for standard MAB problems, Algorithm \ref{alg:opt} obtains $\widetilde{\mathcal{O}} ( \sqrt{T} )$ mini-max regret (gap-independent regret) and $\widetilde{\mathcal{O}} (1)$ asymptotic regret (gap-dependent regret). Such results can be derived from the observations discussed in Section \ref{sec:reduction}. 
% A more precise statement on the regret upper bound is in Theorem \ref{thm:opt}, and more detailed discussion is deferred to the Appendix. 

\subsection{EXP3 Algorithm for Adversarially Chosen Edge Lengths}

In this section, we consider the case in which the network structure $G_t$ changes over time, and study a version of this problem in which the adversary alters edge length across epochs: In each epoch, the adversary can arbitrarily pick edge lengths $l_{ij}^{(t)}$ from $[0,1]$. In this case, the performance is measured by the regret against playing any fixed node $i \in [K]$: 
% \begin{align}
    $\Regadv_i(T) = \sum_{t=1}^T l_{t,i} -  \sum_{t=1}^T l_{t,J_t} ,$  
% \end{align}
where $J_t$ is the node played in epoch $t$, and $l_{t,j} = \E \[  \L \( \P_{t,i} \)  \]$. Since $  \L \( \P_{t,j} \) $ concentrates around $l_{t,j}$, a high probability bound on $\Regadv_i(T)$ naturally provides a high probability bound on $ \sum_{t=1}^T \L \( \P_{t,i} \) -  \sum_{t=1}^T \L \( \P_{t,J_t} \) $. 

We define a notion of centrality that will aid our discussions. 
\begin{definition}
	\label{def:centrality}
	Let $ X_0, X_1, X_2, \dots, X_\tau = * $ be nodes on a random trajectory. 
	Under Assumption \ref{assumption:transition}, we define, for node $v \in [K]$,  
	$\alpha_{v} := \min_{u \in [K], \; u \neq v} \Pr \left( v \in \{ X_1, X_2, \dots, X_\tau \} \rvert X_0 = u \right)$ 
	to be the \textbf{hitting centrality} of node $v$. We also define $\alpha = \min_v \alpha_v$
\end{definition} 
For a node with positive the hitting centrality, information about it is revealed even if it is not played. This quantity will show up in the regret bound in some cases, as we will discuss later. 
% For the rest of the paper, we information theoretical properties of this problems, followed by behaviors of (existing) bandit algorithms on this problem. All omitted proofs can be found in the Appendix. 

We will use a version of the exponential weight algorithm to solve this adversarial problem. Also, a high probability guarantee is provided using a new concentration lemma (Lemma \ref{lem:bound}). As background, exponential weights algorithms maintain a probability distribution over the choices. This probability distribution gives higher weights to historically more rewarding nodes. In each epoch, a node is sampled from this probability distribution, and information is recorded down. 
To formally describe the strategy, we define some notations now. 
% Each time after the agent has picked a node, the environment alters the network.
% structure while preserving hitting centralities of the nodes. 
% The performance is measured against an oblivious adverse. 
% The estimation of $\P_{t,i}$ at time $t$ is written as $ \wh{Z}_{t,i} $. This slightly overloads the $ \wh{Z}_{v, N_t^+ (v),t } $ notation (\ref{eq:def-H}) used in the stochastic case. 
% To compute $\wh{Z}_{t,j}$, w
We first extract a sample of $ \L \( \P_{t,j} \) $ from the trajectory $\P_{t,J_{t}}$, where $J_{t}$ is the node played in epoch $t$. 

Given $ \P_{t, J_t} = \( X_{t,0}, L_{t,1}, X_{t,1}, \dots, L_{t,H_{t,J_t}}, X_{t,H_{t,J_t}} \) $, we define, for $v \in [K]$, 
\begin{align}
    Y_v ( \P_{t,J_t} ) = \max_{i: 0\le i < H_{t,J_t} }  \; \I_{ \[ X_{t,i} = v \] } \cdot \L_{i} (\P_{t,J_t} ) , \label{eq:est-extract}
\end{align}
where $ \L_{i} ( \P_{t,J_t} ) := \sum_{ k=i+1 }^{H_{t,J_t}} L_{t,k} $. In words, $\L_{i} ( \P_{t,J_t} )$ is the distance (sum of edge lengths) from the first occurrence of $v$ to the absorbing node. 

By the principle of Proposition \ref{prop:est}, if node $ i$ is covered by trajectory $\P_{t, J_t}$, $ Y_v (\P_{t,J_t}) $ is a sample of $ \L \( \P_{t,i} \) $. We define, for the trajectory $ \P_{t,J_t} = \{ X_{t,0}, L_{t,1}, X_{t,1}, L_{t,2} ,\dots, L_{t,H_{t,J_t }}, X_{t, H_{t,J_t }} \} $, 
% \begin{align*} 
    $Z_{t,v} := Y_v ( \P_{t,J_t} ), \quad \forall v \in [K], $ 
% \end{align*} 
where $Y_v ( \P_{t,J_t} )$ is defined above. 
% Here we again slightly overload the notations, by using $ Z_{t,i} $ and $ \L \( \P_{t,i} \) $ interchangeably. 

Define $\I_{t,ij} := \I_{ \[i\in \P_{t,J_t} \text{ and } j\in \P_{t,J_t} , \, Y_i ( \P_{t,J_t} ) > Y_j ( \P_{t,J_t} ) \] }$. This indicator random variable is $1$ iff $ i $ and $j$ both show up in $\P_{t,J_t}$ and the first occurrence of $ j $ is after the first occurence of $i$. We then define 
% \begin{align*}
    $\wh{q}_{t,ij} :=\frac{ \sum_{s = 1}^{t-1} \I_{s,ij}  }{N_t^+ (i) } ,  $
% \end{align*} 
which is an estimator of how likely $j$ is visited via a trajectory starting at $i$. In other words, $\wh{q}_{t,ij}$ is an estimator of $q_{ij} := \Pr \( j \in \P_{t,i} \)$, which is the probability of $j$ being visited by a trajectory from $i$. 
%  Since the transition probabilities do not change over time, we do not need to use t in the subscript of qij .
Using $\wh{q}_{t,ij}$ and sample $ {Z}_{t,i} $, we define an estimator for $ \frac{ l_{t,j} - B }{B} $ as
\begin{align} 
    \wh{Z}_{t,i} &:= \frac{ \frac{ {Z}_{t,i} - B }{B} \I_{ \[ i \in \P_{t,J_{t}} \] } + \beta }{ p_{ti} + \sum_{j \neq i} \wh{q}_{t,ji} p_{tj}  } , \quad \forall i \in [K],  \label{eq:est-adv} 
\end{align} 
where $B, \beta$ are algorithm parameters ($\beta \le \alpha$ and $B$ to be specified later). Here $\beta$ serves as a parameter for implicit exploration \cite{neu2015explore}. We use estimate for $ \frac{ l_{t,j} - B }{B} $ instead of an estimate for $l_{t,j}$. This shift can guarantee that the estimator $\wh{Z}_{t,i}$ is between $[-1,0]$ with high probability. Such shifting in estimator has been used as a common trick for variance reduction for the EXP3 algorithms \citep[e.g.,][]{lattimore_szepesvari_2020}. In addition, a small bias is introduced via $\beta$. 
% In fact, a novel concentration result is derived for this estimator in Lemma \ref{lem:bound}. 
With the estimators $\wh{Z}_{t,i} $, we define 
$
    \wh{S}_{t,j} = 
        \sum_{ s = 1 }^{t-1} \wh{Z}_{s,i} .$ 
By convention, we set $\wh{S}_{0,i} = 0 $ for all $i \in [K]$. The probability of playing $i$ in epoch $t$ is defined as
\begin{align}
    p_{ti} := 
    \begin{cases} 
        \frac{1}{K}, \quad \text{ if } t = 1, \\
        \frac{ \exp \( \eta \wh{S}_{t-1, i} \) }{ \sum_{ j=1 }^K \exp \( \eta \wh{S}_{t-1, j} \) } , \quad \text{ if } t \ge 2, 
    \end{cases} \label{eq:def-ptj} 
\end{align}  
where $\eta$ is the learning rate. 

Against any arm $j \in [K]$, following the sampling rule (\ref{eq:def-ptj}) can guarantee an $ \wt{\mathcal{O}} \( \sqrt{T} \) $ regret bound. 
% This guarantee requires a careful choices of parameters $B$ and $\eta$, which we now specify. 
% For any fixed time horizon $ T $ and $ \delta \in \(0, \frac{1}{(1 - \rho ) KT } \) $, we set $ B = \Bparam $ and $\eta = \ETA $, where $\rho$ bounds the $\infty$-norm of the transition matrix (defined in Assumption \ref{assumption:transition}). Intuitively, $\delta$ controls how likely the algorithm is well-behaved, $B$ is an effective upper bound on the random hitting times $Z_{t,i}$, and $ \eta $ is the learning rate. 
% In particular, $\delta$ specifies how likely the $\wt{\mathcal{O}} (\sqrt{T})$ regret rate is true, and $ B $ and $\eta$  .
% With this choice of parameters, 
We now summarize our strategy in Algorithm \ref{alg:adv}, and state the performance guarantee in Theorem \ref{thm:adv}.

\begin{algorithm}[h!] 
    \caption{}  
    \label{alg:adv} 
    \begin{algorithmic}[1] % The number tells where the line numbering should start
        \State \textbf{Input:} A set of nodes $ [K]$, transition matrix $M$, total number of epochs $T$, probability parameter $\epsilon \in \(0, \frac{1}{(1 - \rho ) KT } \)$. Algorithm parameters: $B = \advB$, $\eta = \frac{1}{\sqrt{\kappa T}}$, $\beta = \frac{1}{\sqrt{\kappa T}}$.  
        \FOR {$t = 1, 2, 3,  \dots , T$} 
        	\State Randomly play $J_t \in [K]$, such that 
        % 		\begin{align}
        			$\Pr \( J_t = i \) = p_{ti}, \forall i \in [K], $
        % 		\end{align} 
        		where $p_{ti}$ is defined in (\ref{eq:def-ptj}). 
        	\State Observe the trajectory $ \P_{t,J_t} $. 
        	Update estimates $ \wh{Z}_{t,j} $ according to (\ref{eq:est-adv}). 
        \ENDFOR
    \end{algorithmic} 
\end{algorithm}

%In this section we provide theoretical guarantee for Algorithm \ref{alg:adv}. 

A high probability performance guarantee of Algorithm \ref{alg:adv} is below in Theorem \ref{thm:adv}.

\begin{theorem} \label{thm:adv}
%    \textcolor{red}{change to a high probability bound. }
    Let $ \kappa:= 1 + \sum_j \frac{1 - \sqrt{\alpha_j }}{ 1 + \sqrt{\alpha_j } } $, where $\alpha_j$ is the hitting centrality of $j$. Fix any $i \in [K]$. If the time horizon $T$ and algorithm parameters satisfies $\epsilon \le \frac{1}{T}$, $ B = \advB $, then with probability exceeding $1 - \Ot (\epsilon)$, 
%    With parameters specified in Algorithm \ref{alg:adv}, Algorithm \ref{alg:adv} satisfies 
    \begin{align*}
        \Regadv_i (T)  
        \le& \; 
        B \frac{\log (T / \epsilon^2 ) }{\beta} + B \kappa \beta T + \wt{\mathcal{O}} \( B \sqrt{\beta \kappa T } \) \\
        &+ 
        \frac{ B \log K }{\eta } + B \eta \kappa T + \wt{\mathcal{O}} \( B \eta \sqrt{ (1 + \beta \kappa) T } \) . 
        % \label{eq:thm-adv-case1} 
    \end{align*}
    % where $\lesssim$ omits constants, multiplicative terms that are (poly-)logarithmic in $T$ and $K$, and additive terms of smaller order. 
\end{theorem}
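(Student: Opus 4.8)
\emph{Proof plan.} The argument is a high-probability analysis of the exponential-weights algorithm with implicit exploration, adapted to random-walk feedback. The new feature is that the ``effective observation probability'' of node $i$ at epoch $t$ is $P_{t,i}:=p_{ti}+\sum_{j\neq i}q_{ji}p_{tj}=\Pr(i\in\P_{t,J_t}\mid\F_{t-1})$ rather than $p_{ti}$, and the denominator of $\wh Z_{t,i}$ is the plug-in estimate $\wh P_{t,i}:=p_{ti}+\sum_{j\neq i}\wh q_{t,ji}p_{tj}$ of it. With the rescaled rewards $g_{t,j}:=(l_{t,j}-B)/B$ one has $\tfrac{1}{B}\Regadv_i(T)=\sum_t g_{t,i}-\sum_t\sum_j p_{tj}g_{t,j}$ up to an Azuma term controlling $\sum_t g_{t,J_t}$ against $\sum_t\sum_j p_{tj}g_{t,j}$ (the $l_{t,j}$ are bounded by an absolute constant under Assumption \ref{assumption:transition}, so this term is $\wt{\mathcal O}(\sqrt T)$). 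Before anything else I would fix a ``good event'': since $\Pr(H_{t,i}>h\mid\F_{t-1})\le\rho^{h}$ from $\|M\|_\infty\le\rho$, and $\rho^{B}\le\epsilon/(KT)$ by the choice of $B$, a union bound over $t\le T$, $i\in[K]$ gives that every observed hitting time is at most $B$; on this event $g_{t,i}\in[-1,0]$ and the numerator of $\wh Z_{t,i}$ lies in $[-1,\beta]$. I would intersect this with the event that $\wh q_{t,ij}$ is close to $q_{ij}$ for all $t,i,j$ --- legitimate because $\alpha_j>0$ (primitivity of $M$) forces $N_t^+(j)=\Omega(\alpha_j t)$ with high probability --- so that on the good event $\wh P_{t,j}$ is within a constant factor of $P_{t,j}$ and bounded below by $\Omega(\alpha_j)$. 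All failure probabilities so far are $\wt{\mathcal O}(\epsilon)$.

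\emph{Step 1 (potential function and estimator identities).} With $W_t:=\sum_j\exp(\eta\wh S_{t,j})$, telescoping $\log(W_T/W_0)$ and using $e^x\le 1+x+x^2$ --- valid because on the good event $\eta\wh Z_{t,j}\le\eta\beta/\wh P_{t,j}=\mathcal O(\eta\beta/\alpha)\le 1$ for $T$ large enough --- gives the familiar inequality
\begin{equation*}
\sum_{t=1}^{T}\wh Z_{t,i}-\sum_{t=1}^{T}\sum_j p_{tj}\wh Z_{t,j}\ \le\ \frac{\log K}{\eta}+\eta\sum_{t=1}^{T}\sum_j p_{tj}\wh Z_{t,j}^{2}.
\end{equation*}
Next, by Proposition \ref{prop:est} (the continuation of a trajectory after the first visit of $i$ is distributed as $\P_{t,i}$), $\E[(Z_{t,i}-B)/B\cdot\I_{[i\in\P_{t,J_t}]}\mid\F_{t-1}]=g_{t,i}P_{t,i}$, hence $\E[\wh Z_{t,i}\mid\F_{t-1}]=g_{t,i}P_{t,i}/\wh P_{t,i}+\beta/\wh P_{t,i}$; since $g_{t,i}\le 0$ and $\wh P_{t,i}\approx P_{t,i}$ on the good event, the conditional mean over-estimates $g_{t,i}$ by at most $\beta/\wh P_{t,i}$ up to a lower-order $\wh q$-error. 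Because the shift $\beta$ tames the one-sided heavy tail of $\wh Z_{t,i}$, Lemma \ref{lem:bound} gives $\sum_t(\wh Z_{t,i}-\E[\wh Z_{t,i}\mid\F_{t-1}])=\mathcal O(\beta^{-1}\log(T/\epsilon^{2}))$ with probability $1-\wt{\mathcal O}(\epsilon)$, whence $\sum_t\wh Z_{t,i}\ge\sum_t g_{t,i}-\mathcal O(\beta^{-1}\log(T/\epsilon^{2}))$; symmetrically $\E[\sum_j p_{tj}\wh Z_{t,j}\mid\F_{t-1}]=\sum_j p_{tj}g_{t,j}P_{t,j}/\wh P_{t,j}+\sum_j p_{tj}\beta/\wh P_{t,j}$, and an Azuma bound on the corresponding martingale contributes $\wt{\mathcal O}(\sqrt{(1+\beta\kappa)T})$.

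\emph{Step 2 (the $\kappa$ terms).} The crux is the inequality
\begin{equation*}
\sum_{j\in[K]}\frac{p_{tj}}{P_{t,j}}\ \le\ \kappa\ =\ 1+\sum_{j}\frac{1-\sqrt{\alpha_j}}{1+\sqrt{\alpha_j}},
\end{equation*}
which I would prove via convexity: from $q_{uj}\ge\alpha_j$ for $u\neq j$ we have $P_{t,j}\ge\alpha_j+(1-\alpha_j)p_{tj}$; the map $x\mapsto x/(\alpha_j+(1-\alpha_j)x)$ is concave on $[0,1]$, so it lies below its tangent at $x_0=\sqrt{\alpha_j}/(1+\sqrt{\alpha_j})$ (where the derivative equals $1$), i.e.\ below $x\mapsto x+(1-\sqrt{\alpha_j})/(1+\sqrt{\alpha_j})$; summing over $j$ and using $\sum_j p_{tj}=1$ gives the claim (and the same with $\wh P_{t,j}$ in place of $P_{t,j}$, up to the $\wh q$-error). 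Applied to the bias in Step 1, $\sum_j p_{tj}\beta/\wh P_{t,j}\le\kappa\beta$, so it contributes $\kappa\beta T$. For the second moment, $g_{t,j}^{2}\le 1$ and $\E[\I_{[j\in\P_{t,J_t}]}\mid\F_{t-1}]=P_{t,j}$ give $\E[\wh Z_{t,j}^{2}\mid\F_{t-1}]=\mathcal O(1)/\wh P_{t,j}$, hence $\sum_j p_{tj}\E[\wh Z_{t,j}^{2}\mid\F_{t-1}]=\mathcal O(\kappa)$ and, after one more Azuma step, $\sum_t\sum_j p_{tj}\wh Z_{t,j}^{2}=\mathcal O(\kappa T)+\wt{\mathcal O}(\sqrt{\kappa T})$.

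\emph{Step 3 (assembling, and the main obstacle).} Chaining Steps 1--2, isolating $\sum_t g_{t,i}-\sum_t\sum_j p_{tj}g_{t,j}$, multiplying by $B$, and collecting all failure events into a single $\wt{\mathcal O}(\epsilon)$ probability yields the stated bound: the $B\log(T/\epsilon^{2})/\beta$ and $B\kappa\beta T$ terms come from the concentration and bias of Step 1, the $B\log K/\eta$ and $B\eta\kappa T$ terms from Step 1's potential inequality together with the second-moment bound of Step 2, and the $\wt{\mathcal O}(B\sqrt{\beta\kappa T})$ and $\wt{\mathcal O}(B\eta\sqrt{(1+\beta\kappa)T})$ terms from the martingale fluctuations. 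I expect the main obstacle to be the estimator analysis: since $\wh Z_{t,i}$ divides by the plug-in $\wh P_{t,i}$ rather than $P_{t,i}$, it is neither unbiased nor bounded, so one must simultaneously show that $\wh q_{t,ij}\to q_{ij}$ fast enough --- which is exactly where $\alpha_j>0$ and the linear growth of $N_t^+(j)$ are used --- to keep the residual bias and variance at lower order, and invoke the tailored concentration inequality (Lemma \ref{lem:bound}) that exploits the implicit-exploration shift $\beta$ to control the one-sided large deviations of $\sum_t\wh Z_{t,i}$. Verifying $\eta\wh Z_{t,j}\le 1$ uniformly (needed for the potential step) is a minor but necessary check that likewise relies on the boundedness established on the good event.
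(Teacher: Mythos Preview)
Your proposal is correct and follows essentially the same route as the paper: the same good-event restriction (trajectory lengths $\le B$ via $\|M\|_\infty\le\rho$, together with concentration of $\wh q_{t,ij}$ around $q_{ij}$ using $N_t^+(j)=\Omega(\alpha_j t)$), the same exponential-weights potential step, the same use of Lemma~\ref{lem:bound} for the one-sided deviation of $\sum_t\wh Z_{t,i}$, and the same $\sum_j p_{tj}/P_{t,j}\le\kappa$ inequality. The only cosmetic differences are that the paper keeps the second-moment term in the raw form $\sum_j\frac{p_{tj}}{\wt p_{tj}^{\,2}}\I_{[j\in\P_{t,J_t}]}$ before passing to expectations (your $\sum_j p_{tj}\wh Z_{t,j}^{2}$ is the same quantity after bounding the numerator by $1$), and it proves the $\kappa$ inequality by a direct first-order computation (Proposition~\ref{prop:quick-bound}) rather than your tangent-to-a-concave-function phrasing---the two arguments are identical in content.
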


If we set $ \eta = \beta = \Theta \( \frac{1}{\sqrt{ \kappa T} } \) $, Algorithm \ref{alg:adv} achieves $\Regadv_i (T) \le \wt{\mathcal{O}} \( \sqrt{\kappa T } \) $. In Figure \ref{fig:adv-reg}, we provide a plot of $ f (x) = \frac{1-\sqrt{x}}{1 + \sqrt{x}} $ with $x \in [0,1]$. This figure illustrates how the regret scales with $\kappa$, and shows that a multiplicative factor is saved. An empirical comparison between Algorithm \ref{alg:adv} and the standard EXP3 algorithm (without using the estimator in Eq. \ref{eq:est-extract}) is shown in Section \ref{sec:exp}. 

% As discussed in the previous sections, when the probability of hitting the absorbing node is independent of $K$ and $T$ (Assumption \ref{assumption:transition}), the problem is no easier than the standard MAB problem. However, when $\rho$ (defined in Assumption \ref{assumption:transition}) depends on $K$, Algorithm \ref{alg:adv} can achieve a regret rate better than $ \Omega ( \sqrt{KT}). $ As an example, let $\rho = 1 - \frac{1}{K^2}$, and let all transient  nodes be connected with the same probability $p = \frac{ \rho}{K} = \frac{1}{K} - \frac{1}{K^3} $. In this case, starting from any node $i$, the probability of hitting the absorbing node without visiting another node $j$ is
% \begin{align} 
%     (1 - Kp) \sum_{k=0}^\infty ( (K-1) p )^k = \frac{1 - Kp}{ 1 - (K-1) p }  = \frac{K}{K^2 + K - 1} \le \frac{1}{K^2}, \quad \forall K \ge 1. \label{eq:hitting-compl}
% \end{align} 
% Taking the complement of the event in (\ref{eq:hitting-compl}), we get that the hitting centrality of any node satisfies 
% \begin{align*} 
%     \alpha = 1 - (1 - Kp) \sum_{k=0}^\infty ( (K-1) p )^k \ge 1 - \frac{1}{K}, \quad \forall K \ge 1. 
% \end{align*} 
% Thus it holds that 
% \begin{align*}
%     \kappa = K \frac{1 - \sqrt{\alpha}}{ 1 + \sqrt{\alpha} } \le K \cdot \frac{1}{4K} = \frac{1}{4}. 
% \end{align*} 

\begin{figure}
    \centering
    \includegraphics[width=5.5cm]{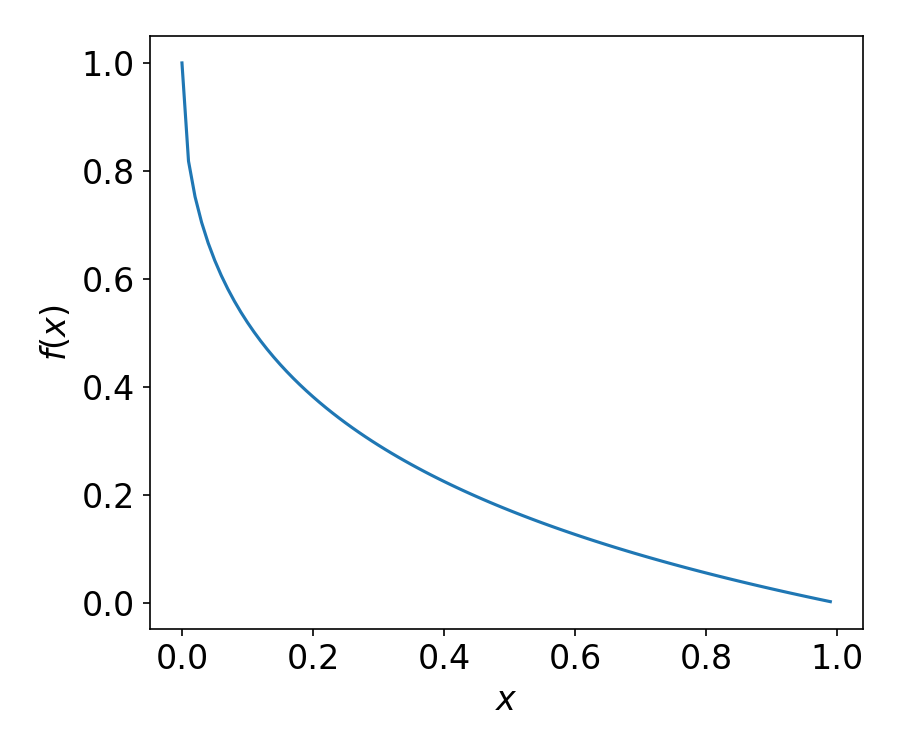}
    \caption{A plot of function $f (x) = \frac{1 - \sqrt{x}}{1 + \sqrt{x}}$, $x \in [0,1]$. \label{fig:adv-reg} } 
\end{figure}

\subsubsection{Analysis of Algorithm \ref{alg:adv}}

In this section we present a proof for Theorem \ref{thm:adv}. In general, the proof follows the general recipe for the analysis of EXP3 algorithms, which is in turn a special case of the Follow-The-Regularized-Leader (FTRL) or the Mirror Descent framework. 
Below we include some non-standard intermediate steps due to the special feedback structure of the problem studied. More details are defered to the Appendix.

By the exponential weights argument 
	\citep{littlestone1994weighted, auer2002nonstochastic}, it holds that,
	under event $\mathcal{E}_T (B) : =  \{Z_{t,j} \le B \text{ for all } t = 1, 2, \cdots , T, \text{ and } j \in [ K ] \}$, 
    \begin{align}
        \sum_{t=1}^T \wh{Z}_{t,i} - \sum_{t=1}^T \sum_{j} p_{tj} \wh{Z}_{t,j} 
        \le& \frac{ \log K}{\eta} + \eta \sum_{t=1}^T \sum_j \frac{ p_{tj} }{ \wt{p}_{tj} ^2 } \I_{ \[ j \in \P_{t, J_t} \] }. \label{eq:classic}
    \end{align} 
    
To link the regret ($\sum_{t=1}^T l_{t,i} -  \sum_{t=1}^T l_{t,J_t}$) to $ \sum_{t=1}^T \wh{Z}_{t,i} - \sum_{t=1}^T \sum_{j} p_{tj} \wh{Z}_{t,j}  $ and to bound $ \sum_{t=1}^T \sum_j \frac{ p_{tj} }{ \wt{p}_{tj} ^2 } \I_{ \[ j \in \P_{t, J_t} \] } $, we use the results in Lemmas \ref{lem:bound} and \ref{lem:simple-con}. 
% , whose detailed proofs can be found in the Appendix. 

% Lemma \ref{lem:bound} is also a new concentration result that provides a bound for our reward estimator (\ref{eq:est-adv}). 
% Lemma \ref{lem:simple-con} provides a bound 
% , and gives insights for adversarial bandit problems where the adversary stochastically picks rewards by specifying reward distributions. 
\begin{lemma} 
	\label{lem:bound} 
%    Let $l_{t,i} := \E \[ \L \( \P_{t,i} \) \] $. 
    For any 
    % $\beta \le 1$, and 
    $\epsilon \in (0,1)$ and $T \in \mathbb{N}$, such that $\epsilon \le \frac{1}{T}$ and $T \ge 10$, it holds that
%    with probability exceeding $1 - \Ot \( \epsilon \)$, it holds that
    \begin{align*}
        \Pr \( \sum_{t} \frac{ l_{t,i} - B }{B} - \sum_{t} \wh{Z}_{t,i}  \le \frac{\log \( T/ \epsilon^2 \) }{\beta} \) \ge 1 - \Ot \( \epsilon \). 
    \end{align*} 
\end{lemma}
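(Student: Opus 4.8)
Since all edge lengths lie in $[0,1]$, every extracted sample obeys $Z_{t,j}\le H_{t,J_t}$, and Assumption~\ref{assumption:transition} gives $\Pr(H_{t,j}>h)\le\rho^{h}$ for every node $j$. With $B=\advB$ this yields $\Pr(Z_{t,j}>B)\le\frac{(1-\rho)\epsilon}{KT}$, so a union bound over $t\in[T]$ and $j\in[K]$ gives $\Pr(\mathcal E_T(B))\ge 1-\epsilon$. On $\mathcal E_T(B)$ one has $\frac{Z_{t,i}-B}{B}\in[-1,0]$, hence it suffices to prove the claimed inequality conditionally on $\mathcal E_T(B)$ with failure probability $\Ot(\epsilon)$ and then add back $\Pr(\mathcal E_T(B)^c)\le\epsilon$.

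\noindent\textbf{Controlling the normalizer.} Let $\wt p_{ti}:=p_{ti}+\sum_{j\neq i}\wh q_{t,ji}p_{tj}$ be the (random, $\F_{t-1}$-measurable) denominator of $\wh Z_{t,i}$ in (\ref{eq:est-adv}), and let $\wt p_{ti}^{\,*}:=p_{ti}+\sum_{j\neq i}q_{ji}p_{tj}=\Pr(i\in\P_{t,J_t}\mid\F_{t-1})$ be its population version. Definition~\ref{def:centrality} gives $q_{ji}\ge\alpha_i$ for $j\neq i$, so $\wt p_{ti}^{\,*}\ge\alpha_i(1-p_{ti})+p_{ti}\ge\alpha$. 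By the Markov principle behind Proposition~\ref{prop:est}, conditioned on each past epoch covering $j$ the indicator $\I_{s,ji}$ is Bernoulli$(q_{ji})$, so $\wh q_{t,ji}$ is an empirical mean of $N_t^+(j)$ conditionally Bernoulli$(q_{ji})$ variables; a Bernstein-type maximal inequality with a union bound over $t,i,j$, together with the (policy-free) fact that $\min_j N_t^+(j)\gtrsim\alpha\,t$ with probability $1-\Ot(\epsilon)$ (each play covers $j$ with probability $\ge\alpha_j$), shows that on a good event $\mathcal G$ with $\Pr(\mathcal G^c)=\Ot(\epsilon)$ one has, for all $t$, $\wt p_{ti}\ge\wt p_{ti}^{\,*}-\zeta_t\ge\alpha/2\ge\beta$, where $\zeta_t\lesssim\sqrt{\log(KT/\epsilon)/(\alpha t)}$ and $\sum_{t\le T}\zeta_t=\Ot(\sqrt T)$.

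\noindent\textbf{Implicit-exploration exponential-moment bound.} Work on $\mathcal E_T(B)\cap\mathcal G$ and write $u_t:=\frac{l_{t,i}-B}{B}-\wh Z_{t,i}$. Setting $W_t:=\frac{B-Z_{t,i}}{B}\I_{[i\in\P_{t,J_t}]}\in[0,1]$ and $\bar g_{t,i}:=\frac{B-l_{t,i}}{B}\in[0,1]$, one has $u_t=-\bar g_{t,i}+\frac{W_t-\beta}{\wt p_{ti}}$, $\E[W_t\mid\F_{t-1}]=\wt p_{ti}^{\,*}\bar g_{t,i}$, $\E[W_t^2\mid\F_{t-1}]\le\wt p_{ti}^{\,*}\bar g_{t,i}$, and $\frac{\beta W_t}{\wt p_{ti}}\le\frac{\beta}{\alpha}W_t\le 1$. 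Dropping the term $-\beta^{2}/\wt p_{ti}\le 0$ from the exponent, using $e^{x}\le 1+x+x^{2}$ on $[0,1]$, then $\beta\le\alpha\le\wt p_{ti}^{\,*}$, $\wt p_{ti}\ge\wt p_{ti}^{\,*}-\zeta_t$, and finally $e^{-y}(1+y)\le 1$ for $y\ge0$, a direct computation yields
\begin{equation*}
\E\left[\exp(\beta u_t)\mid\F_{t-1}\right]\ \le\ e^{-\beta\bar g_{t,i}}\left(1+\beta\bar g_{t,i}+O\!\left(\tfrac{\beta^{2}}{\alpha}+\tfrac{\beta\zeta_t}{\alpha}\right)\right)\ \le\ 1+O\!\left(\tfrac{\beta^{2}}{\alpha}+\tfrac{\beta\zeta_t}{\alpha}\right).
\end{equation*}
Multiplying over $t=1,\dots,T$ via the tower property, and using $\sum_t\beta^{2}/\alpha=\Ot(1)$ and $\tfrac{\beta}{\alpha}\sum_t\zeta_t=\Ot(1)$ (Step~2, with $\beta=\Theta(1/\sqrt{\kappa T})$), gives $\E\!\left[\exp\!\left(\beta\sum_t u_t\right)\mathbf 1_{\mathcal E_T(B)\cap\mathcal G}\right]\le e^{\Ot(1)}$. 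Markov's inequality at level $\delta=\epsilon^{2}/T$ then yields $\sum_t u_t\le\frac{1}{\beta}\left(\log(T/\epsilon^{2})+\Ot(1)\right)$ with probability $\ge 1-\epsilon^{2}/T$, which is the asserted bound (the $\Ot(1)$ overhead being of the order absorbed by the $\widetilde O$-notation appearing in Theorem~\ref{thm:adv}); a final union bound over $\mathcal E_T(B)^c$ and $\mathcal G^c$ completes the proof.

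\noindent\textbf{Expected main obstacle.} Unlike the textbook EXP3-IX high-probability lemma \citep{neu2015explore}, where the normalizer $p_{ti}+\gamma$ is known exactly and $\F_{t-1}$-measurable, here $\wt p_{ti}$ is a plug-in \emph{estimate} of the coverage probability, and an adverse fluctuation $\wh q_{t,ji}<q_{ji}$ shrinks the denominator and inflates $\wh Z_{t,i}$ --- precisely the direction this lemma must rule out. Making the slack $\zeta_t$ genuinely summable, uniformly over $t$ while the counts $N_t^+(j)$ are themselves random, is the crux; once $\wt p_{ti}^{\,*}\ge\alpha\ge\beta$ and $\wt p_{ti}\ge\wt p_{ti}^{\,*}-\zeta_t$ are secured, what remains is the standard implicit-exploration bookkeeping.
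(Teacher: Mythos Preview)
Your argument is correct in outline and follows the same high-level route as the paper: bound $\E_{t-1}[\exp(\beta u_t)]$ step by step, telescope via the tower property, then apply Markov's inequality. The differences are in how the two error sources are handled. First, the paper obtains the sharper rate $\lvert\wh q_{t,ji}-q_{ji}\rvert=\Ot(1/t)$ rather than your $O(1/\sqrt{t})$: it first shows $\mathbb V[\wh q_{t,ji}]=\Ot(1/t)$ (Lemma~\ref{lem:q-mean-var}) and then applies Bennett's inequality, so that $1/\wh p_{ti}=1/\wt p_{ti}\pm\Ot(1/t)$ and the denominator-approximation error sums to $\Ot(\log T)$ instead of $\Ot(\sqrt T)$. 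Second---and this is the point you give away by ``dropping $-\beta^2/\wt p_{ti}$''---the paper \emph{keeps} that term: after factoring it out and applying $e^x\le 1+x+x^2$ to the remaining mean-zero increment $\tfrac{\beta}{B}(l_{t,i}-B)-\beta\tfrac{(Z_{t,i}-B)/B\,\I_{[i\in\P_{t,J_t}]}}{\wt p_{ti}}$, the second-moment contribution is exactly $+\beta^2/\wt p_{ti}$, which cancels the factored-out $-\beta^2/\wt p_{ti}$. That exact cancellation is the implicit-exploration mechanism in full; with it, the per-step bound $\E_{t-1}[\exp(\beta u_t)]\le\exp(\Ot(1/t))$ holds for \emph{every} $\beta$, and the lemma follows exactly as stated. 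Your path instead absorbs the second moment into $O(\beta^2/\alpha)$ and the denominator fluctuation into $O(\beta\zeta_t/\alpha)$, both of which sum to $O(1)$ only under the specific algorithmic choice $\beta=\Theta(1/\sqrt{\kappa T})$, and leaves an additive $\Ot(1)/\beta$ overhead---harmless for Theorem~\ref{thm:adv}, as you note, but not quite the lemma's clean bound $\log(T/\epsilon^2)/\beta$.
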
 

Lemma \ref{lem:bound} can be viewed as a one-side regularized version of the Hoeffding's inequality, with a regularization parameter $\beta$. Its proof uses the Markov inequality and can be found in the Appendix. 

\begin{lemma} 
    \label{lem:simple-con}
	With probability exceeding $1 - \mathcal{O} ( \epsilon ) $, we have 
	\begin{align*}
		(i)& \quad  \sum_{ t } \sum_j \frac{ p_{tj} }{ \wh{p}_{tj}^2 } \I_{ \[ j \in \P_{t, J_t} \] } \le  \kappa T  + \Ot \(  \sqrt{T \log (1/\epsilon)}  \), \\
		(ii)&\quad \sum_t \sum_j p_{tj} \wh{Z}_{t,j} - \sum_t  \frac{ l_{t,J_t} - B }{B} \le  \kappa \beta T + \O \( \sqrt{ \( 1 + \beta \kappa \) T \log (1/\epsilon)} \) . 
	\end{align*} 
\end{lemma}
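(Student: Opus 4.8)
The plan is to prove both parts of Lemma~\ref{lem:simple-con} by the same two‑stage argument: first derive a \emph{deterministic, per‑epoch} bound on the conditional expectation (given $\F_{t-1}$) of each summand, then lift it to a high‑probability bound on the full sum by a Freedman/Bernstein‑type martingale inequality. The combinatorial core of both parts is a single scalar estimate, which I would isolate first. Write $\wh{p}_{tj} := p_{tj} + \sum_{i \neq j} \wh{q}_{t,ij} p_{ti}$ for the denominator appearing in~(\ref{eq:est-adv}) and $p^*_{tj} := p_{tj} + \sum_{i\neq j} q_{ij} p_{ti} = \Pr(j \in \P_{t,J_t} \mid \F_{t-1})$ for its population version. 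The claim is that for any probability vector $(p_{tj})_{j\in[K]}$, $\sum_j p_{tj}/p^*_{tj} \le \kappa$. This rests on the elementary inequality $\frac{p}{\alpha + (1-\alpha)p} \le p + \frac{1-\sqrt{\alpha}}{1+\sqrt{\alpha}}$, valid for $\alpha \in (0,1]$, $p \in [0,1]$: the gap equals $\frac{(1-\alpha)p(1-p)}{\alpha + (1-\alpha)p}$, and the substitution $u = \alpha + (1-\alpha)p \in [\alpha,1]$ turns it into $\frac{1}{1-\alpha}\bigl((1+\alpha) - (u + \alpha/u)\bigr)$, maximized at $u = \sqrt{\alpha}$ with value $\frac{(1-\sqrt{\alpha})^2}{1-\alpha} = \frac{1-\sqrt{\alpha}}{1+\sqrt{\alpha}}$. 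Since $q_{ij} \ge \alpha_j$ for $i \neq j$ by Definition~\ref{def:centrality}, we get $p^*_{tj} \ge \alpha_j + (1-\alpha_j)p_{tj}$, and summing the scalar bound over $j$ against $\sum_j p_{tj} = 1$ yields $\sum_j p_{tj}/p^*_{tj} \le 1 + \sum_j \frac{1-\sqrt{\alpha_j}}{1+\sqrt{\alpha_j}} = \kappa$.

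Next I would pass from $p^*_{tj}$ to the empirical $\wh{p}_{tj}$. The estimator $\wh{q}_{t,ij} = \bigl(\sum_{s<t} \I_{s,ij}\bigr)/N_t^+(i)$ is a self‑normalized ratio; by the Markov property (Proposition~\ref{prop:est}), $\E[\I_{s,ij} \mid \F_{s-1}] = q_{ij}\cdot\Pr(i \in \P_{s,J_s}\mid \F_{s-1})$ while $\E[\I_{[i \in \P_{s,J_s}]}\mid \F_{s-1}] = \Pr(i \in \P_{s,J_s}\mid \F_{s-1})$, so numerator and denominator share the same predictable trend. Applying a Freedman inequality to each, with a union bound over $t \le T$ and all $K^2$ pairs $(i,j)$, shows that with probability $1 - \Ot(\epsilon)$ the deviation $|\wh{q}_{t,ij} - q_{ij}|$ is uniformly $o(\alpha)$ once $N_t^+(i)$ is of order $\log(1/\epsilon)/\alpha^2$, hence $\wh{p}_{tj} \ge p^*_{tj} - o(\alpha) \ge (1 - o(1))\,p^*_{tj}$ (and always $\wh{p}_{tj} \ge p_{tj}$); the at most $\O(\log(1/\epsilon)/\alpha)$ initial epochs are bounded separately. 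On this event the bound of the first paragraph holds with $p^*_{tj}$ replaced by $\wh{p}_{tj}$ up to a $(1+o(1))$ factor, the excess being absorbed into the advertised $\Ot(\sqrt{T\log(1/\epsilon)})$ slack.

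For part~(i): since $\wh{p}_{tj}$ is $\F_{t-1}$‑measurable and $\Pr(j \in \P_{t,J_t}\mid\F_{t-1}) = p^*_{tj}$, conditioning gives $\E\bigl[\sum_j \tfrac{p_{tj}}{\wh{p}_{tj}^2}\I_{[j\in\P_{t,J_t}]} \,\bigm|\, \F_{t-1}\bigr] = \sum_j \tfrac{p_{tj}p^*_{tj}}{\wh{p}_{tj}^2} \le (1+o(1))\sum_j \tfrac{p_{tj}}{p^*_{tj}} \le (1+o(1))\kappa$ by paragraphs one and two (note $p_{tj}/p^*_{tj}\le p_{tj}/\alpha_j$, so no blow‑up even when $p_{tj}$ is tiny). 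A Freedman bound on the martingale differences $\sum_j \tfrac{p_{tj}}{\wh{p}_{tj}^2}\I_{[j\in\P_{t,J_t}]} - \E[\cdot\mid\F_{t-1}]$ — whose per‑epoch range and conditional variance are again controlled by these same quantities on the good event — then yields $\sum_t\sum_j \tfrac{p_{tj}}{\wh{p}_{tj}^2}\I_{[j\in\P_{t,J_t}]} \le \kappa T + \Ot(\sqrt{T\log(1/\epsilon)})$. For part~(ii): on $\mathcal{E}_T(B)$, which holds with probability $1 - \Ot(\epsilon)$ for $B = \advB$ by the geometric tail bound for the absorbing chain under Assumption~\ref{assumption:transition}, we have $Z_{t,j} \le B$; using Proposition~\ref{prop:est} to compute $\E[\I_{[j\in\P_{t,J_t}]}(Z_{t,j}-B)\mid\F_{t-1}] = p^*_{tj}(l_{t,j}-B)$ gives $\E[\wh{Z}_{t,j}\mid\F_{t-1}] = \frac{l_{t,j}-B}{B}\cdot\frac{p^*_{tj}}{\wh{p}_{tj}} + \frac{\beta}{\wh{p}_{tj}}$. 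Because $\frac{l_{t,j}-B}{B} \in [-1,0]$, replacing $p^*_{tj}/\wh{p}_{tj}$ by $1$ only increases the first term, so $\sum_j p_{tj}\E[\wh{Z}_{t,j}\mid\F_{t-1}] \le \E\bigl[\tfrac{l_{t,J_t}-B}{B}\mid\F_{t-1}\bigr] + \beta\sum_j \tfrac{p_{tj}}{\wh{p}_{tj}} \le \E\bigl[\tfrac{l_{t,J_t}-B}{B}\mid\F_{t-1}\bigr] + (1+o(1))\beta\kappa$; summing over $t$ and controlling the martingale error (per‑epoch conditional variance $\O\bigl((1+\beta\kappa)\sum_j p_{tj}/\wh{p}_{tj}\bigr) = \O(1+\beta\kappa)$ on the good event) by Freedman delivers part~(ii).

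The genuinely delicate step is the second paragraph: $\wh{q}_{t,ij}$ is built from adaptively sampled trajectories of unbounded length, so obtaining a \emph{uniform} (over all $t$ and all $K^2$ pairs) and \emph{multiplicative} lower bound $\wh{p}_{tj} \gtrsim p^*_{tj}$ — strong enough to survive the divisions in $p_{tj}/\wh{p}_{tj}^2$ and in $\wh{Z}_{t,j}$, and to keep the accumulated slack at the $\sqrt{T}$ scale even over the first few epochs where $N_t^+(i)$ is small — is where essentially all of the work lies. The roles of $\beta$ (implicit exploration) and of the shift by $B$ (ensuring $\frac{l_{t,j}-B}{B}\le 0$, so that the ratios above are bounded in the favorable direction) are precisely to render these ratios harmless; everything else is the standard EXP3/FTRL bookkeeping already set up in~(\ref{eq:classic}) together with off‑the‑shelf martingale concentration.
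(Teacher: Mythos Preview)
Your proposal is correct and follows essentially the same route as the paper: isolate the scalar inequality $\frac{p}{\alpha+(1-\alpha)p}\le p+\frac{1-\sqrt{\alpha}}{1+\sqrt{\alpha}}$ (the paper's Proposition~\ref{prop:quick-bound}) to obtain $\sum_j p_{tj}/\wt{p}_{tj}\le\kappa$, approximate $\wh{p}_{tj}$ by its population version $\wt{p}_{tj}$ (your $p^*_{tj}$) via concentration of $\wh{q}_{t,ij}$ (the paper's Lemma~\ref{lem:concen-q}), and then apply a martingale inequality to pass from the conditional‑mean bound to the high‑probability bound on the full sum. The only cosmetic differences are that the paper uses Azuma where you invoke Freedman, and that the paper handles the $\wh{p}_{tj}\approx\wt{p}_{tj}$ step as an additive $\mathcal{O}(1/t)$ correction followed by a Taylor expansion rather than your multiplicative $(1+o(1))$ framing; neither affects the argument or the final bound.
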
 

\begin{proof}[Proof of Lemma \ref{lem:simple-con}] 
    
    % We now sketch the proof idea for Lemma \ref{lem:simple-con}. For a node $j$ with positive hitting centrality $\alpha_j$, we have $ N_t^+ (j) \ge \mathcal{O} \( \alpha_j t \) $ with high probability (Lemma \ref{lem:event}). Also, i
    Firstly, it holds with high probability that  $ \wh{p}_{tj} =  \wt{p}_{tj} + \mathcal{O} \( \frac{1}{t} \) $ (Lemma \ref{lem:concen-q} in the Appendix). Therefore $ \frac{p_{tj}}{ \wh{p}_{tj}  } = \frac{p_{tj}}{ \wt{p}_{tj} + \mathcal{O} \( \frac{1}{t} \)  } = \frac{p_{tj}}{ \wt{p}_{tj} } + \mathcal{O} \( \frac{1}{t} \) $. 
    
    Let $\E_t$ denote the expectation conditioning on all randomness right before the $t$-th epoch.    Since $p_{t,j}$ and $  \wt{p}_{tj}$ are known before the $t$-th epoch, we have $ \E_t \[ \frac{ p_{tj} }{ \wh{p}_{tj}^2 } \I_{ \[ j \in \P_{t, J_t} \] } \] = \frac{ p_{tj} }{ \wh{p}_{tj}^2 } \E_t \[  \I_{ \[ j \in \P_{t, J_t} \] } \] =  \frac{p_{tj}}{ \( \wt{p}_{tj} + \mathcal{O} \( \frac{1}{t} \) \)^2 } \wt{p}_{tj} = \frac{p_{tj}}{ \wt{p}_{tj} } + \mathcal{O} \( \frac{1}{t} \) $. Thus the random variables 
    \begin{align*}
        \left\{ \frac{ p_{tj} }{ \wh{p}_{tj}^2 } \I_{ \[ j \in \P_{t, J_t} \] } - \frac{p_{tj}}{ \wt{p}_{tj} } + \mathcal{O} \( \frac{1}{t} \) \right\}_t
    \end{align*}
    form a martingale difference sequence for any $j$.

    Applying the Azuma's inequality to this martingale difference sequence gives, 
    with porbability exceeding $1 - \mathcal{O} (\epsilon)$, 
    \begin{align} 
        \sum_{ t } \sum_j \frac{ p_{tj} }{ \wh{p}_{tj}^2 } \I_{ \[ j \in \P_{t, J_t} \] } 
        &\le 
        \sum_{ t } \sum_j \frac{ p_{tj} }{ \wt{p}_{tj} } + \Ot \(  \sqrt{T \log (1/\epsilon)}  \) + \sum_{ t } \sum_j \mathcal{O} \( \frac{1}{t} \) \nonumber \\ 
        &\le 
        \sum_{ t } \sum_j \frac{ p_{tj} }{ \wt{p}_{tj} } + \Ot \(  \sqrt{T \log (1/\epsilon)}  \). \label{eq:for-kappa} 
        % \kappa T + \Ot \(  \sqrt{T \log (1/\epsilon)}  \), 
    \end{align} 
    
    % \textcolor{red}{checkpoint} 
    % where the last inequality uses that 
    
    Since $ \frac{p_{tj}}{ \wt{p}_{tj} } \le p_{tj} + \frac{ 1 - \sqrt{ \alpha_j} }{ 1 + \sqrt{ \alpha_j} } $ (Proposition \ref{prop:quick-bound} in Appendix), we have 
    \begin{align} 
        \sum_{ t } \sum_j \frac{ p_{tj} }{ \wt{p}_{tj} } \le \sum_{ t } \sum_j \( p_{tj} + \frac{ 1 - \sqrt{ \alpha_j} }{ 1 + \sqrt{ \alpha_j} } \) \le \kappa T. \label{eq:for-kappa2}
    \end{align} 
    Combining (\ref{eq:for-kappa}) and (\ref{eq:for-kappa2}) finishes the proof for item $(i)$. 
    
    % ------------------------------------------------------------------------------------------------------------------------
    
    For the second inequality in the lemma statement, we first note that $Z_{t,j} \le B$ with high probability for all $t $ and $j$. Thus we have 
    \begin{align*} 
		\sum_j p_{tj} \wh{Z}_{t,j} 
		=& \; 
		\sum_j p_{tj} \frac{ \frac{ Z_{t,j} - B }{B} \I_{ \[ j \in \P_{t,J_t} \] } + \beta }{ \wh{p}_{tj} } \\ 
		\overset{\textcircled{1}}{=}& \; 
		\sum_j p_{tj} \frac{ \frac{ Z_{t,j} - B }{B} \I_{ \[ j \in \P_{t,J_t} \] } }{ \wt{p}_{tj} } + \sum_j \frac{ \beta p_{tj} }{ \wt{p}_{tj} } + \Ot \( \frac{ 1 }{t} \) \\ 
		\le& \; 
		\beta \kappa + \Ot \( \frac{ 1 }{t} \), 
	\end{align*} 
    where \textcircled{1} uses a Taylor expansion (to replace $ \wh{p}_{tj} $ with $\wt{p}_{tj}$), and the last line uses  $ \frac{p_{tj}}{ \wt{p}_{tj} } \le p_{tj} + \frac{ 1 - \sqrt{ \alpha_j} }{ 1 + \sqrt{ \alpha_j} } $ (Proposition \ref{prop:quick-bound} in the Appendix) and that $ Z_{t,j} $ is smaller than $B$ with high probability. Also, $ l_{t,j} $ is smaller than $B$ for all $t,j$. Since $\E_t \left[ \sum_{j} p_{tj} \frac{ \frac{Z_{t,j} - B}{B} \I_{ \[ j \in \P_{t,J_t} \] } }{ \wt{ p }_{tj} } \right] = \E_t \[ \frac{l_{t,J_t} - B}{B} \] $, we know $ \left\{ \( \sum_j p_{tj} \wh{Z}_{t,j} -  \frac{l_{t,J_t} - B}{B } + \beta \kappa + \Ot \( \frac{ 1 }{t} \) \) \right\}_t $ is a super-martingale difference sequence. 
	We can now apply the Azuma's inequality to this super-martingale sequence and get 
	\begin{align} 
		\sum_t \sum_j p_{tj} \wh{Z}_{t,j} - \sum_t \frac{ l_{t,J_t} - B }{B} \le  \kappa \beta T + \Ot \( \sqrt{  \beta \kappa T   } \) , 
	\end{align} 
	with high probability. 
\end{proof}

% --------------------------------------------------------------------------------------------------------------------------------------------------------------------

\begin{proof}[Proof of Theorem \ref{thm:adv}] 
    By the above results, we have 
    \begin{align} 
        \sum_{t=1}^T l_{t,i} - \sum_{t=1}^T l_{t,J_t}
        =& \;  
        B \( \sum_t \frac{l_{t,i} - B}{B} - \sum_t \frac{l_{t,J_t} - B}{B} \) \nonumber \\
        \le& \;  
        B \( \sum_t \frac{l_{t,i} - B}{B} - \sum_t \wh{Z}_{t,i} \) \nonumber \\
        & + B \( \sum_t \sum_j p_{tj} \wh{Z}_{t,j} - \sum_t \frac{l_{t,J_t} - B}{B} \) \nonumber  \\ 
        & + B \( \wh{Z}_{t,i} - \sum_t \sum_j p_{tj} \wh{Z}_{t,j} \) \nonumber \\
        \overset{(i)}{\le}& \; 
        B \frac{\log (T / \epsilon^2 ) }{\beta} + B \kappa \beta T + \wt{\mathcal{O}} \( B \sqrt{\beta \kappa T } \) \nonumber \\
        &+ 
        \frac{ B \log K }{\eta } + B \eta \sum_t \sum_j \frac{p_{tj}}{\wt{p}_{tj}^2 } \I_{ \[ j \in \P_{t,J_t} \] } \nonumber \\
        \overset{(ii)}{\le}& \; 
        B \frac{\log (T / \epsilon^2 ) }{\beta} + B \kappa \beta T + \wt{\mathcal{O}} \( B \sqrt{\beta \kappa T } \) \nonumber \\
        &+ 
        \frac{ B \log K }{\eta } + B \eta \kappa T + \wt{\mathcal{O}} \( B \eta \sqrt{ (1 + \beta \kappa) T } \) , \nonumber 
    \end{align} 
    where $(i)$ uses Lemma \ref{lem:bound}, Lemma \ref{lem:simple-con}, and (\ref{eq:classic}), and $(ii)$ uses Lemma \ref{lem:simple-con}. 
\end{proof}

\usetikzlibrary[topaths]
% A counter, since TikZ is not clever enough (yet) to handle
% arbitrary angle systems.
\newcount\mycount

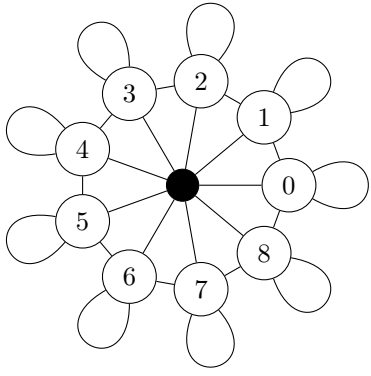
\begin{figure}
% {r}{6cm}
    \centering
    \begin{tikzpicture}[scale = 0.7][transform shape]
    
    \node[draw,circle,inner sep=0.15cm] (1) at (0:2cm) {0};
    \node[draw,circle,inner sep=0.15cm] (2) at (40:2cm) {1};
    \node[draw,circle,inner sep=0.15cm] (3) at (80:2cm) {2};
    \node[draw,circle,inner sep=0.15cm] (4) at (120:2cm) {3};
    
    \node[draw,circle,inner sep=0.15cm] (5) at (160:2cm) {4};
    \node[draw,circle,inner sep=0.15cm] (6) at (200:2cm) {5};
    
    \node[draw,circle,inner sep=0.15cm] (7) at (240:2cm) {6};
    \node[draw,circle,inner sep=0.15cm] (8) at (280:2cm) {7};
    
    \node[draw,circle,inner sep=0.15cm] (9) at (320:2cm) {8};
    
    \node[draw,circle,inner sep=0.15cm, fill=black] (tar) at (320:0) {};
    
    \path (1) edge (2) {};
    \path (2) edge (3) {};
    \path (3) edge (4) {};
    \path (4) edge (5) {};
    \path (5) edge (6) {};
    \path (6) edge (7) {};
    \path (7) edge (8) {};
    \path (8) edge (9) {};
    \path (9) edge (1) {};
    
    \path (1) edge (tar) {};
    \path (2) edge (tar) {};
    \path (3) edge (tar) {};
    \path (4) edge (tar) {};
    \path (5) edge (tar) {};
    \path (6) edge (tar) {};
    \path (7) edge (tar) {};
    \path (8) edge (tar) {};
    \path (9) edge (tar) {};
    
    \draw (1) to [out=30,in=-30,looseness=8] (1) {};
    \draw (2) to [out=70,in=10,looseness=8] (2) {};
    \draw (3) to [out=110,in=50,looseness=8] (3) {};
    \draw (4) to [out=150,in=90,looseness=8] (4) {};
    \draw (5) to [out=190,in=130,looseness=8] (5) {};
    \draw (6) to [out=230,in=170,looseness=8] (6) {};
    \draw (7) to [out=270,in=210,looseness=8] (7) {};
    \draw (8) to [out=310,in=250,looseness=8] (8) {};
    \draw (9) to [out=350,in=290,looseness=8] (9) {};
    
    \end{tikzpicture}
    % \vspace{-3mm}
    \caption{
    The network structure for experiments. The dark node at the center is the absorbing node $*$, and nodes labelled with numbers are transient nodes. Nodes without edges connecting them visits each other with zero probability. 
    \label{fig:network}}
    % \vspace*{-3cm}
\end{figure}

% \vspace{-3pt}
\section{Experiments}
\label{sec:exp}
% \vspace{-3pt}
We deploy our algorithms on a problem with 9 transient nodes. The results for Algorithm \ref{alg:adv} is in Figure \ref{fig:adv}, and the results for Algorithm \ref{alg:opt} is deferred to the Appendix. The evaluation of Algorithm \ref{alg:adv} is performed on a problem instance with the transition matrix specified in (\ref{eq:exp-adv}). 
% For stochastic setting (left subfigure in Figure \ref{fig:exp}), we have $l_{ij} = 1 $ for $i \in [9] $ and $j \in [9] \cup \{ * \}$. The transition probabilities among transient nodes are in Eq. \ref{eq:exp1}. 
% \textcolor{red}{rewrite the setting description. }
% For adversarial setting (right subfigure in Figure  \ref{fig:exp}), the transition probabilities among transient nodes are in Eq. \ref{eq:exp2}. 
    \begin{align}
    % m_{ij} &= 
    % \begin{cases}
    %     0.6, \text{ if } i = j = 1 ,  \\
    %     0.4, \text{ if } i = j \text{ and  } i \neq 1 ,  \\ 
    %     0.1, \text{ if } i = j \pm 1 \mod 9 , \\
    %     0, \text{ otherwise.} 
    % \end{cases} \label{eq:exp1} \\ 
    m_{ij} &= 
    \begin{cases}
        0.3, \text{ if } i = j ,  \\
        0.1, \text{ if } i = j \pm 1 \mod 9,  \\  
        0, \text{ otherwise.} 
    \end{cases}  \label{eq:exp-adv}
    \end{align} 
    
    The edge lengths are sampled from Gaussian distributions and truncated to between 0 and 1. Specifically, for all $t = 1,2,\cdots, T$, 
    % \begin{align*}
    $l_{ij}^{(t)} =
    \begin{cases}
    \clip_{[0,1]} \( W_t + 0.5 \), \text{ if } i = 0 \text{ and } j = * \\ 
    \clip_{[0,1]} \( W_t \), \text{ if } i \neq 0 \text{ and } j = * \\ 
    1 , \text{ otherwise}
    \end{cases} , $ 
    % \end{align*}
    where $\clip_{[0,1]} (z) $ takes a number $z$ and clips it to $[0,1]$, $W_t \overset{i.i.d.}{\sim} \mathcal{ N } \( 0.5, 0.1 \)$, and $*$ denotes the absorbing node. 
    
\begin{figure}[h!]
    \centering
    \includegraphics[scale = 0.7]{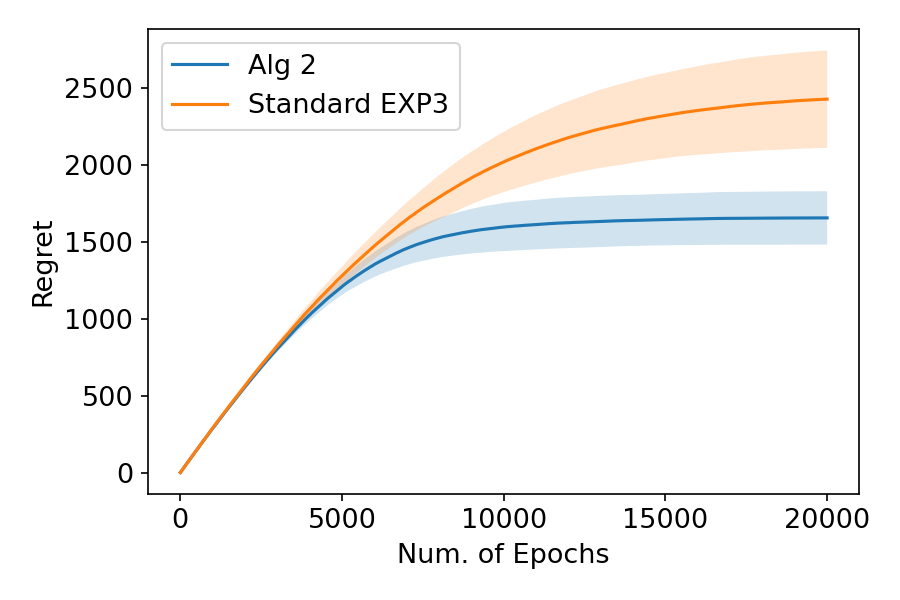}
    \caption{Experimental Results for Algorithm \ref{alg:adv}, compared with the standard EXP3 algorithm. Each solid line plot is an average of 10 runs. The shaded area indicate one standard deviation below and above the average.  \label{fig:adv}}
    
\end{figure} 

As shown in Figure \ref{fig:adv}, EXP3 algorithm performs better when using the, which is consistent with the guarantee provided in Theorem \ref{thm:adv}. For implementation purpose and a fair comparison, the estimator for Algorithm \ref{alg:adv} is $ \wh{Z}_{t,i} = \frac{  {Z}_{t,i}  \I_{ \[ i \in \P_{t,J_{t}} \] } }{ p_{ti} + \sum_{j \neq i} \wh{q}_{t,ji} p_{tj}  } $, and the estimators for EXP3 uses $ \wh{Z}_{t,i} = \frac{  {Z}_{t,i}  \I_{ \[ i = J_t \] } }{ p_{ti}  } $. The learning rate for both algorithms is set to $0.001$.

%     [0, 0, 0, 1/2, 0, 0, 0, 1/2, 0] 

% \section{Discussions} 
% \label{sec:discussion} 
% \input{tex/discussion} 

\section{Conclusion}

% \textcolor{red}{fill in}

In this paper, we study the bandit problem where the feedback is random walk trajectories. This problem is motivated by influence maximization in computational social science. We show that, despite substantially more information can be extracted from the random walk trajectories, such problems are not significantly easier than its standard MAB counterpart in a mini-max sense. Behaviors of UCB and EXP3 are studied.

\bibliographystyle{apalike} 
\bibliography{reference} 

\appendix

\newpage
\onecolumn

\section{Proof Details of Theorems \ref{thm:K-lower}}

% \subsection{Proof of Theorem \ref{thm:adv-lower}} 

% To incorporate more generality, we consider the 
Construct $K+1$ instances specified by graphs: $\J_0 = \( G_t^{0} \)_{t=1}^T, \J_1 = \( G_t^{1} \)_{t=1}^T, \J_2 = \( G_t^{2} \)_{t=1}^T, \cdots, \J_K = \( G_t^{K} \)_{t=1}^T $. For any $t,k$, let all transition probabilities equal $p$. For any $t,k$, let the graph $ G_t^{k} $ have edge lengths $ \( \{ l_{i*}^{(t)} \}_{i \in [K]}, \{ l_{ij}^{(t)} \}_{i,j \in [K]} \) $ chosen randomly. In $\J_0$, all edge lengths are independently sampled from $ \text{Bernoulli} \( \frac{1}{2} \) $. In $\J_k$ ($k\ge 1$), $l_{k*}^{(t)}$ (for all $t \in [T]$, $k \ge 1$) are sampled from $ \text{Bernoulli} \( \frac{1}{2} + \frac{ \epsilon }{1- Kp}\) $, and all other edge lengths are sampled from $ \text{Bernoulli} \( \frac{1}{2} \) $. The proof is presented in three steps: Step 1 computes the KL-divergence using similar arguments discussed in the main text. Steps 2 \& 3 use standard argument for general lower bound proofs. 

% By Pinsker's inequality, we have 
\textbf{Step 1: compute the KL-divergence between $\J_0$ and $\J_k$}. 
% We first compute the KL-divergence between $\J_0$ and $\J_k$. 
Let $\Q_{t,j}^{(k)}$ span the probability of playing $ j $ at $t$ in instance $\J_k$. 
By chain rule, we have, for any $k = 2,\cdots, K$, 
\begin{align} 
    D_{KL} \( \Pr_{\J_0,\pi} \| \Pr_{\J_k,\pi} \) = \sum_{t=1}^T \sum_{j \in [K]} \Pr_{\J_0,\pi} \( J_t = j \) D_{KL} \( \Q_{t, j}^{(0)} \| \Q_{t, j}^{(k)} \). \label{eq:adv-kl-decomp} 
\end{align} 

Let $ X_0, L_1, X_1, L_2, \cdots $ be the nodes and edge length of each step in the trajectory after playing a node. The sample space of $\Q_{t, j}^{(k)}$ is spanned by $ X_0, L_1, X_1, L_2, \cdots $. 

By Markov property, we have, for all $i,j \in [K]$ and $s \in \mathbb{N}_+$, 
\begin{align} 
    &\Q_{t,i}^{(k)} \( L_{s+1} , X_{s+1}, L_{s+2}, X_{s+2}, \dots \rvert X_s = j \) = \Q_{t,j}^{(k)} , \quad \forall k = 0,1,2,\cdots, K.  \nonumber 
    % & \Q_{t,j}^{} \( L_{s+1} , X_{s+1}, L_{s+2}, X_{s+2}, \dots \rvert X_s = j \) = \Q_{t,j}'. \nonumber
    % , \quad \forall k \in \mathbb{N}_+. 
\end{align} 

Thus by applying the chain rule to $\Q_{t,j}^{(k)}$, we have 
\begin{align} 
    &D_{KL} \( \Q_{t,i}^{(0)} \| \Q_{t,i}^{(k)} \) \nonumber \\ 
    =& D_{KL} \( \Q_{t,i}^{(0)} (X_1, L_1) \| \Q_{t,i}^{(k)} (X_1, L_1) \) \nonumber \\ 
    &+ \sum_{x \in [K]} \sum_{l \in \{ 0 , 1 \} } \Pr \( X_1 =  x, L_1  =  l \)  \nonumber \\ 
    &\cdot D_{KL}  \( \Q_{t,i}^{(0)} (X_2, L_2,  ... \rvert X_1 = x, L_1  =  l ) \|  \Q_{t,i}^{(k)} ( X_2,  L_2,  ... \rvert X_1 =  x, L_1 =  l ) \) .  \label{eq:recurse-kl-adv-0} 
\end{align} 

Recall that the edge lengths are selected independent of other randomness. Thus we have 
\begin{align*} 
    \Q_{t,i}^{(k)} (X_2, \hspace*{-2pt}  L_2, \hspace*{-2pt}  ... \rvert X_1 \hspace*{-2pt}  = \hspace*{-2pt} x, L_1 \hspace*{-2pt} = \hspace*{-2pt} l ) 
    = 
    \Q_{t,i}^{(k)} (X_2, \hspace*{-2pt}  L_2, \hspace*{-2pt}  ... \rvert X_1 \hspace*{-2pt}  = \hspace*{-2pt} x ) , \quad k = 0,1,2, \cdots, K. 
\end{align*}

Thus (\ref{eq:recurse-kl-adv-0}) simplifies to 
\begin{align} 
    & \;
    D_{KL} \( \Q_{t,i}^{(0)} \| \Q_{t,i}^{(k)} \) \nonumber \\
    =&\;  
    D_{KL} \( \Q_{t,i}^{(0)} (X_1, L_1) \| \Q_{t,i}^{(k)} (X_1, L_1) \) + \sum_{j \in [K]} m_{ji} D_{KL} \( \Q_{t,j}^{(0)}  \| \Q_{t,j}^{(k)} \), 
    \label{eq:recurse-kl-adv}  
\end{align} 
where $m_{ji}$ ($m_{ji} = p$) is the probability of visiting $ j $ from $i$.

% Let $f$ be the $ p.d.f.$ of $ \mathcal{N} \( \frac{1}{2}, \sigma^2 \) $ truncated to $[0,1]$ . Let $f^*$ be the $p.d.f.$ of $ \mathcal{N} \( \frac{1}{2} + \frac{2 \epsilon }{1 - Kp}, \sigma^2 \) $ clipped to $[0,1]$. 
% Let $\phi$ (resp. $\Phi$) be the $p.d.f.$ (resp. $c.d.f.$) of the standard normal distribution. 
We have, for $ k\ge 1$, 
\begin{align}
    &\; 
    D_{KL} \( \Q_{t,i}^{(0)} (X_1, L_1) \| \Q_{t,i}^{(k)} (X_1, L_1) \) \nonumber \\ 
    % \le \frac{}{2 (1 - Kp) \sigma^2 } \\ 
    % =& \; 
    % \frac{ 1 - Kp }{2} \log \frac{ (1-Kp) / 2 }{ (1-Kp)  \( \frac{1}{2} +  \frac{\epsilon}{ 1 - Kp } \) } + \frac{ 1 - Kp }{2} \log \frac{ (1-Kp) / 2 }{ (1-Kp)  \( \frac{1}{2} - \frac{\epsilon}{ 1 - Kp } \) } \nonumber \\ 
    =& \; 
    % \frac{\epsilon^2}{1 - Kp} , \label{eq:recurse-kl-adv2} 
    \begin{cases} 
        \frac{ 1 - Kp }{2} \log \frac{ (1-Kp) / 2 }{ (1-Kp)  \( \frac{1}{2} +  \frac{\epsilon}{ 1 - Kp } \) } + \frac{ 1 - Kp }{2} \log \frac{ (1-Kp) / 2 }{ (1-Kp)  \( \frac{1}{2} - \frac{\epsilon}{ 1 - Kp } \) }, & \text{if } i = k, \\
        0,  &\text{otherwise}. 
    \end{cases} 
    \nonumber \\ 
    \le& \; 
    \begin{cases} 
        \frac{\epsilon^2}{1 - Kp}, & \text{if } i = k, \\ 
        0,  &\text{otherwise}. 
    \end{cases} 
    \label{eq:recurse-kl-adv2} 
    % 
    % \int_0^1 (1-Kp) f (z) \log \frac{ ( 1 - Kp ) f (z) }{ (1-Kp)f^*(z) } \; dz + K p \int_0^1 f (z) \log \frac{ p f (z) }{ p f (z) } \; dz  \nonumber \\ 
    % =& \; 
    % (1-Kp) \int_0^1 f (z) \log \frac{ f (z) }{ f^*(z) } \; dz\nonumber  \\ 
    % =& \; 
    % (1 - Kp) 
    % D_{KL} \( \mathcal{N} \( \frac{1}{2}, \sigma^2 \) \big\rvert \clip , \mathcal{N} \( \frac{1}{2} + \frac{2 \epsilon}{ 1- Kp}, \sigma^2 \) \big\rvert \clip \)  \nonumber \\ 
    % \le& \; 
    % (1 - Kp) 
    % D_{KL} \( \mathcal{N} \( \frac{1}{2}, \sigma^2 \) , \mathcal{N} \( \frac{1}{2} + \frac{2 \epsilon}{ 1- Kp},  \sigma^2 \) \)  \label{eq:use-monotonicity} \\ 
    % =& \; 
    % \frac{ 2 \epsilon^2 }{ (1 - Kp) } , \nonumber 
\end{align} 
where the last line uses that $ x - \frac{x^2}{2} \le \log (1 + x) \le x $ for all $x \in [0,1]$. 

% ----------------------

Combining (\ref{eq:recurse-kl-adv}) and (\ref{eq:recurse-kl-adv2}) gives
\begin{align}
    D_{KL} \( \Q_{t,i}^{(0)} \| \Q_{t,i}^{(k)}  \) 
    \le& \; 
    % \frac{\epsilon^2}{1 - Kp}  + \frac{ p \epsilon^2 }{ (1 - Kp)^2 } . 
    \begin{cases} 
        \frac{\epsilon^2}{1 - Kp}  + \frac{ p \epsilon^2 }{ (1 - Kp)^2 }, & \text{if } i = k, \\ 
        \frac{ p \epsilon^2 }{ (1 - Kp)^2 },  &\text{otherwise}. 
    \end{cases} 
\end{align}

% ----------------------

Plugging the above results into (\ref{eq:adv-kl-decomp}) and we get
\begin{align*}
    D_{KL} \( \Pr_{\J_0 , \pi} \| \Pr_{\J_k , \pi} \) 
    =& \;   
    \sum_{t=1}^T \sum_{j \in [K]}  \Pr_{\J_0,\pi} \( J_t = j \) D_{KL} \( \Q_{t, j}^{(0)} \| \Q_{t, j}^{(k)} \) \\ 
    =& \;  
    \sum_j \E_{\J_0 , \pi} \[ N_j \]  D_{KL} \( \Q_{t, j}^{(0)} \| \Q_{t, j}^{(k)} \) \\ 
    \le& \; 
    \frac{\epsilon^2 \E_{\J_0 , \pi} \[ N_k \] }{ 1 - Kp } + \frac{\epsilon^2 p T }{ ( 1 - Kp )^2 }, 
    % =& \; 
    % \frac{\epsilon^2 T }{ 1 - Kp } , 
    % \( 1 + \frac{ p }{ 1 - K p^2 } \) \frac{2 \epsilon^2  }{ (1 - K p ) \sigma^2  } \cdot \frac{T}{K-1} \\ 
    % &+ \frac{4 p \epsilon^2 T }{ (1-K p ) (1 - Kp^2) \sigma^2 }. 
\end{align*}
where $N_j$ is the number of times $j$ is played.

\textbf{Step 2: compute the optimality gap between nodes}. 

Let $H_k$ be the vector of (expected) hitting times in instance $\J_k$. The vector $H_k$ $(k \ge 2)$ satisfies
\begin{align*}
    H_k = M H_k + (1 - Kp) \( \frac{1}{2} \bm{1} + \frac{\epsilon}{1 - Kp} \bm{e}_k  \),  
\end{align*}
where $M$ is the transition matrix among transient nodes (all entries of $M$ are $p$), $\bm{1}$ is the all $\bm{1}$ vector and $\bm{e}_k$ is the $k$-th canonical basis vector. 

Solving the above equation gives, for $k \ge 2$
\begin{align*} 
    H_k = (1 - Kp) \( I + \frac{M}{1 - Kp} \) \( \frac{1}{2} \bm{1} + \frac{ \epsilon}{1 - Kp} \bm{e}_k  \) . 
\end{align*}

% In instance $\J_k$ ($k \neq 0$), the expected hitting time of node $k$ is $  $ and the expected hitting time of other nodes is $  $. 
Thus the optimality gap, which is the difference between the hitting time of node $k$ in $\J_k$ and the hitting time of other nodes in $\J_k$ $(k \ge 1)$, is $ \Delta := \epsilon $. 

% \textcolor{red}{checkpoint}

\textbf{Step 3: apply Yao's principle and Pinsker's inequality to finish the proof}.

By Pinsker's inequality, we have $\forall j,k, $ 
\begin{align} 
    \rvert \Pr_{\J_0 , \pi} ( J_t = j ) - \Pr_{\J_k , \pi} ( J_t = j ) \rvert
    \le& \;  
    \sqrt{2 D_{KL} \( \Pr_{\J_0 , \pi} \| \Pr_{\J_k , \pi} \) } .  \label{eq:use-pinsker} 
    % =& \; 
    % \sqrt{2 D_{KL} \( \Pr_{\J_0 , \pi} \| \Pr_{\J_k , \pi} \) } 
\end{align} 

Thus for the regret against $k$ is instance $\J_k$, we have 
\begin{align} 
    &\; \frac{1}{K} \sum_{k=1}^K \E_{\J_k, \pi } \[ \Regadv_k (T) \] \nonumber \\ 
    =& \; 
    \frac{1}{K} \sum_{k=1}^K \sum_t \E_{\J_k, \pi} \[ Y_{k,t } \] - \E_{\J_k, \pi} \[ Y_{ J_t,t } \] \nonumber \\ 
    =& \; 
    \frac{ \epsilon }{K} \sum_{k=1}^K \sum_t \Pr_{\J_k, \pi} ( J_t \neq k ) \tag{by the Wald's indentity} \\ 
    =& \; 
    \epsilon T - \frac{\epsilon }{K} \sum_{k=1}^K \sum_t  \Pr_{\J_k, \pi} \( J_t = k \) \nonumber \\ 
    \ge& \; 
    \epsilon T - \frac{\epsilon }{K} \sum_{k=1}^K \sum_t \Pr_{\J_0, \pi} ( J_t = k ) - \frac{\epsilon }{K} \sum_{k=1}^K \sum_t \sqrt{ 2 D_{KL} \( \P_{\J_0, \pi } \| \P_{\J_k, \pi } \) } \tag{by Eq. \ref{eq:use-pinsker}} \\ 
    \ge& \; 
    \frac{(K-1) \epsilon T }{K} - \epsilon^2 T \sqrt{ \frac{1}{K} \sum_{k=1}^K \( \frac{ 2 \E_{\J_0, \pi} \[ N_k \] }{1 - Kp} + \frac{2 p T }{ (1 - Kp)^2 } \) } \tag{by Jensen's inequality} \\ 
    =& \; 
    \frac{(K-1) \epsilon T }{K} - \epsilon^2 T \sqrt{ \frac{ 2 T }{K ( 1 - Kp ) } + \frac{2 p T }{ (1 - Kp)^2 } } . \label{eq:tag1}
\end{align} 
% where (\ref{}) use Jensen's inequality. 

Now we set $p = \frac{1}{2K}$ and $ \epsilon = \frac{1}{4 \sqrt{2}} \frac{K-1}{ {K}} \sqrt{ \frac{ K }{T} } $ and (\ref{eq:tag1}) gives 
\begin{align*} 
    \frac{1}{K} \sum_{k=1}^K \E_{\J_k, \pi } \[ \Regadv_k (T) \] 
    \ge 
    \frac{1}{8\sqrt{2}} \sqrt{KT} 
\end{align*} 

Thus we have 
\begin{align*}
    \max_k \E_{\J_k, \pi} \[ \Regadv_k (T) \] 
    \ge& \;  
    \frac{1}{K} \sum_{k=1}^K \E_{\J_k, \pi} \[ \Regadv_k (T) \] \\ 
    \ge& \; 
    \frac{1}{8\sqrt{2}} \sqrt{KT} . 
\end{align*}
% which concludes the proof. 

% \textcolor{red}{Now it remains to show that the hitting centralities are bigger than an absolute constant.???} In any problem instance, starting from any node $i$, the probability of never visiting another node $j$ is 
% \begin{align*}
%     (1 - Kp) \sum_{k=0}^\infty \( (K-1)p \)^k = \frac{1 - Kp}{ 1 - (K-1)p } . 
% \end{align*} 
% Taking the complement of this event gives that the hitting centrality of any node is 
% \begin{align} 
%     \alpha = 1 - \frac{1 - Kp}{ 1 - (K-1)p }. 
% \end{align} 
% Plugging in $ p = \frac{1}{2K} $ gives $ \alpha = 1 - \frac{1 - \frac{1}{2}}{ 1 - \frac{1}{2} + \frac{1}{2K} } = \frac{}{} $

\section{Proof Details for Section \ref{sec:algs}}
\label{app:pf-adv}

We use the following notations for simplicity. 
	1. We write $\wt{p}_{tj} = p_{tj} + \sum_{i\neq j} q_{ij} p_{ti} $, and $\wh{p}_{tj} = p_{tj} + \sum_{i\neq j} \wh{q}_{t,ij} p_{ti} $. 
	2. We use $\mathcal{F}_t$ to denote the $\sigma$-algebra generated by all randomness up to the end of epoch $ t$. 
		We use $ \mathcal{F}_{t,i} $ to denote the $\sigma$-algebra of all randomness up to the first occurrence of node $i$ in epoch $t$ (or end of epoch $t$ if $i$ is not visited in epoch $t$). We use $\E_t$ to denote the expectation conditioning on $\mathcal{F}_t$, i.e., $\E_t \[ \cdot \] = \E \[ \cdot \rvert \mathcal{F}_t \]$. 
	3. Unless otherwise stated, we use $\sum_t$ and $\sum_j$ as shorthand for $\sum_{t=1}^T$ and $\sum_{j \in [K]}$, respectively.

In Appendix \ref{app:add}, we state some preparation properties needed for proving Lemma \ref{lem:bound}. Proposition \ref{prop:quick-bound} is also included in this part. In Appendix \ref{app:bound}, we provide a proof for Lemma \ref{lem:bound}.

\subsection{Additional Properties}
\label{app:add}

As Proposition \ref{prop:est} suggests, number of times a node is visited linearly accumulates with number of epochs $ t $. 
We state this observation below in Lemma \ref{lem:event}. 
\begin{lemma} 
\label{lem:event}
For $v \in V$ and $t$, it holds that
    $\Pr \Big( {N}_t^+ (v) - {N}_t (v) -  \alpha_{v} \( t -  {N}_t (v) \) \ge - \lambda \Big) \le e^ { - \frac{ \lambda^2}{2 t} } . $ 
\end{lemma}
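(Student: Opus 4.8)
The plan is to read the claim as a one-sided Azuma--Hoeffding concentration bound for a sum of at most $t$ bounded martingale differences whose conditional drift is governed by the hitting centrality $\alpha_v$. First I would collapse the quantity inside the probability into a single sum over epochs. Writing out the definitions (and noting that the $\max\{\cdot,1\}$ truncations in $N_t(v), N_t^+(v)$ and the $t$-versus-$(t-1)$ off-by-one only shift $\lambda$ by an absolute constant, which I absorb), I have $N_t^+(v) - N_t(v) = \sum_{s<t} \( \I_{[v \in \P_{s,J_s}]} - \I_{[J_s=v]} \)$ and $t - N_t(v) = \sum_{s<t} \I_{[J_s \ne v]}$. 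Hence the event in the statement is, up to this constant shift, a one-sided tail event for $S_t := \sum_{s<t} Z_s$, where $Z_s := \I_{[v \in \P_{s,J_s}]} - \I_{[J_s=v]} - \alpha_v \, \I_{[J_s \ne v]}$.

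Next I would compute the conditional drift of $Z_s$. Let $\mathcal{F}_{s-1}$ denote all randomness up to the end of epoch $s-1$, and condition additionally on the node $J_s$ played at epoch $s$. If $J_s = v$, then $v \in \P_{s,v}$ deterministically (the walk starts at $v$), so $Z_s = 1 - 1 - 0 = 0$. If $J_s = u \ne v$, then by the Markov property (Proposition \ref{prop:est}) the probability that a walk started at $u$ covers $v$ equals $q_{uv} := \Pr(v \in \P_{s,u})$ \emph{irrespective of the past}, and Definition \ref{def:centrality} gives $q_{uv} \ge \alpha_v$; therefore $\E\[ Z_s \mid \mathcal{F}_{s-1}, J_s = u \] = q_{uv} - \alpha_v \ge 0$. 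Consequently $\{ Z_s - \E\[ Z_s \mid \mathcal{F}_{s-1}\] \}_{s<t}$ is a martingale difference sequence, and the nonnegative drift can be dropped in the relevant one-sided direction.

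I would then control the increments: the combination $\I_{[v \in \P_{s,J_s}]} - \I_{[J_s=v]}$ lies in $\{0,1\}$ and $\alpha_v \I_{[J_s \ne v]} \in [0,1]$, so each centered difference satisfies $\rvert Z_s - \E\[ Z_s \mid \mathcal{F}_{s-1}\] \rvert \le 1$. With at most $t$ summands, the sum of squared increment bounds is at most $t$, so the Azuma--Hoeffding inequality yields $\Pr\( S_t - \E S_t \ge \lambda \) \le \exp(-\lambda^2/(2t))$ in the appropriate direction; combined with the sign of the drift and the constant shift from the first step, this produces exactly the stated bound $\exp(-\lambda^2/(2t))$.

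The main obstacle I anticipate is the bookkeeping that aligns the event as written with a genuine one-sided Azuma tail: one must track the direction of the drift (centrality supplies $q_{uv} \ge \alpha_v$, making $S_t$ a submartingale) and verify that the constant shifts coming from the $\max\{\cdot,1\}$ truncations and the off-by-one neither flip the relevant tail nor degrade the exponent, so that the increment bound remains $1$ and the denominator in the exponent is exactly $2t$. Once the martingale is set up and its increments are bounded, the concentration itself is a routine application of Azuma--Hoeffding.
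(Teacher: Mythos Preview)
Your proposal is correct and follows essentially the same approach as the paper: both decompose $N_t^+(v)-N_t(v)$ as a sum of per-epoch indicators $\I_{[v\in\P_{s,J_s}\setminus\{X_{s,0}\}]}$, lower-bound the conditional expectation of each summand by $\alpha_v\,\I_{[J_s\neq v]}$ via the definition of hitting centrality, and then invoke one-sided Azuma on the resulting (super)martingale with unit-bounded increments. Your explicit conditioning on $J_s$ and your flagging of the tail direction as the main bookkeeping issue are both apt; the paper's proof is terser but identical in substance.
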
 
\begin{proof}
    Recall $\P_{t,J_t} = \( X_{t,0}, L_{t,1}, X_{t,1}, L_{t,2}, X_{t,2}, \dots,  L_{t,H_{t,J_t}}, X_{t,H_{t,J_t}} \)$ is the trajectory for epoch $t$ and $ X_{i,0} $ is the node played at epoch $i$. For a fixed node $v \in V$, consider the random variables $ \left\{ \I_{ \[ v \in \P_{t, J_t} \setminus \{ X_{t,0} \} \] } \right\}_t $, which is the indicator that takes value 1 when $v$ is covered in path $\P_{t,J_t}$ but is not played at $t$. 
    From this definition, we have 
    \begin{align*}
        \sum_{k=1}^t \I_{ \[ v \in \P_{k, J_k} \setminus \{ X_{k,0} \} \] } = {N}_t^+ (v) - {N}_t (v).
    \end{align*}
    From definition of $ \alpha_v $, we have 
    \begin{align*}
        \E \[ {N}_t^+ (v) - {N}_t (v) \] =  \E \[ \sum_{k=1}^t \I_{ \[ v \in \P_i \setminus \{ X_{k,0} \} \] } \] 
    % = \E \[ {N}_t^+ (v) \] - \E \[ {N}_t (v) \]  
    \ge \alpha_{v} \( t - \E \[ {N}_t (v) \] \) .
    \end{align*}
    
    Thus by one-sided Azuma's inequality, we have for any $\lambda > 0$, 
    \begin{align}
        \Pr \Big( {N}_t^+ (v) - {N}_t (v) -  \alpha_{v} \( t -  {N}_t (v) \) \ge - \lambda \Big) \le \exp \( - \frac{ \lambda^2 }{2 t } \). 
    \end{align}
\end{proof}

\begin{lemma}
	\label{lem:q-mean-var}
	For any $t, i,j$, it holds that 
% 	\begin{align}
		$ \mathbb{V} \[ \wh{q}_{t,ij} \] = \Ot \( \frac{1}{t} \).$ 
% 	\end{align}
\end{lemma}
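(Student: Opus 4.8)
\emph{Proof plan.} Note first that $\wh{q}_{t,ij}\in[0,1]$: its numerator $\sum_{s=1}^{t-1}\I_{s,ij}$ counts a subset of the $N_t^+(i)$ epochs whose trajectory covers $i$. Hence for any constant $c$ one has $\V[\wh q_{t,ij}] \le \E[(\wh q_{t,ij}-c)^2]$, and it suffices to prove the stronger statement $\E[(\wh q_{t,ij}-q_{ij})^2] = \Ot(1/t)$, where $q_{ij}=\Pr(j\in\P_{t,i})$. The argument rests on three ingredients: (a) a martingale-difference decomposition of the numerator around $q_{ij}N_t^+(i)$; (b) Azuma's inequality; and (c) the linear growth $N_t^+(i)=\Omega(\alpha_i t)$ supplied by Lemma \ref{lem:event}.

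For (a), set $W_s := \I_{[i\in\P_{s,J_s}]}$, so that $\sum_{s=1}^{t-1}W_s$ equals $N_t^+(i)$ up to the harmless clamp ``$1\vee\cdot$''. Condition on $\mathcal{F}_{s,i}$, the $\sigma$-algebra of all randomness up to the first occurrence of $i$ in epoch $s$. On $\{W_s=1\}$, the strong Markov property --- the same principle underlying Proposition \ref{prop:est}, applied at the first hitting time of $i$ --- says that the remainder of the trajectory is a fresh copy of $\P_{t,i}$, independent of $\mathcal{F}_{s,i}$; in particular the $i$-rooted sub-walk covers $j$ with conditional probability exactly $q_{ij}$, so $\E[\I_{s,ij}\mid\mathcal{F}_{s,i}] = q_{ij}W_s$. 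Since $\mathcal{F}_{s-1}\subseteq\mathcal{F}_{s,i}\subseteq\mathcal{F}_s$, the tower property shows that $\{\I_{s,ij}-q_{ij}W_s\}_{s\ge1}$ is a martingale difference sequence for $\{\mathcal{F}_s\}$, with increments bounded by $1$ in absolute value.

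For (b) and (c): by Azuma's inequality, for any $\delta\in(0,1)$, with probability at least $1-\delta$,
\begin{align*}
\Big|\, \sum_{s=1}^{t-1}\I_{s,ij}\; -\; q_{ij}\sum_{s=1}^{t-1}W_s \,\Big| \;\le\; \sqrt{2(t-1)\log(2/\delta)},
\end{align*}
which on $\{N_t^+(i)\ge1\}$ rearranges to $|\wh q_{t,ij}-q_{ij}| \le \sqrt{2(t-1)\log(2/\delta)}\,/\,N_t^+(i)$. Applying Lemma \ref{lem:event} with $\lambda=\tfrac12\alpha_i t$ gives $N_t^+(i)\ge\tfrac12\alpha_i t$ with probability at least $1-e^{-\alpha_i^2 t/8}$ (here $\alpha_i>0$ by Assumption \ref{assumption:transition}). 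Taking, say, $\delta = t^{-2}$ and intersecting the two events yields $|\wh q_{t,ij}-q_{ij}| = \Ot(t^{-1/2})$ on an event of probability at least $1-e^{-\alpha_i^2 t/8}-t^{-2}$. Since $\wh q_{t,ij},q_{ij}\in[0,1]$, the complementary event contributes at most $e^{-\alpha_i^2 t/8}+t^{-2}$ to $\E[(\wh q_{t,ij}-q_{ij})^2]$, and therefore $\E[(\wh q_{t,ij}-q_{ij})^2]=\Ot(1/t)$, which gives the claim.

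The delicate point --- the step I expect to need the most care --- is (a): the policy may choose $J_{s+1}$ using the \emph{entire} trajectory of epoch $s$, including its portion after the first visit of $i$, so one cannot simply ``condition on which epochs cover $i$'' (that random set of epochs is not a stopping time for the natural filtration) and then invoke independence of the observed sub-walks. Conditioning on $\mathcal{F}_{s,i}$ --- backward-looking information only --- is exactly what repairs this, since the only dependence of $\I_{s,ij}$ on the not-yet-revealed part of the walk is through whether the $i$-rooted sub-walk hits $j$, which is conditionally independent of $\mathcal{F}_{s,i}$ with conditional probability $q_{ij}$. Equivalently, one may phrase the same estimate as a law-of-total-variance argument conditioned on $N_t^+(i)$: given $N_t^+(i)$, $\wh q_{t,ij}$ is an average of that many conditionally i.i.d.\ $\mathrm{Bernoulli}(q_{ij})$ samples, so $\V[\wh q_{t,ij}] \le \tfrac14\,\E\!\left[1/N_t^+(i)\right]$, and Lemma \ref{lem:event} again bounds $\E[1/N_t^+(i)]$ by $\Ot(1/t)$.
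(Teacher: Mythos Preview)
Your proposal is correct, and the paper's proof is exactly the short route you sketch in your final sentence: it writes
\[
\V\!\left[\wh q_{t,ij}\right]\;=\;\sum_{m=1}^{t}\V\!\left[\wh q_{t,ij}\,\middle|\,N_t^+(i)=m\right]\Pr\!\left(N_t^+(i)=m\right)\;\le\;\E\!\left[\frac{1}{N_t^+(i)}\right],
\]
treating the $N_t^+(i)$ observed indicators as conditionally i.i.d.\ $\mathrm{Bernoulli}(q_{ij})$ samples, and then bounds $\E[1/N_t^+(i)]$ by splitting on the Lemma~\ref{lem:event} event $\{N_t^+(i)\ge\alpha t-\sqrt{2t\log(TK/\delta)}\}$ with $\delta=1/t$.

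Your primary route --- Azuma on the martingale differences $\{\I_{s,ij}-q_{ij}W_s\}$, then Lemma~\ref{lem:event} for the denominator, then integrating the resulting high-probability bound --- is a genuinely different decomposition. It buys you the slightly stronger statement $\E[(\wh q_{t,ij}-q_{ij})^2]=\Ot(1/t)$ rather than just the variance, and, more importantly, it makes explicit the step the paper leaves implicit: why the observed indicators can be treated as fresh Bernoulli samples even though the policy is adaptive and the conditioning event $\{N_t^+(i)=m\}$ is not a stopping time. Your $\mathcal{F}_{s,i}$ device is exactly the right way to justify that. One small caveat on your computation of $\E[\I_{s,ij}\mid\mathcal{F}_{s,i}]$: by the paper's definition $\I_{s,ij}$ is zero whenever $j$ already appeared \emph{before} the first visit of $i$, so strictly $\E[\I_{s,ij}\mid\mathcal{F}_{s,i}]=q_{ij}W_s\,\I_{[\,j\notin\text{pre-}i\text{ path}\,]}$; this does not affect the Azuma bound or the $\Ot(1/t)$ conclusion, but it does mean $\wh q_{t,ij}$ need not be centered at $q_{ij}$ itself.
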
 
\begin{proof}
    For the variance, we have 
	\begin{align*}
		\mathbb{V} \[ \wh{q}_{t,ij} \] 
		=& \; 
		\sum_{m=1}^t \mathbb{V} \[ \wh{q}_{t,ij} \bigg\rvert N_t^+ (i) = m \] \Pr \( N_t^+ (i) = m  \) \\
		\le& \;  
		\sum_{m=1}^t \frac{1}{m} \Pr \( N_t^+ (i) = m  \) = \E \[ \frac{1}{N_t^+ (i)} \]. 
	\end{align*}
	By Lemma \ref{lem:event} and a union bound, we know, for any $\delta \in (0,1)$, 
	\begin{align*}
		\Pr \Big( {N}_t^+ (i) \ge \alpha t - \sqrt{2t\log (2TK/\delta)}, \quad \forall i \in [K], \; t \in [T] \Big) \le {\delta}. 
	\end{align*} 
	Thus it holds that
	\begin{align*}
		\E \[ \frac{1}{N_t^+ (i)} \] 
		=& \; 
		\E \[ \frac{1}{N_t^+ (i)} \big\rvert {N_t^+ (i)} \ge \alpha t - \sqrt{2t\log (TK/\delta)} \] \\
		&\quad \cdot 
		\Pr \( N_t^+ (i) \ge \alpha t - \sqrt{2t\log (TK/\delta)} \) \\
		&+ \E \[ \frac{1}{N_t^+ (i)} \big\rvert {N_t^+ (i)} < \alpha t - \sqrt{2t\log (TK/\delta)} \] \\
		&\quad \cdot \Pr \( N_t^+ (i) < \alpha t - \sqrt{2t\log (TK/\delta)} \) \\
		\le& \; 
		\frac{1}{ \max \left\{ 1, \alpha t - \sqrt{2t\log (TK/\delta)} \right\} } + \delta . 
	\end{align*}
	Setting $\delta = \frac{1}{t}$ concludes the proof. 
\end{proof} 

Next, we consider a high probability event and approximate $\wh{q}_{t,ij}$ under this event. 

\begin{lemma}
    \label{lem:concen-q}
    For any $\epsilon \in (0,1)$, let 
    \begin{align*}
        \mathcal{E}_T'  \hspace*{-2pt} :=  \hspace*{-2pt} \Bigg\{  & \left\rvert \wh{q}_{t,ij} - q_{ij} \right\rvert \le \sqrt{ \frac{ 2 \mathbb{V} \[ \wh{q}_{t,ij} \] \log (T/\epsilon) }{ N_{t-1}^+ (i) } } + \frac{ \log (T/\epsilon) }{ 3 N_{t-1}^+ (i) } , \\
        & \quad \; N_t^+ (i)   \hspace*{-2pt} \ge  \hspace*{-2pt} \alpha t \hspace*{-2pt} - \hspace*{-2pt} \sqrt{ t \log (TK/\epsilon) }, \; \forall i,j  \hspace*{-2pt} \in  \hspace*{-2pt} [K] \forall t \in [T] \Bigg\} .
    \end{align*} 
    It holds that $\Pr \( \mathcal{E}_t' \) \ge 1 - \frac{ 2 \epsilon}{T} $ and under $ \mathcal{E}_T' $, for all $i,j \in [K]$ and $t \in [T]$, 
    \begin{align}
        \wh{q}_{t,ij} = q_{ij} \pm \mathcal{O} \( { \frac{ \log (TK/\epsilon ) }{  t} } \) , 
        \text{ and } 
        \wh{p}_{ti} = \wt{p}_{ti} \pm \mathcal{O} \( { \frac{ \log (TK/\epsilon ) }{  t} } \) . 
    \end{align} 
    % where $\alpha = \min_{i \in [K]} \alpha_j$. 
%    \textcolor{red}{use empirical Berstein here.}
\end{lemma}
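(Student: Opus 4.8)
\textbf{Proof plan for Lemma \ref{lem:concen-q}.} The approach is to control the two conditions defining $\mathcal{E}_T'$ separately, combine them by a union bound, and then read off the two $\pm\,\mathcal{O}(\log(TK/\epsilon)/t)$ approximations on the event $\mathcal{E}_T'$ by feeding in Lemma \ref{lem:q-mean-var}. Throughout I use $\log(T/\epsilon)\le\log(TK/\epsilon)$ freely.

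\emph{Step 1: concentration of $\wh q_{t,ij}$.} First I would exhibit the numerator $\sum_{s<t}\I_{s,ij}$ of $\wh q_{t,ij}$ as a martingale-type sum. By the Markov property (Proposition \ref{prop:est}), conditioned on $\mathcal{F}_{s,i}$ — the history up to the first occurrence of $i$ in epoch $s$ — on the event $\{i\in\P_{s,J_s}\}$ the continuation of the trajectory from that first occurrence of $i$ has the law of $\P_{s,i}$; hence $\E[\I_{s,ij}\mid\mathcal{F}_{s,i}]=\I_{[i\in\P_{s,J_s}]}\,q_{ij}$ with $q_{ij}=\Pr(j\in\P_{t,i})$. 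Therefore $\{\I_{s,ij}-\I_{[i\in\P_{s,J_s}]}q_{ij}\}_{s}$ is a martingale difference sequence with increments in $[-1,1]$ and conditional variance at most $q_{ij}(1-q_{ij})$, and summing over $s<t$ and dividing by the number of epochs in which $i$ was covered recovers $\wh q_{t,ij}-q_{ij}$. I would then apply a Bernstein/Freedman-type inequality for martingales to get, for each fixed $i,j,t$, a deviation of the stated form $\sqrt{2\mathbb{V}[\wh q_{t,ij}]\log(T/\epsilon)/N_{t-1}^{+}(i)}+\log(T/\epsilon)/(3N_{t-1}^{+}(i))$. Because the number of effective terms $N_{t-1}^{+}(i)$ is itself random, this step needs a little care: one either conditions on $N_{t-1}^{+}(i)=m$ and union-bounds over $m\le t$, or runs a peeling argument over dyadic ranges of $N_{t-1}^{+}(i)$. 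A union bound over $i,j\in[K]$ and $t\in[T]$, with the deviation parameter taken $\asymp\sqrt{t\log(TK/\epsilon)}$, bounds the failure probability of this first condition by $\Ot(\epsilon/T)$.

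\emph{Step 2: lower bound on $N_t^{+}(i)$, and the union bound.} This is immediate from Lemma \ref{lem:event}: with $\lambda\asymp\sqrt{t\log(TK/\epsilon)}$, the event $N_t^{+}(i)-N_t(i)-\alpha_i(t-N_t(i))\ge-\lambda$ has probability at least $1-\exp(-\lambda^2/2t)$, and on it $N_t^{+}(i)\ge N_t(i)+\alpha_i(t-N_t(i))-\lambda=\alpha_i t+(1-\alpha_i)N_t(i)-\lambda\ge\alpha t-\lambda$, since $\alpha_i\ge\alpha$. A union bound over $i\in[K],t\in[T]$ controls the failure probability of this second condition by $\Ot(\epsilon/T)$. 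Combining the two bounds gives $\Pr(\mathcal{E}_T')\ge 1-\Ot(\epsilon/T)$; tuning the absolute constants in the two deviation parameters yields the stated $1-2\epsilon/T$.

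\emph{Step 3: the approximations on $\mathcal{E}_T'$.} On $\mathcal{E}_T'$ we have $N_{t-1}^{+}(i)\ge\alpha(t-1)-\sqrt{(t-1)\log(TK/\epsilon)}=\Omega(t)$ once $t$ is large enough for the right-hand side to be positive (for the finitely many remaining small $t$, the claimed bound $\mathcal{O}(\log(TK/\epsilon)/t)$ exceeds $1\ge|\wh q_{t,ij}-q_{ij}|$ and is vacuous), and by Lemma \ref{lem:q-mean-var}, $\mathbb{V}[\wh q_{t,ij}]=\Ot(1/t)$. Substituting both into the Bernstein expression, the first term is $\sqrt{\Ot(1/t)\cdot\log(T/\epsilon)/\Omega(t)}=\mathcal{O}(\log(TK/\epsilon)/t)$ and the second is $\log(T/\epsilon)/\Omega(t)=\mathcal{O}(\log(TK/\epsilon)/t)$, so $\wh q_{t,ij}=q_{ij}\pm\mathcal{O}(\log(TK/\epsilon)/t)$. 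Finally, from $\wh p_{tj}-\wt p_{tj}=\sum_{i\neq j}(\wh q_{t,ij}-q_{ij})p_{ti}$ and $\sum_i p_{ti}=1$ we get $|\wh p_{tj}-\wt p_{tj}|\le\max_{i}|\wh q_{t,ij}-q_{ij}|=\mathcal{O}(\log(TK/\epsilon)/t)$, the second assertion.

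\emph{Main obstacle.} The delicate point is Step 1: obtaining a clean Bernstein-type martingale tail for $\wh q_{t,ij}-q_{ij}$ whose variance dependence is sharp enough that, once combined with $\mathbb{V}[\wh q_{t,ij}]=\Ot(1/t)$ and $N_{t-1}^{+}(i)=\Omega(t)$, it collapses to $\mathcal{O}(\log(TK/\epsilon)/t)$, while simultaneously surviving the union bound over $i,j,t$ despite $N_{t-1}^{+}(i)$ being a random (and not stopping-time-independent) quantity. Everything after that is bookkeeping.
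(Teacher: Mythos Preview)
Your proposal is correct and follows essentially the same route as the paper: the paper invokes Bennett's inequality (your Bernstein/Freedman step) for the concentration of $\wh q_{t,ij}$, uses Lemma \ref{lem:event} with a union bound for the lower bound on $N_t^{+}(i)$, and then plugs in $\mathbb{V}[\wh q_{t,ij}]=\Ot(1/t)$ from Lemma \ref{lem:q-mean-var} to collapse the Bernstein expression to $\mathcal{O}(\log(TK/\epsilon)/t)$ and pass to $\wh p_{ti}$. If anything, your treatment of Step 1 is more careful than the paper's, which simply states ``By Bennett's inequality'' without discussing the martingale structure or the random sample size $N_{t-1}^{+}(i)$.
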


\begin{proof}
    By Bennett's inequality, it holds that 
	\begin{align}
		\Pr \( \left\rvert \wh{q}_{t,ij} - q_{ij} \right\rvert \ge \sqrt{ \frac{ 2 \mathbb{V} \[ \wh{q}_{t,ij} \] \log (KT/\epsilon) }{ N_{t-1}^+ (i) } } + \frac{ \log (KT/\epsilon) }{ 3 N_{t-1}^+ (i) } \) \le \epsilon
	\end{align} 
	By Lemma \ref{lem:event} and a union bound, we known $\Pr \( \mathcal{E}_T' \) \ge 1 - 2 \epsilon$. By Lemma \ref{lem:q-mean-var}, we know that $\mathbb{V} \[ \wh{q}_{t,ij} \] = \Ot \( \frac{1}{t} \)$. Thus, under event $\mathcal{E}_T'$, it holds that 
	\begin{align*} 
		\wh{q}_{t,ij} = q_{ij} \pm \Ot \(  \frac{  \log \( T / \epsilon \) }{t } \) , 
	\end{align*} 
	and thus  
	\begin{align*} 
	    \wh{p}_{ti} = \wt{p}_{ti} \pm \Ot \(  \frac{  \log \( T / \epsilon \) }{t } \) .  
	\end{align*} 
\end{proof}

% The following lemma provides properties of the random variable $Z_{t,i}$.
\begin{lemma} 
    \label{lem:e-false} 
	For any $B$, let 
	$ \mathcal{E}_T (B) := \left\{ {Z}_{t,j} \le B \text{ for all } t = 1,2,\cdots, T, \text{ and } j \in [K] \right\}. $ 
	For any $\epsilon \in (0,1)$ and $B = \advB$, it holds that 
	\begin{align*} 
		&\Pr \( \mathcal{E}_T \( B \)\) \ge 1 - \epsilon, \\ 
% 		\quad 
		&\E \[ {Z}_{t,i} \rvert \text{not }  \mathcal{E}_T (B) \] 
    	\le  KB + \frac{K}{(1 - \rho)^2} , \\
    % 	\quad \text{and} \quad 
		&\E \[ {Z}_{t,i}^2 \rvert \text{not }  \mathcal{E}_T (B) \] 
    	\le  KB^2 + \frac{2 K}{(1 - \rho)^3} . 
	\end{align*} 
\end{lemma}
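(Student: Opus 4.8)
The lemma is entirely a computation with the sub-geometric tail of trajectory lengths, so the plan is to first isolate that fact and then read off the three claims. Since every edge has length in $[0,1]$, the extracted sample never exceeds the number of edges of the trajectory it is taken from; writing $H$ for that edge count, Assumption~\ref{assumption:transition} gives $\Pr(H>n)=\Pr(X_n\neq *)\le\|M^n\|_\infty\le\rho^n$, hence $\Pr(Z_{t,j}>x)\le\Pr(H>x)\le\rho^{\lfloor x\rfloor}$ for all $t,j$. Abel summation then yields $\E[H]\le\frac{1}{1-\rho}$, $\E[H^2]\le\frac{1+\rho}{(1-\rho)^2}$, and the truncated moments $\E[H\,\I_{H>B}]\le\rho^{\lfloor B\rfloor}(B+\mathcal{O}(\frac1{1-\rho}))$ and $\E[H^2\,\I_{H>B}]\le\rho^{\lfloor B\rfloor}(B^2+\mathcal{O}(\frac1{(1-\rho)^2}))$, which are all I will need.

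For the first claim, with $B=\advB$ one has $\rho^{\lfloor B\rfloor}\le\rho^{B-1}=\frac1\rho\cdot\frac{(1-\rho)\epsilon}{KT}$; the factor $(1-\rho)$ in the definition of $B$, together with the rounding correction, is there precisely so that $KT\cdot\Pr(Z_{t,j}>B)\le\epsilon$, and a union bound over the $KT$ pairs $(t,j)$ with $t\le T$, $j\in[K]$ gives $\Pr(\mathcal{E}_T(B))=1-\Pr(\exists\,(t,j):Z_{t,j}>B)\ge 1-\epsilon$.

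For the second and third claims, fix $i\in[K]$, $t\le T$, and set $A=\{\text{not }\mathcal{E}_T(B)\}$. Because $t\le T$, the event $\{Z_{t,i}>B\}$ is contained in $A$, so $Z_{t,i}^k\I_A=Z_{t,i}^k\I_A\I_{Z_{t,i}\le B}+Z_{t,i}^k\I_{Z_{t,i}>B}\le B^k\I_A+Z_{t,i}^k\I_{Z_{t,i}>B}$, and therefore
\begin{align*}
\E\!\left[Z_{t,i}^k\mid A\right]\ \le\ B^k+\frac{\E\!\left[Z_{t,i}^k\,\I_{Z_{t,i}>B}\right]}{\Pr(A)}\ \le\ B^k+\frac{\E\!\left[Z_{t,i}^k\,\I_{Z_{t,i}>B}\right]}{\Pr(Z_{t,i}>B)}\,.
\end{align*}
On $\{Z_{t,i}>B\}$ the node $i$ lies on $\P_{t,J_t}$, and by the Markov property $Z_{t,i}$ is there distributed as the hitting time $\L(\P_{t,i})$ independently of the event $\{i\in\P_{t,J_t}\}$ (the reasoning of Proposition~\ref{prop:est}), so the last quotient equals $\E[\L(\P_{t,i})^k\mid\L(\P_{t,i})>B]$. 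I would bound this by a stopping-time argument: let $\tau$ be the first step of the walk from $i$ at which the running sum of edge lengths exceeds $B$, so $\{\tau<\infty\}=\{\L(\P_{t,i})>B\}$ and at time $\tau$ the running sum lies in $(B,B+1]$; by the strong Markov property the remaining length is stochastically dominated by a fresh trajectory length, which is sub-geometric, giving $\E[\L(\P_{t,i})\mid\L(\P_{t,i})>B]\le B+1+\frac1{1-\rho}$ and $\E[\L(\P_{t,i})^2\mid\L(\P_{t,i})>B]\le 2(B+1)^2+\frac{2(1+\rho)}{(1-\rho)^2}$. Loosening these bounds (and using $K\ge 1$) gives the claimed $KB+\frac{K}{(1-\rho)^2}$ and $KB^2+\frac{2K}{(1-\rho)^3}$.

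The one genuinely non-routine point — and the main obstacle — is the conditioning on $A$. Its probability can be far below $\epsilon$; for adversarial edge lengths $\Pr(Z_{t,i}>B)$ may be astronomically small, or even $0$, so dividing a crude bound on $\E[Z_{t,i}^k\I_A]$ by $\Pr(A)$ produces nothing usable. The fix, and the only place the structure of $\mathcal{E}_T(B)$ is used, is the containment $\{Z_{t,i}>B\}\subseteq A$, which makes the ``overshoot'' term $\E[Z_{t,i}^k\,\I_{Z_{t,i}>B}]$ carry precisely the denominator $\Pr(Z_{t,i}>B)$ it needs, reducing everything to the one-variable quantity $\E[\L(\P_{t,i})^k\mid\L(\P_{t,i})>B]$; and that quantity stays $\mathcal{O}(B^k+(1-\rho)^{-k})$ even when the conditioning probability is minuscule, because once the running length just crosses $B$ the geometric tail of the walk prevents it from overshooting by more than $\mathcal{O}(1/(1-\rho))$ in expectation. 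The remaining work is routine Abel-summation bookkeeping.
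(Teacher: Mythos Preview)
Your argument is correct and follows the same template as the paper (sub-geometric tail bound, union bound for the first claim, then a memorylessness argument for the moments), but two of your technical choices are actually cleaner than the paper's. First, for the conditioning on $A=\{\text{not }\mathcal{E}_T(B)\}$ the paper asserts directly $\E[Z_{t,i}\mid A]\le\E[Z_{t,i}\mid Z_{t,i}>B]$, which is not obvious as stated; your split $Z_{t,i}^k\I_A\le B^k\I_A+Z_{t,i}^k\I_{\{Z_{t,i}>B\}}$ together with $\{Z_{t,i}>B\}\subseteq A$ and $\Pr(A)\ge\Pr(Z_{t,i}>B)$ is the rigorous way to reach $B^k+\E[Z_{t,i}^k\mid Z_{t,i}>B]$. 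Second, for the overshoot the paper conditions on the node at the deterministic step $B{+}1$ and sums over $j\in[K]$, which is where the factor $K$ in the statement comes from; your random stopping time $\tau$ at the first crossing of level $B$ handles arbitrary edge lengths in $[0,1]$ more naturally and yields a $K$-free bound $B+1+\frac{1}{1-\rho}$ (resp.\ its square), which you then relax. The only caveat is the final ``loosening'': your bounds $2B+1+\frac{1}{1-\rho}$ and $3B^2+O(B/(1-\rho))$ do not literally sit below $KB+\frac{K}{(1-\rho)^2}$ and $KB^2+\frac{2K}{(1-\rho)^3}$ for $K=2$, so the exact constants in the lemma are not recovered; but the paper's own proof closes with an $\mathcal{O}(\cdot)$, so the statement is meant only up to constants, and at that level your argument is complete.
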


\begin{proof}
	Since all edge lengths are smaller than 1, we have, for any integer $B$, 
    \begin{align} 
        % \Pr \( Z_{t,i} > B \) &\le 
        \Pr \( Z_{t,i} > B \) 
        &\le \Pr \( \{ \text{random walk starting from $i$ does not terminate in $B$ steps} \} \) \nonumber \\
        &= \sum_{l=B}^\infty \Pr \( \{ \text{random walk starting from $i$ terminates at step $l$} \} \) \nonumber \\
        &= \sum_{l=B}^\infty \sum_{ j = 1 }^K \[ M^l \]_{ij}  \le \sum_{l=B}^\infty \rvert M \rvert_\infty^l \le \sum_{l=B}^\infty \rho^l \le \frac{ \rho^B }{1 - \rho}. \nonumber
    \end{align} 
    Thus with probability at least $ 1 - \frac{ \rho^B }{1 - \rho} $, we have $Z_{t,i} \le B$. We define
    \begin{align}
        \mathcal{E}_T (B) := \left\{ {Z}_{t,j} \le B \text{ for all } t = 1,2,\cdots, T, \text{ and } j \in [K] \right\}. \nonumber
    \end{align}
    By a union bound, $ \Pr \( \mathcal{E}_T (B) \) \ge 1 - \frac{ KT \rho^B }{ 1 - \rho } $. Now we can set $B = \advB $ so that $ \Pr \( \mathcal{E}_T (B)\) \ge 1 - \epsilon $. 
    
    The random variables $Z_{t,i}$ also has the memorylessness-type property: 
    \begin{align*}
    	& \; \E \[ Z_{t,i} \rvert \text{not } \mathcal{E}_T (B) \] \\
    	\le& \E \[ Z_{t,i} \rvert Z_{t,i} > B \] \le \sum_{l=B+1}^\infty l \frac{ \Pr \( \{ \text{$\P_{t,i}$ terminates at step $l$} \} \cap \{  Z_{t,i} > B \} \) }{ \Pr \( Z_{t,i} > B \) }  \\
    	=& \; 
    	\sum_{l=B+1}^\infty l \frac{ \sum_{j} \Pr \( \{ \text{$\P_{t,i}$ terminates at step $l$} \} \cap \{ \text{the $(B+1)$-th step is at $j$} \} \) }{ \sum_{j} \Pr \( \{ \text{the $(B+1)$-th step is at $j$} \} \) } \\
    	\le& \; 
    	\sum_{l=B+1}^\infty l \sum_j \Pr \( \{ \text{$\P_{t,i}$ terminates at step $l$} \} \big\rvert \{ \text{the $(B+1)$-th step is at $j$} \} \) \\
    	=& \; \sum_{j} \sum_{l=1}^\infty (l + B ) \Pr \( \{ \text{$\P_{t,j}$ terminates at step $l$} \} \) \\
    	=& \; \sum_j \E \[ Z_{t,j} + B\] \le KB + \sum_j \E \[ Z_{t,j} \]
    \end{align*}
    where we use Markov property on the second last line. 
    
    Since $ \E \[ Z_{t,j} \] \le \mathcal{O} \( \frac{1}{ \(1 - \rho \)^2 } \) $, we insert this into the above equation to get
    \begin{align*}
    	\E \[ Z_{t,i} \rvert \text{not } \mathcal{E}_T (B) \]  \le \mathcal{O} \( KB + \frac{K}{(1 - \rho)^2} \) .
    \end{align*}
    
    Similarly, we have 
    \begin{align*}
    	\E \[ Z_{t,i}^2 \rvert \text{not } \mathcal{E}_T (B) \] 
    	\le \mathcal{O} \( KB^2 + \frac{K}{(1-\rho)^3} \). 
    \end{align*}
\end{proof}

\begin{remark}
    By Lemmas \ref{lem:concen-q} and \ref{lem:e-false}, we know $ \wh{p}_{ti} = \wt{p}_{ti} + \Ot \( \frac{1}{t} \) $ and $ Z_{t,i} \le B $ ($\forall i \in [K], t \in [T]$) hold with high probability. Let $ \Sigma $ be the whole event space spanned by all possibilities of $T$ rounds of plays. From now on, we will restrict our attention to the event space $ \Sigma' = \{ e \in \Sigma : e \cap \mathcal{E}_T (B) \cap \mathcal{E}_T' \} $, and work in this event space unless otherwise noted. 
\end{remark}

\begin{proposition}
    \label{prop:quick-bound}
    Fix any $a \in (0,1]$. We have
    \begin{align}
        \frac{x}{ x + (1-a)x } \le x + \frac{1 - \sqrt{a}}{ 1 + \sqrt{a} } , \qquad \forall x \in (0,1). 
    \end{align}
\end{proposition}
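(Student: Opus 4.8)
The plan is elementary: clear the (strictly positive) denominator on the left and recognize that what remains is a perfect square. Write the left-hand side in the form $\frac{x}{a+(1-a)x}$ (the form in which it is used, with $x$ playing the role of $p_{tj}$ and $a$ that of $\alpha_j$ after bounding each $q_{ij}\ge\alpha_j$), and introduce the shorthand $c := \frac{1-\sqrt a}{1+\sqrt a}$, so the right-hand side is $x+c$. For $a\in(0,1]$ and $x\in(0,1)$ the denominator $a+(1-a)x$ is at least $a>0$, so the claim is equivalent to the polynomial inequality
\begin{align*}
g(x) := (x+c)\bigl(a+(1-a)x\bigr) - x \ \ge\ 0, \qquad x\in(0,1).
\end{align*}

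First I would expand $g$ and group it as a quadratic in $x$: the coefficient of $x^2$ is $1-a$, the constant term is $ca$, and the coefficient of $x$ collapses via $a+c-ca-1 = -(1-a)(1-c)$, giving
\begin{align*}
g(x) = (1-a)\bigl(x^2-(1-c)x\bigr) + ca .
\end{align*}
Second, I would invoke the single arithmetic identity $ca = (1-a)\,\tfrac{(1-c)^2}{4}$: writing $s=\sqrt a$ and $c=\frac{1-s}{1+s}$ one checks that, after dividing by $1-a=(1-s)(1+s)$, both sides reduce to $\frac{s^2}{(1+s)^2}$. Substituting this into the previous display completes the square:
\begin{align*}
g(x) = (1-a)\left(x-\frac{1-c}{2}\right)^2 = (1-a)\left(x-\frac{\sqrt a}{1+\sqrt a}\right)^2 \ \ge\ 0,
\end{align*}
using $a\le 1$. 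In the boundary case $a=1$ one has $c=0$ and $g\equiv 0$, matching the equality in the statement; in general equality holds exactly at $x=\frac{\sqrt a}{1+\sqrt a}$.

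There is essentially no obstacle here: once the substitution $c=\frac{1-\sqrt a}{1+\sqrt a}$ is made, the result is a one-line completion of the square, and the only thing to verify with care is the identity $ca=(1-a)(1-c)^2/4$. For orientation, the quantity being bounded is $\frac{p_{tj}}{\widetilde{p}_{tj}}$ with $\widetilde{p}_{tj}=p_{tj}+\sum_{i\neq j}q_{ij}p_{ti}\ge p_{tj}+\alpha_j(1-p_{tj})$ by Definition~\ref{def:centrality}; identifying $x=p_{tj}$ and $a=\alpha_j$ then recovers exactly the bound $\frac{p_{tj}}{\widetilde{p}_{tj}}\le p_{tj}+\frac{1-\sqrt{\alpha_j}}{1+\sqrt{\alpha_j}}$ used in the proof of Lemma~\ref{lem:simple-con}.
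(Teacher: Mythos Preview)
Your proof is correct and essentially parallel to the paper's: both arguments show that the difference $f_a(x)=\frac{x}{x+(1-x)a}-x$ is maximized at $x=\frac{\sqrt a}{1+\sqrt a}$ with value $\frac{1-\sqrt a}{1+\sqrt a}$. The paper does this by a first-derivative test, whereas you clear the denominator and complete the square; your algebraic route is slightly cleaner in that it certifies nonnegativity directly (and handles the boundary $a=1$ without a separate check), but the two arguments are equivalent in content.
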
 

\begin{proof} 
    If suffices to show, for any $a \in (0,1]$, the function $f_a (x) := \frac{x}{x + (1-x)a } - x $ is upper bounded by $ \frac{1 - \sqrt{a}}{ 1 + \sqrt{a} } $. This can be shown via a quick first-order test. At $ x_{\max} = \frac{\sqrt{a}}{1 + \sqrt{a}} $, the maximum of $f_a $ is achieved, and $ f_a (x_{\max}) = \frac{1 - \sqrt{a}}{ 1 + \sqrt{a} } $.  
\end{proof}

\subsection{Proof of Lemma \ref{lem:bound}}
    \label{app:bound}

	By law of total expectation, we have 
    \begin{align}
    	\E_{t-1} \[ Z_{t,i} \I_{ \[ i \in \P_{t,J_t} \] } \] 
    	=& \;  
    	\E_{t-1} \[ \E \[ Z_{t,i} \I_{ \[ i \in \P_{t,J_t} \] } \big\rvert \mathcal{F}_{t,i} \] \] \nonumber \\ 
    	=& \;  
    	\E_{t-1} \[ l_{t,i} \I_{ \[ i \in \P_{t,J_t} \] } \] = l_{t,i} \wt{p}_{ti}. 
        \nonumber 
    \end{align}
    
% 	Pick $\epsilon$ so that $ \epsilon \( 1 + K B + \frac{2 K}{B (1-\rho)^3 } \) \le \frac{3}{4} $ (this is doable since $ B = \mathcal{O} \( \log (1/\epsilon) \) $). 
	Also, we have 
    \begin{align}
    	&\E_{t-1} \[ \( \frac{  Z_{t,i} - B }{B} \)^2 \] \le 1 
    	\label{eq:sec-moment}
    \end{align}
    
    By Lemma \ref{lem:concen-q}, we know
    \begin{align} 
    	\frac{1}{ \wh{p}_{tj} } = \frac{1}{ \wt{p}_{tj} } \pm \Ot \( \frac{\log \( T/ \epsilon \)}{t} \), \quad \forall t \in [T],j \in [K] \label{eq:expand-1}
    \end{align} 
    
    We can use (\ref{eq:expand-1}) to get 
    \begin{align}
        & \E_{t-1} \[ \exp \( \frac{ \beta }{B} \( l_{t,i} - B \) -  \beta  \( \frac{ \frac{ {Z}_{t,i} - B }{ B } \I_{ \[ i \in \P_{t,J_{t}} \] } + \beta }{  p_{ti} + \sum_{j \neq i} \wh{q}_{t,ji} p_{tj}   } \) \)  \] \nonumber \\
        {=}& \E_{t-1} \hspace*{-2pt} \[ \hspace*{-1pt} \exp \hspace*{-2pt} \( \hspace*{-2pt} \frac{ \beta }{B} \( l_{t,i} \hspace*{-2pt} - \hspace*{-2pt} B\) - \hspace*{-2pt} \beta  \( \frac{ \frac{ {Z}_{t,i} - B }{ B } \I_{ \[ i \in \P_{t,J_{t}} \] } + \beta }{  \wt{p}_{ti}  } \)  \hspace*{-2pt} + \hspace*{-2pt} \Ot \hspace*{-2pt} \( \frac{\log \( T/ \epsilon \)}{t} \) \hspace*{-4pt} \) \hspace*{-4pt} \] \nonumber \\ 
        {=}& \exp \( - \frac{\beta^2 }{ \wt{p}_{ti} } \hspace*{-2pt} + \hspace*{-2pt} \Ot \( \frac{\log \( T/ \epsilon \)}{t} \) \) \cdot \nonumber \\
        &\qquad \qquad \E_{t-1} \[ \exp \( \frac{ \beta }{B} \( l_{t,i} - B\) -  \beta  \( \frac{ \frac{ {Z}_{t,i} - B }{ B } \I_{ \[ i \in \P_{t,J_{t}} \] } }{  \wt{p}_{ti}  } \)  \)  \] \label{eq:lem-hp-1} . 
     \end{align}

    %  Since $ -  \beta  \( \frac{ \frac{ {Z}_{t,i} - B }{ B } \I_{ \[ i \in \P_{t,J_{t}} \] } }{  \wt{p}_{ti}  } \) \le 1$, since $\wt{p}_{ti} \ge \alpha$ and $\beta \le \alpha$ (and $B \ge \frac{1}{1-\rho} \ge l_{t,i}$). 
     Since $ \exp(x) \le 1 + x + x^2 $ for $x \le 1$, we have 
     \begin{align}
     	& \E_{t-1} \[ \exp \( \frac{ \beta }{B} \( l_{t,i} - B\) -  \beta  \( \frac{ \frac{ {Z}_{t,i} - B }{ B } \I_{ \[ i \in \P_{t,J_{t}} \] } }{  \wt{p}_{ti}  } \) \) \] \nonumber \\
    %  	\le& 
    %  	 1 + \E_{t-1} \[  \frac{ \beta }{B} \( l_{t,i} - B \) -  \beta  \( \frac{ \frac{ {Z}_{t,i} - B }{ B } \I_{ \[ i \in \P_{t,J_{t}} \] } }{  \wt{p}_{ti}  } \) \] \nonumber \\ 
    %  		&+ 
    %  		 \E_{t-1} \[  \( \frac{ \beta }{B} \( l_{t,i} - B \) -  \beta  \( \frac{ \frac{ {Z}_{t,i} - B }{ B } \I_{ \[ i \in \P_{t,J_{t}} \] } }{  \wt{p}_{ti}  } \) \)^2 \] \nonumber \\
     	\le& 
     	 1 + \E_{t-1} \[  \frac{ \beta }{B} \( l_{t,i} - B \) -  \beta  \( \frac{ \frac{ {Z}_{t,i} - B }{ B } \I_{ \[ i \in \P_{t,J_{t}} \] } }{  \wt{p}_{ti}  } \)  \] \nonumber \\
     	 &\qquad + \E_{t-1} \[  \( \frac{ \beta }{B} \( l_{t,i} - B \) -  \beta  \( \frac{ \frac{ {Z}_{t,i} - B }{ B } \I_{ \[ i \in \P_{t,J_{t}} \] } }{  \wt{p}_{ti}  } \)  \)^2  \] . \label{eq:lem-hp-2} 
     \end{align} 
    %  where (\ref{eq:lem-hp-2}) uses that $ \E \[ A \rvert \mathcal{E}_T (B) \bigcap \( \cap_{t=1}^T \mathcal{E}_t' \) \] \le \frac{ \E \[ A \]  }{ \Pr \( \mathcal{E}_T (B) \bigcap \( \cap_{t=1}^T \mathcal{E}_t' \) \) } $ for any event $A$, and $\Pr \( \mathcal{E}_T (B) \bigcap \( \cap_{t=1}^T \mathcal{E}_t' \) \) \ge 1 - 3\epsilon$. 
     
     Since $ \E_{t-1} \[  \I_{ \[ i \in \P_{t,J_t} \] } \] = \wt{p}_{tj} $, $ \E_{t-1} \[ \( {Z_{t,i} - B} \) \I_{ \[ i \in \P_{t,J_t} \] } \] = \( l_{t,j} - B \) \wt{p}_{tj} $ and $ \E_{t-1} \[ \( \frac{Z_{t,i} - B}{B} \)^2 \I_{ \[ i \in \P_{t,J_t} \] } \] \le \wt{p}_{tj} $, we have 
     \begin{align}
     	&1 + \E_{t-1} \[  \frac{ \beta }{B} \( l_{t,i} - B \) -  \beta  \( \frac{ \frac{ {Z}_{t,i} - B }{ B } \I_{ \[ i \in \P_{t,J_{t}} \] } }{  \wt{p}_{ti}  } \)  \] \nonumber \\
     	&\qquad \qquad + \E_{t-1} \[  \( \frac{ \beta }{B} \( l_{t,i} - B \) -  \beta  \( \frac{ \frac{ {Z}_{t,i} - B }{ B } \I_{ \[ i \in \P_{t,J_{t}} \] } }{  \wt{p}_{ti}  } \)  \)^2  \] \nonumber \\
     	=&
     	1  - \frac{ \beta^2 }{B^2} \( l_{t,i} - B \)^2  + 
     		  \frac{ \beta^2 }{  \wt{p}_{ti}  } 
     	\le 1  + \frac{ \beta^2 }{  \wt{p}_{ti}  } 
        \le 
     		  \exp \( \frac{ \beta^2 }{  \wt{p}_{ti}  } \) \label{eq:lem-hp-3},  
    \end{align}
    where the last inequality uses $1 + x \le \exp (x)$.

	We can now combine (\ref{eq:lem-hp-1}), (\ref{eq:lem-hp-2}) and (\ref{eq:lem-hp-3}) and get
	\begin{align*}
		&\E_{t-1} \[ \exp \( \frac{ \beta }{B} \( l_{t,i} - B \) -  \beta  \( \frac{ \frac{ {Z}_{t,i} - B }{ B } \I_{ \[ i \in \P_{t,J_{t}} \] } + \beta }{  p_{ti} + \sum_{j \neq i} \wh{q}_{t,ji} p_{tj}   } \) \) \]  \\
		\le& 
		\exp \( \Ot \( \frac{\log \( T/ \epsilon \)}{t} \) \)
	\end{align*}

    Let $X = \frac{ \beta }{B} \sum_{t=1}^T \( l_{t,i} - B \) -  \beta \sum_{t=1}^T  \( \frac{ \frac{ {Z}_{t,i} - B }{ B } \I_{ \[ i \in \P_{t,J_{t}} \] } + \beta }{  p_{ti} + \sum_{j \neq i} \wh{q}_{t,ji} p_{tj}   } \) $ for simplicity. Note that 
    % \begin{align*} 
        $ X = \beta  \( \sum_{t=1}^T \frac{ l_{t,i} - B }{B} - \sum_{t=1}^T \wh{Z}_{t,i} \) $. 
    % \end{align*} 

    We combine (\ref{eq:lem-hp-1}), (\ref{eq:lem-hp-2}) and (\ref{eq:lem-hp-3}) and get
    \begin{align*}
    	\E \[ e^X  \] 
    	\le& 
    	\prod_{t=1}^T \exp \( \Ot \( \frac{\log \( T/ \epsilon \)}{t} \) \) 
    	= 
    	\Ot \(  T / \epsilon \)  . 
    \end{align*}

    By Markov inequality and the above results, we have 
    \begin{align*}  
        \Pr \(  \frac{X}{\beta} \ge \frac{\log \(  T/\epsilon^2 \) } {\beta} \) \le \Ot \(  \epsilon \)  , 
    \end{align*}
    which concludes the proof. 
    
% \subsection{A Useful Proposition} 

% \begin{proposition}
%     \label{prop:quick-bound}
%     Fix any $a \in (0,1]$. We have
%     \begin{align}
%         \frac{x}{ x + (1-a)x } \le x + \frac{1 - \sqrt{a}}{ 1 + \sqrt{a} } , \qquad \forall x \in (0,1). 
%     \end{align}
% \end{proposition} 

% \begin{proof} 
%     If suffices to show, for any $a \in (0,1]$, the function $f_a (x) := \frac{x}{x + (1-x)a } - x $ is upper bounded by $ \frac{1 - \sqrt{a}}{ 1 + \sqrt{a} } $. This can be shown via a quick first-order test. At $ x_{\max} = \frac{\sqrt{a}}{1 + \sqrt{a}} $, the maximum of $f_a $ is achieved, and $ f_a (x_{\max}) = \frac{1 - \sqrt{a}}{ 1 + \sqrt{a} } $.  
% \end{proof} 
    
\section{Additional Experimental Results} 

Algorithm \ref{alg:opt} is empirically studied here. In Figure \ref{fig:sto}, we plot both the regret and error of estimators of Algorithm \ref{alg:opt}. As a consequence of Lemma \ref{lem:event}, information about all node accumulates linearly, and the regret will stopped increasing after a certain point. Note that this does not contradict Theorem \ref{thm:lower} or Theorem \ref{thm:K-lower}. The reason is that this figure shows the regret and error of estimation for a given instance, whereas the lower bound theorems assert the existence of some instance (not the given instance) on which no algorithm can beat an $\Omega (\sqrt{T})$ lower bound. Since this some instance must satisfy Assumption \ref{assumption:transition}, we show that bandit problems with random walk feedback is not easier than their standard counterpart. The right subfigure in Figure \ref{fig:sto} shows that the estimation errors quickly drops to zero, which shows that the flat regret curve is a consequence of learning, not a consequence of luck. 

\begin{figure}
    \centering
    \includegraphics[width = 0.45\textwidth]{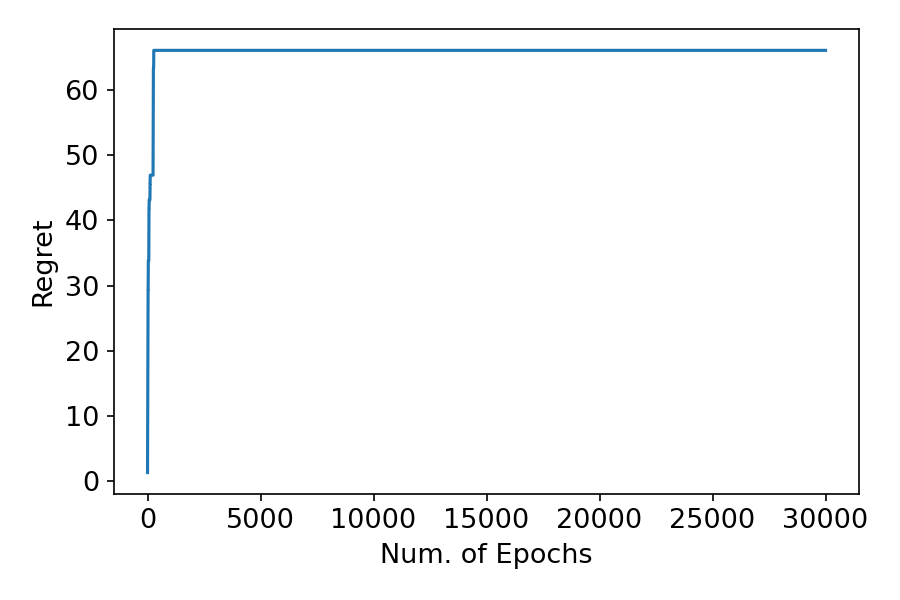} \quad  \includegraphics[width = 0.45\textwidth]{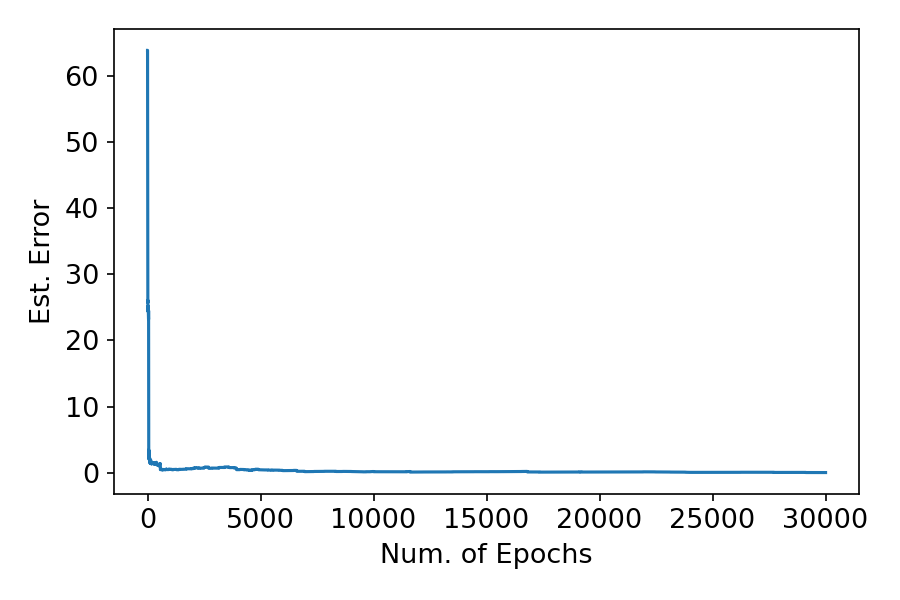} 
    \caption{Experimental results for Algorithm \ref{alg:opt}. \textit{Left:} The regret of Algorithm \ref{alg:opt} versus number of epochs. \textit{Right:} The estimation error of the hitting time estimators versus number of epochs. The solid line plot in both figures are averaged over 10 runs. \label{fig:sto}} 
    
\end{figure}

\end{document}